\pgfplotsset{compat = 1.3}
\newsavebox{\wrapfigbox}
               {
                 \begin{wrapfigure}{r}{#1}%
                   \begin{lrbox}{\wrapfigbox}%
                     \begin{minipage}{\dimexpr#1-2\fboxsep-2\fboxrule}%
                       \begin{center}%
                   } 
                  {%
                     \end{center}%
                     \end{minipage}%
                   \end{lrbox}%
                   \fbox{\usebox{\wrapfigbox}}
                 \end{wrapfigure}%
               }%
\newtheorem{theorem}{Theorem}
\newtheorem{lemma}[theorem]{Lemma}
\newtheorem{corollary}[theorem]{Corollary}
\newtheorem{remark}[theorem]{Remark}
\theoremstyle{definition}
\newtheorem{definition}[theorem]{Definition}
\newtheorem{example}{Example}
\definecolor{britishracinggreen}{rgb}{0.0, 0.26, 0.15}
\definecolor{brinkpink}{rgb}{0.98, 0.38, 0.5}
\DeclareFontFamily{U}{jkpmia}{}
\DeclareFontShape{U}{jkpmia}{m}{it}{<->s*jkpmia}{}
\DeclareFontShape{U}{jkpmia}{bx}{it}{<->s*jkpbmia}{}
\DeclareMathAlphabet{\mathfrak}{U}{jkpmia}{m}{it}
\SetMathAlphabet{\mathfrak}{bold}{U}{jkpmia}{bx}{it}
\newlength{\myParindent}
\newcommand{\N}{\mathbb{N}}
\newcommand{\R}{\mathbb{R}}
\newcommand{\vareps}{\varepsilon}
\newcommand{\what}[1]{{\widehat{#1}}}
\newcommand{\mfrak}[1]{\mathfrak{#1}}
\newcommand{\mcal}[1]{\mathcal{#1}}
\newcommand{\abs}[1]{{\left| {#1} \right|}}
\newcommand{\card}[1]{{\abs{#1}}}
\newcommand{\lorder}[1][]{%
  \ifthenelse{\equal{#1}{}}%
             {{\le_{\alpha}}}%
             {{\le_{{#1}}}}%
}
\newcommand{\ind}{{\mcal{I}}}
\newcommand{\half}{{\frac{1}{2}}}
\DeclareMathOperator{\E}{{\mathbb{E}}}
\DeclareMathOperator{\sep}{sep}
\newcommand{\pairs}[1]{{[#1]}}
\newcommand{\splitarr}{{\xmapsto{{\,\,\! _T \,\!}}}}
\newcommand{\U}{\mcal{U}}
\newcommand{\Ultra}{\mfrak{U}}
\newcommand{\refines}{\sqsubset}
\newcommand{\btrees}[1]{{\mfrak{B}\!\left({#1}\right)}}
\newcommand{\cut}{\theta}
\newcommand{\val}{{\mathrm{val}}}
\newcommand{\cost}{{\mathrm{cost}}}
\newcommand{\mcc}{{\mathrm{jmp}}}
\newcommand{\method}[1]{{\textproc{#1}}}
\newcommand{\MaxDiCut}{\method{MaxDiCut}}
\newcommand{\DirSparsestCut}{\method{DirectedSparsestCut}}
\newcommand{\DirDensestCut}{\method{DirectedDensestCut}}
\newcommand{\SparsestCut}{\method{SparsestCut}}
\newcommand{\bound}{$O \big(\log^{3/2} \!\!\, n \big)$}
\newcommand{\minDeg}{{\mathrm{d}_{\mathrm{min}}}}
\newcommand{\hatval}{{\what{\mathrm{va}}\mathrm{l}}}
\newcommand{\HCOP}{{\mcal{HC}^{<\mcal{CL}}_{30,\vareps}}}
\newcommand{\HCCL}{{\mcal{HC}^{\mcal{CL}}}}
\newcommand{\ari}{{\mathrm{ARI}}}
\newcommand{\oari}{{\mathrm{\bar o ARI}}}
\newcommand{\loops}{{\mathrm{loops}}}
\newcommand{\deft}[1]{\textbf{\boldmath{#1}}}
\newcommand{\triline}{{%
\arrow[dd,-,crossing over,shift left=0]%
\arrow[dd,-,shift left=1.5,crossing over]%
\arrow[dd,-,shift right=1.5,crossing over]}}%
\newcommand{\lessline}{{\arrow[rruu,->,crossing over]}}
\newenvironment{acknowledgement}{{\flushleft \bf Acnowledgements.}}{}
\definecolor{DarkViolet}{HTML}{250077}
\begin{document}

\title{An objective function for\\ order preserving hierarchical clustering}
\author{Daniel Bakkelund}
\date{\today}

\maketitle

\begin{abstract}
  We present a theory and an objective function for similarity-based hierarchical clustering
of probabilistic partial orders and directed acyclic graphs (DAGs). Specifically, given
elements $x \le y$ in the partial order, and their respective clusters $[x]$ and $[y]$,
the theory yields an order relation $\le'$ on the clusters such that $[x]\le'[y]$.
The theory provides a concise definition of order-preserving hierarchical clustering,
and offers a classification theorem identifying the order-preserving trees (dendrograms).
To determine the optimal order-preserving trees, we develop an objective function that
frames the problem as a bi-objective optimization, aiming to satisfy both the order
relation and the similarity measure. We prove that the optimal trees under the objective
are both order-preserving and exhibit high-quality hierarchical clustering.
Since finding
an optimal solution is NP-hard, we introduce a polynomial-time approximation algorithm
and demonstrate that the method outperforms existing methods for order-preserving
hierarchical clustering by a significant margin.

\end{abstract}


\section{Introduction}
\label{section:intro}
Clustering is one of the oldest and most popular techniques for exploratory data analysis and classification,
and methods for hierarchical clustering dates back almost a century. While clustering methods
for graphs and other types of structured data are common, it is also common that the
structure is lost during the clustering process; the clustering does not
retain the structure of the original data. Alternatively, the methods that do retain the structure
of the data are not easily combinable with a general notion of similarity to influence the
quality of the individual clusters.
In this article, we show how to do order preserving hierarchical clustering
for partially ordered data that is equipped with a measure of similarity between elements.
The goal is to produce hierarchical clusterings where the clusters
themselves are ordered to retain the original ordering of the data, while simultaneously
producing high quality clusters.

The objective function we present is an extension of the objective function for similarity based
hierarchical clustering by \citet{Dasgupta2016}, where we introduce an additional component
representing the \emph{level of comparability} according to the partial order.
This allows us to optimise on similarity and order preservation simultaneously, obtaining a hierarchical clustering
that balances the two objectives.

While several works exist that perform order preserving clustering, and although some of these
are also hierarchical in nature \citep{HerrmannEtAl2019}, there exists, to the authors' knowledge,
no attempts to provide a general definition of the concept.
As foundation for the presented objective, we therefore suggest a formal definition of order preserving
hierarchical clustering motivated by order theory and classical hierarchical clustering.

Since optimisation turns out NP-hard, we provide a polynomial time approximation algorithm
with a relative performance guarantee of \bound. We close the paper with a demonstration of the efficacy
of the approximation on the benchmark dataset \citep{BakkelundMachineParts}, showing that the
provided theory outperforms other methods with a large margin.

\medskip

{\flushleft The main contributions from this paper are:}

\begin{itemize}
\item A formal theory for order preserving hierarchical clustering.
\item An objective function for order preserving hierarchical clustering of partially ordered sets
  equipped with a similarity or dissimilarity.
\item A polynomial time algorithm approximating the objective function, with a relative performance guarantee
  of \bound.
\end{itemize}

\subsection{Motivating use case}
\label{section:motivation}
The theory described in this paper is inspired by an industry database of machine parts.
In the database, the machine parts are registered in \emph{part-of} relations, so that a part has a
reference to its containing part, according to the design.
The result is a family of \emph{part trees} where the nodes in
the trees are parts that consist of parts that consist of parts and so on.

Over time, some designs have been copied with small alterations, and due to business reasons,
all the parts of the new structure have been given new identifiers, with no reference to
where from it was copied.
In hindsight, it is desirable to discern which parts are equivalent to which other parts,
to allow for, for example, spare part interchange across designs.
We therefore seek a classification of part types into classes of interchangeable types.

Such a classification has to take into account the part-of relations:
As an example, assume that $a$, $b$, $c$ and $d$ are machine parts, and that
$a$ is a part of $b$ and $c$ is a part of $d$, as illustrated in Figure~\ref{fig:motivation}$.1$.
If we classify $a$ and $c$ as interchangeable (Figure~\ref{fig:motivation}$.2$),
then we also say that $a$ can be a part of $d$
and $c$ can be a part of $b$. It also implies that $a$ and $d$ can not be interchanged,
since you cannot replace a part by a sub-part.
Figure~\ref{fig:motivation} presents a selection of classifications of
these four elements, together with possible interpretations of what the classifications mean in the
domain of part-of relations.

\begin{figure}[htpb]
  \begin{center}
    \begin{tabular}{m{.12\textwidth}m{.3\textwidth}|m{.1\textwidth}m{.35\textwidth}}
      \begin{tikzpicture}[yscale=0.6]
        \node[draw,circle,minimum size=0.4cm,inner sep=0cm] (a) at (0,1) {$a$};
        \node[draw,circle,minimum size=0.4cm,inner sep=0cm] (b) at (1,1) {$b$};
        \node[draw,circle,minimum size=0.4cm,inner sep=0cm] (c) at (0,0) {$c$};
        \node[draw,circle,minimum size=0.4cm,inner sep=0cm] (d) at (1,0) {$d$};
        \draw[->] (a) -- (b);
        \draw[->] (c) -- (d);
      \end{tikzpicture}
      &
      $1.$ The original data. $a$ is a part of $b$ and $c$ is a part of $d$.
      &
      \begin{tikzpicture}[yscale=0.6]
        \draw[inner sep=0cm] (0,0.5) ellipse (.2cm and .6cm) {};
        \node (ea) at (0.1,0.65) {};
        \node (ec) at (0.1,0.35) {};
        \node (a) at (0,0.75) {$a$};
        \node (c) at (0,0.25) {$c$};
        \node[draw,circle,minimum size=0.4cm,inner sep=0cm] (b) at (1,1) {$b$};
        \node[draw,circle,minimum size=0.4cm,inner sep=0cm] (d) at (1,0) {$d$};
        \draw[->] (ea) -- (b);
        \draw[->] (ec) -- (d);
      \end{tikzpicture}
      &
      $2.$ $a$ and $c$ are equivalent, and both can be sub-parts of both $b$ and $d$, allowing
      for spare part interchange.
      \\
      \hline
      \begin{tikzpicture}[yscale=0.6]
        \draw[inner sep=0cm] (1,0.5) ellipse (.2cm and .6cm) {};
        \node (eb) at (0.9,0.65) {};
        \node (ed) at (0.9,0.35) {};
        \node (b) at (1,0.75) {$b$};
        \node (d) at (1,0.25) {$d$};
        \node[draw,circle,minimum size=0.4cm,inner sep=0cm] (a) at (0,1) {$a$};
        \node[draw,circle,minimum size=0.4cm,inner sep=0cm] (c) at (0,0) {$c$};
        \draw[->] (a) -- (eb);
        \draw[->] (c) -- (ed);
      \end{tikzpicture}
      &
      $3.$ $b$ and $d$ are equivalent, and both contain $a$ and $c$ as sub-parts.
      &
      \begin{tikzpicture}[yscale=0.6]
        \draw[inner sep=0cm] (0,0.5) ellipse (.2cm and .6cm) {};
        \node (a) at (0,0.75) {$a$};
        \node (c) at (0,0.25) {$c$};
        \draw[inner sep=0cm] (1,0.5) ellipse (.2cm and .6cm) {};
        \node (b) at (1,0.75) {$b$};
        \node (d) at (1,0.25) {$d$};
        \draw[->] (0.25,0.5) -- (0.78,0.5);
      \end{tikzpicture}
      &
      $4.$ $a$ and $c$ are equivalent, and $b$ and $d$ are equivalent. It is likely that
      one pair is a copy of the other.
      \\
      \hline
      \begin{tikzpicture}[yscale=0.6]
        \node[draw,circle,minimum size=0.4cm,inner sep=0cm] (a) at (0,0) {$a$};
        \node[draw,circle,minimum size=0.4cm,inner sep=0cm] (d) at (1.5,0) {$d$};
        \draw[inner sep=0cm] (.75,0) ellipse (.2cm and .6cm) {};
        \node (b) at (.75, 0.25) {$b$};
        \node (c) at (.75,-0.25) {$c$};
        \node (lhs) at (0.65,0) {};
        \node (rhs) at (.85,0) {};
        \draw[->] (a) -- (lhs);
        \draw[->] (rhs) -- (d);
      \end{tikzpicture}
      &
      $5.$ $b$ and $c$ are equivalent, and both $a$ and $b$ are sub-parts of $d$.
      $b$ and $c$ can be interchanged as spare parts in $d$.
      &
      \begin{tikzpicture}[yscale=0.6]
        \draw[inner sep=0cm] (0,0.5) ellipse (.2cm and .6cm) {};
        \node (a) at (0,0.75) {$a$};
        \node (d) at (0,0.25) {$d$};
        \draw[inner sep=0cm] (1,0.5) ellipse (.2cm and .6cm) {};
        \node (b) at (1,0.75) {$b$};
        \node (c) at (1,0.25) {$c$};
        \draw[->] (0.25,0.7) to [out=30,in=150] (0.78,0.7);
        \draw[<-] (0.25,0.3) to [out=-30,in=-150] (0.78,0.3);
      \end{tikzpicture}
      &
      $6.$ Not a valid clustering since the induced part-of relations are cyclic.
    \end{tabular}
    \caption{A selection of possible ordered clusterings of the set $\{a,b,c,d\}$.
      Possible interpretations in terms of the motivating use case are given together with the
      clusterings. All but $6)$ are examples of order preserving clusterings.
      In $6)$, the part-of relations constitute a cycle, implying that the parts are proper sub-parts of
      themselves, which is a contradiction.}
    \label{fig:motivation}
  \end{center}
\end{figure}

While each piece of machinery constitutes a tree of machine parts, some part types are used across
different types of machinery. As a result, the structure of machine part types and part-of relations make up
a directed acyclic graph, or equivalently, a partially ordered set.

In the sought after classification, a cluster should consist of interchangeable parts. But equally important,
the clusters should be ordered so that all the parts in the ``lesser'' cluster can be used
as a sub-part of any element of a ``greater'' cluster. This is depicted in
cases~$2.$ and~$4.$ of Figure~\ref{fig:motivation}.

This brings us to order preserving hierarchical clustering---a method that
provides us with a hierarchical clustering that preserves the partial order.
As a result, the corresponding ultrametric on the set of parts
reflects both the similarities of the parts, as well as the part-of relations.
And indeed, as we show in Section~\ref{section:ultrametric-order}, the more elements that are
strictly between two elements in the partial order, the larger the ultrametric distance between
the elements in an order preserving hierarchical clustering. This makes perfect sense
in the world of machine parts: if one part is deeply buried inside another part, such as a tire valve
on a bicycle, there is little chance for the parts to be interchangeable.

While flat clustering provides partitions that can be considered as classifications
of the data, hierarchical clustering provides tree structures over the data, where the
leaves are the original data points, and the bifurcations in the trees are where elements are joined
into successively larger clusters in the hierarchy.
In the context of machine learning, a significant benefit of these trees is that they
correspond to ultrametrics \citep{JardineSibson1971}.
This means that in addition to classification, hierarchical clustering yields a distance function on the data.
In the above use case, we are looking for items similar to a given item $x_0$. The ultrametric is ideal for
this purpose, allowing the user to explore items contained in successively larger neighbourhoods about $x_0$
until a suitable element is found.

\subsection{Related work} \label{section:related-work}
An earlier work on order preserving hierarchical clustering can be found in \citep{Bakkelund2021},
providing a means for clustering strictly partially ordered data. The work in this paper provides
significant generalisations of several key concepts:
\begin{itemize}
\item Since \citep{Bakkelund2021} requires the hard constraint $a<b \Rightarrow [a]<[b]$, it follows
  that comparable elements can never be clustered together, regardless of their similarity.
  Consider, for example, the sections of a book ordered by order of appearance.
  An order preserving hierarchical clustering
  will place similar sections together, while at the same time maintain the ordering of the sections. Hence,
  the order preserving hierarchical clustering can be seen to produce a table of contents for the book.
  When the constraint $a<b \Rightarrow [a]<[b]$ is applied, none of the sections can be clustered together,
  since they are all comparable. Hence, the only hierarchical clustering produced by \citep{Bakkelund2021}
  for linear orders is the singleton clustering.
  In this paper, we treat the order relation as non-strict, alleviating this problem.
\item A hard, binary constraint for order preservation, such as that of \citep{Bakkelund2021}, means that any data where
  the relation has cycles, for example due to errors in the data, cannot be clustered.
  In this paper, the order relation is considered probabilistic, and is weighted against the similarity measure.
  This allows us to cluster also data where the order relation is not a perfect partial order, providing a
  significant increase in availability and robustness in applications.
\item Finally, the theory in \citep{Bakkelund2021} only covers order preserving \emph{agglomerative} hierarchical clustering.
  This paper provides a general theory for order preserving hierarchical clustering that is independent
  of methods or algorithms, and covers all models.
\end{itemize}

The objective function presented in this paper extends on the work by \citet{Dasgupta2016}.
The model set forth by Dasgupta has been the subject of several publications, whereof
\citet{RoyPokutta2017}, \citet{MoseleyWang2017},
\citet{ChatziafratisEtAl2018},
\citet{CohenAddadEtAl2019}
are of particular interest with regards to this exposition.
Similarly to the method we present in this paper, the method by
\citet{ChatziafratisEtAl2018}
extends on Dasgupta's model in order to incorporate additional constraints from the domain.
Their method can be sorted in the category of
\emph{clustering with constraints} \citep{BasuDavidsonWagstaff2008,DavidsonRavi2005},
where the constraints can be seen as a particular type of structure imposed on the data. As such,
this is indeed a variation of what we could call \emph{structure preserving hierarchical clustering}.
However, the type of structural constraints do not cover order relations. And, while the constraints
in clustering with constraints are provided, for example, by domain experts in order to adjust the clustering,
the order relations used in order preserving hierarchical clustering emanate directly from the data.
It is the data itself that is ordered, and it is this order relation we wish to preserve.

\citet{CarlssonEtAl2014} present a hierarchical clustering
regime called \emph{quasi clustering} for asymmetric networks.
An asymmetric network is the same as a weighted directed graph, and is covered by what we define as
\emph{relaxed order relations} in this paper. We can therefore say that, at a high level, the
methods operate on similar data. However, the method presented by Carlsson et al.\ is almost
opposite of what we define as order preserving: objects are clustered together if there is significant
``flow'' between the elements. In order preserving clustering, this is exactly when elements are \emph{not}
placed together.

A field that does order preserving clustering is that of \emph{acyclic partitioning of graphs}.
Given a directed acyclic graph, the goal is to partition the vertices so that the graph that arises
when you preserve the edges going between partitions,
is also a directed acyclic graph; for an example, see the article
by \citet{NossackPesch2014branch}.
This is a topic that has been studied intensively with regards to applications ranging from
railway planning to parallel processing, compiler theory and VLSI. Hierarchical methods have also
been developed of late. While some of the new developments are of a more general
nature \citep{HerrmannEtAl2017,HerrmannEtAl2019}, a lot of the work focuses on satisfying domain
specific constraints and objectives. However, the methods do not easily combine with a similarity or
dissimilarity to support more classical notions of clustering.

Several works have been published on \emph{clustering of ordered data}.
We include two classes of methods in this group. The first is that of \emph{comparison based clustering},
where the degree of similarity of elements is derived from an order relation. An example is using
pairwise comparisons of elements done by users for preference ranking. See the quite recent article
by \citet{GhoshdastidarPerrotLuxburg2019} for more examples
and references. In this category, we also find the works of \citep{JanowitzBook2010},
providing a wholly order theoretic approach to hierarchical clustering, including the case where the
dissimilarity is replaced by a partially ordered set.
The other class is that of partitioning a family of ordered sets, so that sets that are similar to
each other are co-located in clusters. One example is the work by
\citep{KamishimaFujiki2003},
presenting a method for clustering sets containing preference data. Another example of methods in this
category is that of clustering of families of time series, such as the model described by
\citet{Luczak2016}, producing clusters consisting of similar times series.

\medskip

Whereas all the mentioned work touch upon one or more concepts involved in order preserving
hierarchical clustering, none of the methods offer a means to provide hierarchical clusterings where
the order relation and the similarity are combined, and where one seeks to find a hierarchical clustering
that attempts to satisfy both.

\subsection{Layout of the article}
Section~\ref{section:background} recalls the required background of graph theory, order relations
and hierarchical clustering, as well as recalling Dasgupta's cost function. Section~\ref{section:op}
gives a formal definition of order preserving hierarchical clustering, and precisely
identifies the binary trees over a set that correspond to order preserving hierarchical clusterings.
In Section~\ref{section:value-function}, we introduce our objective function.
Section~\ref{section:g} provides an investigation of the properties of the proposed model, looking
only at the effect of the order relation, keeping the similarity out of the equation.
The combined value function is the topic of Section~\ref{section:f}, where we view the clustering
problem in terms of bi-objective optimisation. Section~\ref{section:approx}
describes the polynomial time approximation algorithm, while Section~\ref{section:demo} provides
a demonstration of the efficacy of the method on data from the machine parts database described in
Section~\ref{section:motivation}.
Section~\ref{section:summary} provides a
concise summary and presents a short list of future research topics.

\subsection{Background}
\label{section:background}

A \deft{graph} is a pair $G=(V,E)$ of vertices and edges. Unless otherwise is specified, we assume that
all graphs are directed, and we refer to directed edges as arcs.
The graph $G$ is \deft{transitive} if, for every pair of elements $a,b \in V$ for which there is a path
from $a$ to $b$, there is an edge $(a,b) \in E$.
A \deft{graph isomorphism} $\theta : G \to G'$ for graphs $G=(V,E)$ and $G'=(V',E')$ is a bijection
$\theta : V \to V'$ for which $(x,y) \in E \Leftrightarrow (\theta(x),\theta(y)) \in E'$.

\medskip

For a set $X$, we write $A \subset X$ if $A$ is a subset of $X$, proper or not.
A \deft{binary relation} on a set $X$ is a subset $E \subset X \times X$.
A \deft{partial order} on a set $X$ is a binary relation $E$ on $X$
that is reflexive, antisymmetric and transitive, and in that case
we call the pair $(X,E)$ a \deft{partially ordered set}.
We usually denote the partially order by $\le$,
writing $x \le y$ for $(x,y) \in E$, and $x < y$ to mean $x \le y \land x \ne y$.
We write $\ind_\le$ to represent the indicator function of the partial order $\le$,
in which case we have $\ind_\le((x,y))=1$ if and only of $x \le y$, and zero otherwise.
For subsets $A,B \subset X$, we use the notation $\ind_\le(A,B)$ to denote the magnitude
\[
\ind_\le(A,B) = \sum_{(a,b) \in A \times B} \ind_\le((a,b)).
\]
We refer to two elements $x,y \in X$ as \deft{comparable} if either $x \le y$ or $y \le x$, and
a pair of elements that are not comparable are called \deft{incomparable}.
A \deft{linear order} is a partial order in which any two elements are comparable.
A \deft{chain} in a partially ordered set $(X,\le)$ is a subset
$Y \subset X$ that is linearly ordered with respect to $\le$.
Given two partial orders $\le,\le'$ on $X$, we say that $\le'$ is an \deft{extension} of $\le$ if
$x \le y \Rightarrow x \le' y$.
If in addition $\le'$ is a linear order, we call $\le'$ a \deft{linear extension} of $\le$.
For two order relations where one is an extension of the other, we say that the orders are \deft{compatible}.
The \deft{transitive closure} of a binary relation $E$ on $X$ is the smallest transitive relation
on $X$ that extends $E$.

A map $h : (X,\le) \to (Y,\le')$ between partially ordered sets is said to be \deft{order preserving}
if $x \le y \Rightarrow h(x) \le' h(y)$. Notice that a composition of order preserving maps is also
order preserving. See \citep{Shcroder2003book} for a general introduction to order theory.

\medskip

A (flat) \deft{clustering} of a set $X$ is a partition $\mcal{C}=\{C_i\}_{i=1}^k$
of $X$ into disjoint subsets.
We do not consider overlapping clusters; every clustering is a partition and corresponds to an
equivalence relation~$\sim$ on $X$. A \deft{cluster} is a block in the partition,
equivalently, an equivalence class under the equivalence relation. We employ the usual bracket
notation for equivalence classes based on representatives, writing $[x]_\mcal{C}$ for the cluster
of $x$ in the clustering $\mcal{C}$, possibly leaving out the subscript if there is no risk of ambiguity.
For every clustering $\mcal{C}$ of $X$, the unique map $q : X \to \mcal{C}$ defined by
$q(x) = [x]_\mcal{C}$, sending an element to its cluster,
is called the \deft{quotient map} of $\mcal{C}$.

\medskip

\newcommand{\tfgw}{.4\textwidth}
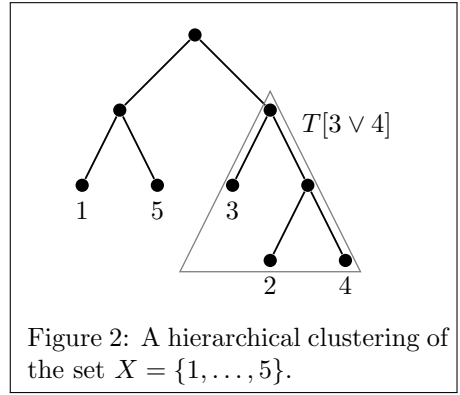
\begin{wrapfigure}{r}{\tfgw}
  \fbox{
    \begin{minipage}{\dimexpr\tfgw-2\fboxsep-2\fboxrule}
      \begin{center}
        \begin{tikzpicture}[
            xscale=0.5
          ]
          \node (spanTop) at (4,4.2) {};
          \node[circle,minimum size=5pt,fill=black,inner sep=0pt] (n1)  at (4,4) {};
          \node[circle,minimum size=5pt,fill=black,inner sep=0pt] (n21) at (2,3) {};
          \node[circle,minimum size=5pt,fill=black,inner sep=0pt] (n22) at (6,3) {};
          \node[circle,minimum size=5pt,fill=black,inner sep=0pt,label=below:{1}] (n31) at (1,2) {};
          \node[circle,minimum size=5pt,fill=black,inner sep=0pt,label=below:{5}] (n32) at (3,2) {};
          \node[circle,minimum size=5pt,fill=black,inner sep=0pt,label=below:{3}] (n33) at (5,2) {};
          \node[circle,minimum size=5pt,fill=black,inner sep=0pt] (n34) at (7,2) {};
          \node[circle,minimum size=5pt,fill=black,inner sep=0pt,label=below:{2}] (n41) at (6,1) {};
          \node[circle,minimum size=5pt,fill=black,inner sep=0pt,label=below:{4}] (n42) at (8,1) {};
          \draw[-,black,line width=.7pt] (n1) -- (n21) -- (n31);
          \draw[-,black,line width=.7pt] (n21) -- (n32);
          \draw[-,black,line width=.7pt] (n1) -- (n22) -- (n34) -- (n42);
          \draw[-,black,line width=.7pt] (n22) -- (n33);
          \draw[-,black,line width=.7pt] (n34) -- (n41);
          \draw[-,gray,line width=.5pt] (6,3.25) -- (8.4,0.85) -- (3.6,0.85) -- cycle;
          \node[anchor=south west] (join) at (6.6,2.5) {$T[3 \lor 4]$};
        \end{tikzpicture}
        \vspace{-.2cm}
        \caption{A hierarchical clustering of the set $X=\{1,\ldots,5\}$.}
        \label{fig:tree}
      \end{center}
  \end{minipage}}
\end{wrapfigure}%

Starting with a non-empty, finite set $X$, a \deft{split} of $X$ is a partition
$(A,B)$ of $X$ into two non-empty, disjoint subsets.
Each of these sets may again be split, giving
rise to a hierarchical decomposition of $X$ that can be drawn as a tree, where
every node has its split components as children until no more splits can be made.
This process produces a binary tree that has the set $X$ at the root, and the \deft{singleton sets}
$\{\{x\} \, | \, x \in X\}$ as leaf nodes, and defines a \deft{hierarchical clustering} of $X$.

We consider every split $(A,B)$ to be \emph{oriented}, meaning that the split $(A,B)$ is
considered to be different
from the split $(B,A)$. We refer to this concept as \deft{oriented splits},
and the trees as \deft{oriented trees}. If a node in a tree
splits into $(A,B)$, we refer to $A$ as the \deft{left child}, and $B$ as the \deft{right child} of the
parent node.
We denote the set of all oriented binary trees over $X$ that can be generated according to
the above procedure by $\btrees{X}$. And, for a tree $T \in \btrees{X}$, if $S$ is a node in $T$ that
splits into $(A,B)$, we denote this by writing $S \splitarr (A,B)$.

Notice that for $T \in \btrees{X}$, every node $S$ in $T$ is a subset of $X$. Let $x \lor y$
denote the smallest node in $T$, considered as a subset of $X$,
containing both $x$ and $y$, and let $T[x \lor y]$ be the
subtree of $T$ rooted at $x \lor y$.

In particular, $T[x \lor y]$ is the smallest subtree rooted
at an internal node containing both $\{x\}$ and $\{y\}$ as leafs.
Finally, we write $\card{T[x \lor y]}$ to denote \deft{the number of leaf nodes} of $T[x \lor y]$,
noting that this number coincides with the cardinality of the root node of $T[x \lor y]$,
namely $\card{x \lor y}$.
An example of a hierarchical clustering is depicted in Figure~\ref{fig:tree}.

\medskip

An \deft{ultrametric} on X is a metric $d$ on $X$ for which
\[
d(x,z) \le \max\!\left\{d(x,y),d(y,z)\right\} \quad \forall x,y,z \in X.
\]
We refer to the above equation as \deft{the ultrametric inequality}, and note that
it implies the triangle inequality. The set of all ultrametrics over $X$ is denoted~$\U(X)$.

Given a binary tree $T \in \btrees{X}$, \citet{RoyPokutta2017} show that the map
$\Ultra_X : \btrees{X} \to \U(X)$ given by
\begin{equation} \label{eqn:Ultra}
  \Ultra_X(T)(x,y) \ = \ \card{T[x \lor y]} - 1
\end{equation}
is an embedding for non-ordered trees. That is, every non-ordered binary tree over $X$ maps
to a unique ultrametric over $X$ via $\Ultra_X$.
Unless there is any chance of ambiguity, we denote the ultrametric $\Ultra_X(T)$
by $u_T$.

Given an ultrametric $u$ on $X$, for every $t \in \R_+$, the relation $\sim_t$ given by
$x \sim_t y \ \Leftrightarrow \ u(x,y) \le t$ is an equivalence relation on $X$,
and provides us with a flat clustering of $X$. (See \citet{CarlssonMemoli2010} or \citet{JardineSibson1971}
for details.)
When we refer to the clustering of an equivalence relation $\sim_t$ in the context of a
tree $T \in \btrees{X}$, unless otherwise is stated, this is always in relation to the ultrametric $u_T$.
In particular, for $T \in \btrees{X}$, \deft{the flat clusterings provided by $T$} are the
clusterings defined by the equivalence relations \hbox{$\{ \sim_t | \, t \in \R_+ \}$}. For example,
considering the tree in Figure~\ref{fig:tree} and the ultrametric $u_T$, the corresponding flat
clusterings are presented in Table~\ref{tab:flat-clusterings}.

\begin{table}[htpb]
  \begin{center}
    \begin{tabular}{l|l}
      Clustering & Equivalence relations \\
      \hline
      $\{1\},\{2\},\{3\},\{4\},\{5\}$ & $\{\sim_t |\, t \in [0,1) \}$ \\
        $\{1,5\},\{2,4\},\{3\}$ & $\{\sim_t |\, t \in [1,2) \}$ \\
          $\{1,5\},\{2,3,4\}$ & $\{\sim_t |\, t \in [2,4) \}$ \\
            $\{1,2,3,4,5\}$ & $\{\sim_t |\, t \in [4,\infty) \}$
    \end{tabular}
    \caption{The flat clusterings of the tree in Figure~\ref{fig:tree}. The clusters are
      listed in the left hand side column, and the sets of equivalence relations generating the clusterings
      are presented on the right.}
    \label{tab:flat-clusterings}
  \end{center}
\end{table}

\subsubsection{The Dasgupta cost model}

A \deft{similarity} on a set $X$ is a symmetric function $s : X \times X \to [0,1]$.
Given a similarity $s$ on $X$, the \deft{Dasgupta cost function} is the
function $\cost_s : \btrees{X} \to \R_+$ defined as
\begin{equation} \label{eqn:Dasgupta-cost}
\cost_s(T) \ = \ \sum_{\{x,y\}} \card{T[x \lor y]} s(x,y),
\end{equation}
where the sum is over all distinct pairs of elements of $X$.
The optimisation problem is to find a tree $T \in \btrees{X}$ minimising~\eqref{eqn:Dasgupta-cost}.
For a list of desirable properties of optimal trees under this model, please consult
the references given in the related work section.

Dasgupta defines similarity measures to have codomain all of $\R_+$,
but since minimising~\eqref{eqn:Dasgupta-cost} is invariant with respect to positive scaling of $s$,
and since $X$ is finite, the above definition implies no loss of generality.

\paragraph*{Dual formulation.}
The dual formulation of Dasgupta's cost function allows us to solve the optimisation
by maximisation. The motivation for introducing the dual is that when we introduce
order relations, it is easier to discuss the optimisation problem in the context of maximisation.

Define the dual of $s$ to be the function $s_d : X \times X \to [0,1]$ given by
\begin{equation} \label{eqn:sd}
s_d(x,y) \ = \ 1 - s(x,y),
\end{equation}
and define the \deft{value function} $\val_{s_d} : \btrees{X} \to \R_+$ by
\[
\val_{s_d} \ = \ \sum_{\{x,y\}} \card{T[x \lor y]} s_d(x,y).
\]
As of \citep[\S 4.1]{Dasgupta2016} any binary tree over $X$ maximising $\val_{s_d}$
is a tree that minimises~$\cost_s$.

\section{Order preserving hierarchical clustering} \label{section:op}

The contributions from this section are two-fold:
first, we provide a formal definition of order preserving hierarchical clustering.
Second, we define a class of binary trees called \emph{order preserving trees},
and prove that these trees are exactly the oriented binary trees over $X$ that correspond
to order preserving hierarchical clusterings.

Towards the end of the section, we present a result describing how the partial order is reflected in the
ultrametric $u_T$ of an order preserving tree $T$; that the more elements there are
between two elements in the partial order, the more different they are under the ultrametric.

\subsection{Formal definition}

We start by recalling the order theoretical notion of an order preserving flat clustering, and
then provide an extended definition that includes hierarchical clustering.

\medskip

The next two definitions and following theorem can be found in \citep[\S 3.1]{Blyth2005}. We recall them here
for completeness, and also to rephrase the concepts in the context of clustering.
We start by defining what it means for a flat clustering to be order preserving.

\begin{definition} \label{def:regular}
  Let $(X,\le)$ be a partially ordered set, and let $\mcal{C}$ be a clustering of $X$.
  We say that the clustering $\mcal{C}$ is \deft{order preserving (with respect to $\le$)}
  if there exists a partial order $\le'$ on $\mcal{C}$ so that
  \[
  x \le y \ \Rightarrow \ [x]_\mcal{C} \le' [y]_\mcal{C}.
  \]
\end{definition}

The next definition and theorem classify exactly the order preserving clusterings for
a partially ordered set.
\begin{definition} \label{def:induced-relation}
  Let $(X,\le)$ be an ordered set, and let $\mcal{C}=\{C_i\}_{i=1}^k$ be a clustering of $X$.
  Let $E$ be the binary relation on $\mcal{C}$ satisfying
  \[
  (C_i,C_j) \in E \ \Leftrightarrow \
  \exists x,y \in X \, : \, x \le y \ \land \ x \in C_i \ \land \ y \in C_j.
  \]
  We define \deft{the induced relation on $\mcal{C}$}, denoted $\le'$,
  to be the transitive closure of $E$.
\end{definition}

An instructive illustration of what the induced relation looks like,
is that of a \deft{$\mcal{C}$-fence}~\citep{Blyth2005}, or just fence, for short:
\begin{equation} \label{eqn:fence}
  \begin{tikzcd}
    b_1 \triline & & b_2 \triline      &                      & b_{n-1} \triline  & & b_n \triline \\
    & &                   & \cdots \arrow[ru,->] &                   & &              \\
    a_1 \lessline & & a_2 \arrow[ru,-] &                      & a_{n-1} \lessline & & a_n
  \end{tikzcd}
\end{equation}
Triple lines indicate elements in the same cluster, and the arrows represent comparability
in $(X,\le)$. The fence allows traversal from~$b_1$ to~$a_n$ along arrows and through clusters,
in which case we say that the fence \deft{links}~$b_1$ to~$a_n$.
The induced relation $\le'$ has the property that $x \le' y$
if and only if there exists a $\mcal{C}$-fence linking $x$ to $y$.

\begin{theorem}[{\citep[Thm.3.1]{Blyth2005}}] \label{thm:regular}
  Let $(X,\le)$ be an ordered set, and let $\mcal{C}$ be a clustering of~$X$.
  Then the following two statements are equivalent:
  \begin{enumerate}
  \item The induced relation $\le'$ is a partial order on $\mcal{C}$.
  \item The quotient map $p:(X,\le) \to (\mcal{C},\le')$ is order preserving. \label{item:op-quotient}
  \end{enumerate}
\end{theorem}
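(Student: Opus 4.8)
The plan is to observe that the induced relation $\le'$ is, by its very definition as a transitive closure, automatically a preorder, so that the entire content of the equivalence collapses onto the single question of antisymmetry. Concretely, I would first record two facts that hold for \emph{every} clustering $\mcal{C}$, under no hypothesis at all. First, $\le'$ is reflexive: since $\le$ is reflexive and each cluster $C_i$ is non-empty, choosing any $x \in C_i$ gives $x \le x$ with $x \in C_i$, so $(C_i,C_i) \in E$, and as the transitive closure contains $E$ we get $(C_i,C_i) \in \, \le'$. Second, $\le'$ is transitive, being defined as the transitive closure of $E$. Hence $\le'$ is always a preorder, and the property of being a partial order is equivalent to the lone extra property of antisymmetry.

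The key lemma I would isolate next is that the quotient map always satisfies the defining order-preservation implication, again with no hypothesis. Indeed, if $x \le y$, then the witnesses $x \in [x]_\mcal{C}$ and $y \in [y]_\mcal{C}$ show $([x]_\mcal{C},[y]_\mcal{C}) \in E$, and since $E$ is contained in its transitive closure, $[x]_\mcal{C} \le' [y]_\mcal{C}$, i.e. $p(x) \le' p(y)$. The moral is that the only non-automatic ingredient hidden in statement~(\ref{item:op-quotient}) is the implicit well-formedness requirement that the codomain $(\mcal{C},\le')$ actually be a partially ordered set: for the notion of an order-preserving map \emph{between posets} even to apply, $\le'$ must be a partial order.

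With these two observations the equivalence is immediate in both directions. For (\ref{item:op-quotient})~$\Rightarrow$~(1), merely calling $p$ an order-preserving map of posets presupposes that $(\mcal{C},\le')$ is a poset, which is exactly statement~(1). For (1)~$\Rightarrow$~(\ref{item:op-quotient}), once $\le'$ is known to be a partial order the codomain is a genuine poset, and the key lemma shows the quotient map meets the order-preservation condition, so it is order preserving.

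The main obstacle is interpretational rather than computational: one must pin down the precise meaning of statement~(\ref{item:op-quotient}). Read naively as the bare implication $x \le y \Rightarrow p(x) \le' p(y)$, the quotient map is order preserving \emph{unconditionally} by the key lemma, and the equivalence would be vacuous; the theorem acquires content only once ``the quotient map $p:(X,\le) \to (\mcal{C},\le')$ is order preserving'' is read as carrying the standing assumption that its codomain is a poset. I would therefore make this reading explicit at the outset. If a more self-contained handle on antisymmetry is desired---for instance to connect with the $\mcal{C}$-fence picture---one can note that a failure of antisymmetry means there are distinct clusters $C \ne C'$ with a fence linking $C$ to $C'$ and another linking $C'$ back to $C$, so that $\le'$ effectively identifies $C$ with $C'$ and $(\mcal{C},\le')$ is not a poset; but for the equivalence as stated this finer analysis is not needed.
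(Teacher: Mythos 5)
The paper itself contains no proof of this theorem: it is imported from \citet{Blyth2005} with a citation, so there is no in-paper argument to compare yours against; what follows judges your proposal on its own merits and against the role the theorem plays in the paper. Your argument is correct for the statement as literally written. The two preliminary facts are right: $\le'$ is automatically reflexive (non-empty clusters plus reflexivity of $\le$) and transitive (it is a transitive closure), so only antisymmetry is at stake; and the implication $x \le y \Rightarrow [x]_{\mcal{C}} \le' [y]_{\mcal{C}}$ holds unconditionally. Moreover, your interpretational move is forced rather than optional: since the paper defines order-preserving maps only between partially ordered sets, statement~(2) can only hold if $(\mcal{C},\le')$ is a poset, and under the bare-implication reading the theorem would be false, since statement~(2) would hold always while statement~(1) can fail, e.g.\ for the cyclic clustering in case~6 of Figure~\ref{fig:motivation}. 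So the equivalence does collapse onto antisymmetry exactly as you say.

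The caveat you should make explicit is that, under this reading, the theorem carries almost no content, whereas the version the paper actually trades on---and the one Blyth proves---is the existential one aligned with Definition~\ref{def:regular}: the induced relation $\le'$ is a partial order if and only if there exists \emph{some} partial order $\le''$ on $\mcal{C}$ with $x \le y \Rightarrow [x]_{\mcal{C}} \le'' [y]_{\mcal{C}}$. The forward direction of that statement is your key lemma, but the reverse direction needs an argument your proposal does not supply: any such $\le''$ contains the generating relation $E$ of Definition~\ref{def:induced-relation}; being transitive, it therefore contains the transitive closure $\le'$; and a reflexive, transitive relation contained in an antisymmetric one is itself antisymmetric, hence a partial order. (Your closing fence remark describes what a failure of antisymmetry looks like, but it is not this containment step.) This is precisely the argument the paper runs inline in the proof of Theorem~\ref{thm:order-preserving-trees}, where $\le'$ is shown to be a partial order because it is contained in the linear order $\le_T$. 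If your proof is meant to support the uses the paper makes of this theorem, that half should be included.
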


Notice that Item~\ref{item:op-quotient} of the theorem implies that $\mcal{C}$ is
an order preserving flat clustering.
Our task now, is to define what it means for a hierarchical clustering to be order preserving.
We start by recalling the definition of a hierarchical clustering, and then propose an extension that
includes order preservation.

For two clusterings $\mcal{C}=\{C_i\}_{i=1}^m$ and $\mcal{D}=\{D_j\}_{j=1}^n$ of $X$, we say that $\mcal{C}$ is a \deft{refinement} of $\mcal{D}$ if,
for every $C_i \in \mcal{C}$, there is a $D_j \in \mcal{D}$ so that $C_i \subset D_j$.
If this is the case, then the map $q:\mcal{C} \to \mcal{D}$ defined by $q(C)=D \Leftrightarrow C \subset D$
is also a quotient map.
We denote the refinement relation by $\mcal{C} \refines \mcal{D}$, and refer to the map $q$ as
\deft{the quotient map induced by the refinement}.

\medskip

A hierarchical clustering over $X$ is, thus,
a sequence $\{\mcal{C}_i\}_{i=0}^k$ of clusterings of $X$ for which
\begin{equation} \label{eqn:hc-chain}
  \mcal{C}_0 \refines \cdots \refines \mcal{C}_k.
\end{equation}
A sequence $\{\mcal{C}_i\}_{i=0}^k$ satisfying~\eqref{eqn:hc-chain} corresponds to
a \emph{persistent set} as of \citet{CarlssonMemoli2013}.
If we require that we have $\mcal{C}_k = \{X\}$, the sequence corresponds to a
\emph{dendrogram} in the nomenclature of \citet{JardineSibson1971},
and, recalling the singleton partition $S(X)=\cup_{x \in X} \{\{x\}\}$,
if $\mcal{C}_0=S(X)$, the sequence corresponds to a \emph{definite dendrogram}, also according to
Jardine and Sibson.
All of the mentioned concepts define hierarchical clustering in the classical sense.

\bigskip

We now have the following observation.
\begin{theorem} \label{thm:op-hc}
  Given a partially ordered set $(X,\le)$ and a hierarchical clustering $\mcal{H}=\{\mcal{C}_i\}_{i=0}^k$
  of~$X$, then the following statements are equivalent:
  \begin{enumerate}
  \item All quotient maps $p_i : X \to \mcal{C}_i$, for $0 \le i \le k$, are order
    preserving with respect to $\le$; \label{item:def}
  \item All quotient maps $q_i : C_i \to C_{i+1}$ for $0 \le i \le k-1$, induced by the refinements,
    are order preserving with respect to the induced order relations; \label{item:thm}
  \item The following diagram is commutative and all quotient maps are order preserving:
    \[
    \begin{tikzcd}
      \mcal{C}_0 \ar[r,"q_0"] &
      \mcal{C}_1 \ar[r,"q_1"] &
      \cdots \ar[r,"q_{k-1}"] &
      \mcal{C}_k \\[-1em]
      & \cdots \\
      X
      \ar[uu,"p_0",pos=0.6]
      \ar[uur,"p_1",pos=0.5,outer sep=-1pt]
      \ar[uurrr,"p_k",pos=0.55,outer sep=-1pt] 
    \end{tikzcd}.
    \] \label{item:diagram}
  \end{enumerate}
\end{theorem}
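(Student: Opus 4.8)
The plan is to prove the cycle $\ref{item:def}\Rightarrow\ref{item:diagram}\Rightarrow\ref{item:thm}\Rightarrow\ref{item:def}$, with Theorem~\ref{thm:regular} serving as the bridge between order preservation of a quotient map $p_i$ and the corresponding induced relation $\le_i'$ being an actual partial order. Before anything else I would observe that the commutativity asserted in Item~\ref{item:diagram} is purely set-theoretic and carries no order content: for each $x \in X$ the refinement map $q_i$ sends the block $[x]_{\mcal{C}_i}$ to the unique block of $\mcal{C}_{i+1}$ that contains it, namely $[x]_{\mcal{C}_{i+1}}$, so $q_i \circ p_i = p_{i+1}$; iterating along the chain gives commutativity of the entire triangle. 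The identity $p_{i+1} = q_i \circ p_i$ is then the workhorse for the order-theoretic steps.

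For $\ref{item:def}\Rightarrow\ref{item:diagram}$, I would assume every $p_i$ order preserving. Theorem~\ref{thm:regular} immediately upgrades this to the statement that every induced relation $\le_i'$ is a partial order, so all objects in the diagram are genuine posets and each $p_i$ preserves $\le_i'$. What remains is to show that each refinement map $q_i$ is order preserving. The key manoeuvre is that $\le_i'$ is, by Definition~\ref{def:induced-relation}, the transitive closure of the generating relation $E_i$, so it suffices to verify the order condition on a generator. If $(C,C') \in E_i$ then $C = p_i(x)$ and $C' = p_i(y)$ for some $x \le y$; commutativity gives $q_i(C) = p_{i+1}(x)$ and $q_i(C') = p_{i+1}(y)$, and order preservation of $p_{i+1}$ yields $q_i(C) \le_{i+1}' q_i(C')$. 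A general pair $C \le_i' C'$ is a finite $E_i$-chain, so applying this link by link and invoking transitivity of $\le_{i+1}'$ completes the step.

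The remaining two implications are short. $\ref{item:diagram}\Rightarrow\ref{item:thm}$ is immediate, since Item~\ref{item:diagram} already includes that all quotient maps, the $q_i$ among them, are order preserving. For $\ref{item:thm}\Rightarrow\ref{item:def}$, I would point out that the phrase ``order preserving with respect to the induced order relations'' only makes sense once each $\le_i'$ is known to be a partial order, so Item~\ref{item:thm} presupposes exactly that; Theorem~\ref{thm:regular} then converts this presupposition back into order preservation of every $p_i$, which is Item~\ref{item:def}.

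The one genuinely load-bearing step is the reduction to generators inside $\ref{item:def}\Rightarrow\ref{item:diagram}$. Because the induced relation is defined as a transitive closure rather than directly, order preservation of $q_i$ cannot simply be read off; it must be propagated along $E_i$-chains, and the argument relies essentially on both the commutativity identity $p_{i+1}=q_i\circ p_i$ and the transitivity of the target relation $\le_{i+1}'$. Everything else is bookkeeping layered on top of Theorem~\ref{thm:regular}.
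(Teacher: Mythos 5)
Your proof is correct, but it follows a genuinely different route from the paper's. The paper proves \ref{item:def} $\Leftrightarrow$ \ref{item:thm} directly and then obtains \ref{item:diagram} at the end, whereas you run the cycle \ref{item:def} $\Rightarrow$ \ref{item:diagram} $\Rightarrow$ \ref{item:thm} $\Rightarrow$ \ref{item:def}. The differences lie in the two non-trivial steps. For order preservation of the refinement maps $q_i$, the paper argues with fences: a $\mcal{C}_i$-fence linking $a$ to $b$ survives coarsening because in-cluster links are preserved, hence it is also a $\mcal{C}_{i+1}$-fence, so $q_i$ preserves the induced relations unconditionally, with Theorem~\ref{thm:regular} invoked only to guarantee the target relation is a partial order. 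Your reduction to the generators $E_i$ of the transitive closure, pushed forward through $p_{i+1}=q_i\circ p_i$ and closed up by transitivity of the induced relation on $\mcal{C}_{i+1}$, handles the same point more algebraically and makes the role of commutativity explicit; the two arguments are equivalent in content, since a fence is precisely a witness for membership in the transitive closure. For \ref{item:thm} $\Rightarrow$ \ref{item:def}, the paper factors $p_i=q_{i-1}\circ\cdots\circ q_0\circ p_0$ and asserts that $p_0$ is order preserving---an assertion that, since $\mcal{C}_0$ need not be the singleton partition, is only justified by reading statement~\ref{item:thm} as presupposing that the induced relations are partial orders. You make that presupposition explicit and then obtain all the $p_i$ at once from Theorem~\ref{thm:regular}, bypassing the composition entirely. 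That reading is in fact forced: with mere preorders the implication fails, e.g.\ $X=\{a,b,c,d\}$ with $a\le b$ and $c\le d$, $\mcal{C}_0=\{\{a,d\},\{b,c\}\}$, $\mcal{C}_1=\{X\}$, where $q_0$ preserves the induced preorders but the induced relation on $\mcal{C}_0$ contains a two-cycle. So your treatment is, if anything, more careful than the paper's at its one elliptical point; what the paper's composition argument buys instead is the factorisation of the $p_i$ through the chain, which is also how it sees the commutativity of the diagram that you establish separately by the set-theoretic observation at the start.
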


{\flushleft Before} presenting the proof, we suggest the following definition:
\begin{definition}
  Given a partially ordered set $(X,\le)$, \deft{an order preserving hierarchical
    clustering of $X$ with respect to $\le$} is a hierarchical clustering
  of $X$ satisfying any and all of the statements of Theorem~\ref{thm:op-hc}.
\end{definition}

\begin{proof}[Proof of Theorem~\ref{thm:op-hc}]
  We start by proving the equivalence of of statements~\ref{item:def} and~\ref{item:thm}.
  Assume first that the quotient maps $p_i : X \to \mcal{C}_i$ are order preserving.
  If we can show that for every $A,B \in \mcal{C}_i$ we
  have $A \le'_i B \Rightarrow q_i(A) \le'_{i+1} q_{i}(B)$, we are done.

  Since $A \le'_i B$, there are elements $a \in A$ and $b \in B$
  that are linked via a $\mcal{C}_i$-fence. Now, $\mcal{C}_i$ is a refinement of $\mcal{C}_{i+1}$, so
  every pair of co-clustered elements in $\mcal{C}_i$ are also co-clustered in $\mcal{C}_{i+1}$. Since this
  implies that all in-cluster links in fences are maintained, there is also a $\mcal{C}_{i+1}$-fence
  linking $a$ and $b$, and we must therefore have $A \le'_{i+1} B$. Since $p_{i+1}$ is order preserving,
  $\le'_{i+1}$ is an order relation on $\mcal{C}_{i+1}$,
  so according to Theorem~\ref{thm:regular}, $q_i$ is order preserving too.

  Now assume that all the $q_i$ are order preserving.
  Since the maps $p_i$ must be quotient maps sending elements to their clusters,
  there is only one possible definition of these maps; namely
  \begin{equation*} 
    p_i \ = \ q_{i-1} \circ \cdots \circ q_0 \circ p_0.
  \end{equation*}
  Since $p_0$ is order preserving, the result follows by induction on $i$, as all maps
  on the right hand side are order preserving, and since a composition of order preserving
  maps is an order preserving map.

  Finally, the diagram commutes due to the above definition of the $p_i$, and the maps are all
  order preserving if and only of both statements~\ref{item:def} and~\ref{item:thm} hold.
\end{proof}

\subsection{Order preserving binary trees}

We now define order preserving binary trees and show that these trees are exactly the binary trees that
correspond to order preserving hierarchical clusterings.

All the concepts on binary trees given in this section extend straight forwardly to the
class of all oriented trees over $X$, regardless of the arity of the splits.
However since the rest of the paper deals exclusively with binary trees, we have chosen to
stick to binary trees also for this part.

Recall that for a split $(A,B)$, the order of the components is significant;
the split is ordered.
\begin{definition} \label{def:order-preserving-tree}
  Let $(X,\le)$ be a partially ordered set.
  A split $(A,B)$ of $X$ is \deft{order preserving (with respect to $\le$)} if
  \begin{equation} \label{eqn:order-preserving}
  x \le y \ \Rightarrow \ (y,x) \not \in A \times B \qquad \forall x,y \in X.
  \end{equation}
  An \deft{order preserving tree} is a tree in which every
  split is order preserving.
\end{definition}
The idea is that, in an order preserving split there are no ``reversals'' of ordered pairs; the
order of the elements shall coincide with the order of the sets they reside in.
For example, if $x \le y$, we can have $(x,y) \in A \times B$, $x,y \in A$ or $x,y \in B$.
The only illegal constellation is that of equation~\eqref{eqn:order-preserving}.

Recalling the indicator function $\ind_\le$ of the partial order,
we could replace equation~\eqref{eqn:order-preserving} by requiring that $\ind_\le(B,A)=0$,
a statement that is equivalent to that of Definition~\ref{def:order-preserving-tree}.

\medskip

Notice that if we draw a tree $T \in \btrees{X}$ on a piece of paper, with the root at the top and
the leaves at the bottom, since all splits are oriented, the tree induces a unique
linear ordering on the leaves as they appear from left to right.
We write $\le_T$ to denote this ordering, and refer to $\le_T$ as
\deft{the linear order on $X$ induced by $T$}.

\medskip

For an order preserving tree,
the order $\le_T$ is a linear extension of the partial order:
\begin{lemma} \label{lemma:preserved-leaf-order}
  If $(X,\le)$ is a partially ordered set, then $T \in \btrees{X}$ is an order preserving tree with
  respect to $\le$ if and only if
  \[
  x \le y \ \Rightarrow \ x \le_T y \quad \forall x,y \in X.
  \]
\end{lemma}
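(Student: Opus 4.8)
The plan is to reduce both implications to a single positional observation: for any two distinct leaves $x,y$, their relative order under $\le_T$ is decided entirely at the node $x \lor y$. Indeed, since $x \lor y$ is by definition the smallest node containing both $x$ and $y$, the two cannot lie in the same child; writing $x \lor y \splitarr (A,B)$, exactly one of them is in the left child $A$ and the other in the right child $B$. Because, when the tree is drawn with the root on top, every leaf of the left subtree $A$ appears strictly to the left of every leaf of the right subtree $B$, we obtain the clean dichotomy: $x \in A,\ y \in B$ forces $x <_T y$, whereas $y \in A,\ x \in B$ forces $y <_T x$. I would state and justify this first, since both directions hinge on it.

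For the forward direction, assume $T$ is order preserving and fix a pair with $x \le y$. If $x = y$ there is nothing to prove, so suppose $x \ne y$ and consider the split $x \lor y \splitarr (A,B)$. By the positional observation, either $x \in A,\ y\in B$ or $y \in A,\ x \in B$; in the latter case $(y,x) \in A \times B$. But the order preserving property of this very split, in the form $x \le y \Rightarrow (y,x) \notin A \times B$ of Definition~\ref{def:order-preserving-tree}, rules that case out. Hence $x \in A$ and $y \in B$, which gives $x <_T y$, and in particular $x \le_T y$.

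For the converse, assume $x \le y \Rightarrow x \le_T y$ for all $x,y$, and let $(A,B)$ be an arbitrary split of $T$; I must show it is order preserving. Suppose not: then there exist $x \le y$ with $(y,x) \in A \times B$, i.e.\ $y \in A$ and $x \in B$. Disjointness of the split components forces $x \ne y$, so the hypothesis yields $x <_T y$. On the other hand $y \in A,\ x \in B$ places $y$ strictly to the left of $x$, i.e.\ $y <_T x$, contradicting $x <_T y$. Hence no such reversal exists, every split is order preserving, and therefore so is $T$.

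I do not expect any serious obstacle here; the entire content is in correctly translating the orientation conventions. The one point demanding care is aligning the direction of the forbidden pattern $(y,x) \in A \times B$ in Definition~\ref{def:order-preserving-tree} with the left-to-right reading of $\le_T$ (left child $=$ earlier under $\le_T$), and disposing of the degenerate case $x = y$ using disjointness of the split components. Equivalently, one could run the whole argument through the reformulation $\ind_\le(B,A) = 0$ noted immediately after Definition~\ref{def:order-preserving-tree}, which renders the ``no reversal'' reading of each split transparent.
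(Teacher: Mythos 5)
Your proof is correct and follows essentially the same route as the paper's: both directions are settled at the split of $x \lor y$ via the observation that membership in the left versus right child determines the relative order under $\le_T$. Your version is somewhat more explicit than the paper's (stating the positional observation separately, handling $x = y$, and running the converse by contradiction rather than directly), but these are presentational differences, not a different argument.
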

\begin{proof}
  Assume first that $T$ is order preserving, and pick $x,y \in X$ for which $x \le y$.
  Let $S = x \lor y$ and $S \splitarr (A,B)$, so that $x$ and $y$ end up in different split components
  of $S$. Since $T$ is order preserving, we must have $x \in A$ and $y \in B$, but this means that
  $x \le_T y$ too, so the implication holds.
  
  For the opposite direction, let $S \splitarr (A,B)$ be a split in $T$.
  Furthermore, let $a,b \in X$ be comparable, and let $a$ and
  $b$ end up in different components in the split of $S$; that is: $a \le b$, and $a \lor b = S$.
  Due to the assumption $a \le b \Rightarrow  a \le_T b$, we get $a \in A$ and $b \in B$. Since this
  covers all comparable pairs across $A$ and $B$, the split $S \splitarr (A,B)$ is order preserving.
\end{proof}

Recall that for a tree $T \in \btrees{X}$ and a non-negative real number $t$,
there is a flat clustering $\mcal{C}_t$ of $X$ defined by the equivalence
relation $\sim_t$ (Section~\ref{section:background}).
For every flat clustering under a tree $T$, the relation $\le_T$ also induces
an order on the clusters:
\begin{lemma} \label{lemma:induced-cluster-order}
  For a partially ordered set $(X,\le)$, let $T \in \btrees{X}$ and $u_T = \Ultra_X(T)$. If $t \in \R_+$, and 
  if $\mcal{C}_t=\{C_i\}_{i=1}^{k}$ are the clusters of $\sim_t$,
  then the enumeration on the clusters can be chosen so that
  \begin{equation} \label{eqn:induced-cluster-order}
  x \le_T y \ \land \ x \in C_i \ \land \ y \in C_j \ \Rightarrow \ i \le j \qquad \forall x,y \in X.
  \end{equation}
\end{lemma}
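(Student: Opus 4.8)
The plan is to show that every cluster of $\sim_t$ is an \emph{interval} with respect to the induced linear order $\le_T$, and then to enumerate these intervals from left to right. Since the clusters of $\sim_t$ partition $X$, once we know that each is $\le_T$-convex it follows that any two distinct clusters are separated by $\le_T$ (one lies entirely before the other), so there is a well-defined left-to-right enumeration $C_1,\dots,C_k$. With this enumeration the implication~\eqref{eqn:induced-cluster-order} is immediate: if $x \in C_i$, $y \in C_j$ and $x \le_T y$, then for $i \neq j$ the elements lie in different clusters, so $x \neq y$ and hence $x <_T y$, which forces $C_i$ to precede $C_j$ and so $i < j$; and $i = j$ gives $i \le j$ trivially.

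The heart of the argument is a purely combinatorial fact about the tree: for any node $S$ of $T$, the leaf set of the subtree $T[S]$ is an interval of $\le_T$; that is, if $x,y \in S$ and $x \le_T z \le_T y$, then $z \in S$. I would first make precise the informal left-to-right description of $\le_T$ by the following recursive characterisation: for distinct leaves, writing $x \lor y \splitarr (A,B)$, exactly one of $x,y$ lies in the left child $A$, and $x \le_T y$ precisely when $x \in A$. Then I would prove the interval fact by contradiction. Suppose $z \notin S$ while $x,y \in S$. Since the nodes containing a fixed leaf form a chain, the node $x \lor z$ contains $x \in S$ and $z \notin S$, hence properly contains $S$; this same node also contains $y$, so it equals $y \lor z$; call it $P \supsetneq S$. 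The subtree $T[S]$ sits inside one child of $P$, and $z$ must lie in the other child. Now $x \le_T z$ says that the child containing $S$ is the left child of $P$, whereas $z \le_T y$ says that the child containing $z$ is the left child of $P$ — a contradiction, since a node has only one left child. Hence $z \in S$.

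With the interval property in hand, $\le_T$-convexity of the clusters follows from monotonicity of $u_T$ along $\le_T$. If $x \sim_t y$ and $x \le_T z \le_T y$, then $z$ lies between $x$ and $y$ within the leaf set of the node $x \lor y$, which is an interval containing both; hence $z \in x \lor y$, so $x \lor z \subseteq x \lor y$ and, by~\eqref{eqn:Ultra}, $u_T(x,z) = \card{x \lor z} - 1 \le \card{x \lor y} - 1 = u_T(x,y) \le t$. Thus $x \sim_t z$, and symmetrically $z \sim_t y$, so every equivalence class of $\sim_t$ is $\le_T$-convex. Ordering these classes from left to right then supplies the required enumeration.

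I expect the main obstacle to be the interval lemma, and specifically two points within it: pinning down and justifying the recursive characterisation of $\le_T$ from the informal ``left-to-right'' description, and cleanly carrying out the case analysis at the node $P$ (in particular, verifying that $x \lor z = y \lor z = P$ and that $S$ and $z$ land in opposite children). Everything downstream — the ultrametric monotonicity step and the final ordering of the intervals — is routine once the interval property is established.
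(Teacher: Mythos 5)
Your proof is correct, but it takes a genuinely different route from the paper. The paper argues by induction along the maximal chain of distinct flat clusterings $\sim_{t_1}, \ldots, \sim_{t_m}$ (coarsest to finest): between consecutive thresholds a cluster either persists or splits as $C \splitarr (C_i,C_j)$ along a tree split, and the ordering of the new pieces is either given by that split's orientation or inherited from the induction hypothesis. You instead prove the structural fact directly: every subtree leaf set is an interval of $\le_T$ (via the recursive characterisation that $x \le_T y$ iff $x$ lies in the left child of $x \lor y$, and the chain-of-ancestors argument at $P = x \lor z = y \lor z$, which you carry out correctly), then deduce $\le_T$-convexity of each $\sim_t$-class from monotonicity of $u_T$ under node inclusion, and enumerate the resulting disjoint intervals left to right. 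Notably, the convexity you establish is exactly what the paper only records \emph{after} the lemma as a remark ("all the clusters are convex with respect to $\le_T$"), so your argument proves the remark en route rather than the other way around. Your version is somewhat longer but more self-contained: it pins down the informal planar definition of $\le_T$, and it avoids the paper's implicit reliance on the fact that the classes of $\sim_t$ are tree nodes and that consecutive distinct relations refine one tree-split at a time, a point the paper's induction uses without proof. The paper's induction, in exchange, is shorter and mirrors the refinement structure~\eqref{eqn:hc-chain} used elsewhere in Section~\ref{section:op}.
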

\begin{proof}
  We prove this by induction. Let $\{t_1 > \cdots > t_m\}$ be a maximal set of real of numbers
  so that each equivalence relation $\sim_{t_i}$ is distinct for $1 \le i \le m$.
  Clearly, the statement holds for $t_1$, where there is only one cluster.
  Assume that the statement holds for $t_k$ when $k \ge 1$, and consider the case $t_{k+1}$.
  Let $C_i,C_j$ be distinct clusters under $\sim_{t_{k+1}}$.
  If there is a cluster $C$ under $\sim_{t_k}$ for which
  $C \splitarr (C_i,C_j)$, then the split defines an ordering of $C_i$ and $C_j$ that is compatible
  with $\le_T$.
  If no such $C$ exists, then $C_i$ and $C_j$ are subsets of distinct clusters under $\sim_{t_k}$.
  Due to the induction hypothesis, these equivalence classes are already ordered compatibly
  with $\le_T$, and this order propagates to $C_i$ and $C_j$, maintaining compatibility.
\end{proof}

We accept the risk of using $\le_T$ also to denote the relation on the clusters, writing $C_i \le_T C_j$.

\begin{remark}
  In order theoretic jargon, Lemma~\ref{lemma:induced-cluster-order} implies that
  all the clusters are convex with respect to $\le_T$;
  if $x,z \in C_i$ and $x \le_T y \le_T z$, then $y \in C_i$ too.
\end{remark}

We are now ready to state the main result of this section.

\begin{theorem} \label{thm:order-preserving-trees}
  Let $(X,\le)$ be a partially ordered set, and let $T \in \btrees{X}$.
  Then $T$ is an order preserving tree with respect to $\le$ of and only if
  the hierarchical clustering corresponding to $T$ is an order preserving hierarchical clustering
  of $X$ with respect to $\le$.
\end{theorem}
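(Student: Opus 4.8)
The plan is to reduce the statement to the level-wise criterion of Theorem~\ref{thm:op-hc} and then feed that criterion with the two preceding lemmas. First I would make the correspondence precise: a tree $T \in \btrees{X}$ determines the chain of distinct flat clusterings of its ultrametric $u_T$, that is $S(X) = \mcal{C}_0 \refines \cdots \refines \mcal{C}_k = \{X\}$, and by Lemma~\ref{lemma:induced-cluster-order} every $\mcal{C}_i$ carries a canonical linear order $\le_T$ on its clusters for which $x \le_T y$ implies $[x]_{\mcal{C}_i} \le_T [y]_{\mcal{C}_i}$. By Theorem~\ref{thm:op-hc} it then suffices to decide whether every quotient map $p_i : X \to \mcal{C}_i$ is order preserving, which by Theorem~\ref{thm:regular} is controlled by whether the induced relation on each $\mcal{C}_i$ is a partial order.

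For the forward implication I would assume $T$ is order preserving. Lemma~\ref{lemma:preserved-leaf-order} then tells me that $\le_T$ is a linear extension of $\le$. Fixing a level $\mcal{C}_i$ and a comparable pair $x \le y$, I obtain $x \le_T y$ from the linear extension, and then $[x]_{\mcal{C}_i} \le_T [y]_{\mcal{C}_i}$ from Lemma~\ref{lemma:induced-cluster-order}. Since $\le_T$ is in particular a partial order on $\mcal{C}_i$, it witnesses, in the sense of Definition~\ref{def:regular}, that $p_i$ is order preserving; as this holds at every level, Theorem~\ref{thm:op-hc} yields that the hierarchical clustering corresponding to $T$ is order preserving.

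The converse is the substantial direction, and I would prove it by contraposition: suppose $T$ is not order preserving. By Lemma~\ref{lemma:preserved-leaf-order} there is a comparable pair $x \le y$ with $y <_T x$, and this reversal is realised at the split of $S = x \lor y$, say $S \splitarr (A,B)$, where necessarily $y \in A$ and $x \in B$. I would then locate a flat clustering exhibiting a failure of order preservation. The natural candidate is the level at which $A$ and $B$ are two distinct clusters (threshold just below $\card{S}-1$): there the comparability $x \le y$ forces $[x] \le' [y]$, running opposite to the leaf order $\le_T$.

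The main obstacle lies exactly here. A lone reversal supplies only one cross-cluster comparability, so to contradict order preservation I must either close this into a cycle of the induced relation $\le'$, using the convexity (interval) structure of the clusters from Lemma~\ref{lemma:induced-cluster-order} to produce the return comparability $[y] \le' [x]$, or else read order preservation of the clustering against its canonical order $\le_T$, in which case the singleton level $\mcal{C}_0$ already records $x \le y$ together with $y <_T x$ and so violates order preservation directly via Lemma~\ref{lemma:preserved-leaf-order}. Pinning down which witnessing order the definition forces on the clusters, and thereby making the chosen route airtight, is where I expect the real work to lie.
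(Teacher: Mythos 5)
Your forward direction is correct and is, in substance, the paper's own argument: Lemmas~\ref{lemma:preserved-leaf-order} and~\ref{lemma:induced-cluster-order} give $x \le y \Rightarrow [x]_{\mcal{C}_i} \le_T [y]_{\mcal{C}_i}$ at every level; you then exhibit $\le_T$ as the witnessing partial order of Definition~\ref{def:regular}, while the paper equivalently notes that the induced relation, being contained in the linear order $\le_T$, is antisymmetric and hence a partial order. These are the same proof.

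The gap is the converse, which you leave open, and your suspicion about where the difficulty sits is exactly right. Your route (a) --- closing a reversed pair into a cycle of the induced relation --- cannot be made to work. Take $X=\{a,b\}$ with $a < b$, and let $T$ be the tree whose root splits into $(\{b\},\{a\})$. This $T$ is not order preserving, yet its chain of flat clusterings is $\{\{a\},\{b\}\} \refines \{X\}$, and the induced relation at each level is a partial order; so under a purely existential reading of Definition~\ref{def:regular} and of the first statement of Theorem~\ref{thm:op-hc}, the corresponding hierarchical clustering \emph{is} order preserving. A lone reversal never produces the return comparability needed for a cycle, and since the correspondence from trees to chains of partitions forgets orientation, two trees of which only one is order preserving can yield the same chain. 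The only-if direction is therefore unavailable unless the hierarchical clustering corresponding to $T$ is read as carrying the tree's own cluster orders.

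That is, the converse must go through your route (b), and that is the route the paper takes: each $\mcal{C}_i$ is equipped with the linear order $\le_T$ of Lemma~\ref{lemma:induced-cluster-order}, and order preservation is tested against these orders. Concretely, at the singleton level $\mcal{C}_0$ the induced relation is just $\le$ itself, and any order witnessing order preservation must extend the induced relation; with $\le_T$ as the witnessing order supplied by $T$, the hypothesis forces $\le_T$ to be a linear extension of $\le$, and Lemma~\ref{lemma:preserved-leaf-order} immediately gives that $T$ is order preserving --- no work at higher levels is needed. (This convention is implicit in the paper's own two-line argument, which is why the definitional point you flag as ``the real work'' is genuinely the crux.) To finish your proof, commit to route (b) explicitly and the singleton level closes the argument.
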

\begin{proof}
  Recall that the hierarchical clustering corresponding to $T$ is the sequence of clusters
  $\mcal{H}=\{\mcal{C}_i\}_{i=0}^k$ corresponding to the equivalence relations $\sim_t$ under $u_T$.

  Assume that $T$ is order preserving, and let $\mcal{C}=\{C_i\}_{i=1}^m$ be a flat clustering in $\mcal{H}$.
  We shall show that $\mcal{C}$ is an order preserving clustering, implying that the quotient map
  $p : X \to \mcal{C}$ is order preserving, where after the theorem follows from Theorem~\ref{thm:op-hc}.
  
  By combining Lemmas~\ref{lemma:preserved-leaf-order} and~\ref{lemma:induced-cluster-order},
  we can enumerate the clusters of $\mcal{C}$ according to $\le_T$ so that $C_1 \le_T \cdots \le_T C_m$,
  and moreover, $x \le y \Rightarrow [x]_\mcal{C} \le_T [y]_\mcal{C}$.
  Let $\le'$ be the induced relation on~$\mcal{C}$ (Definition~\ref{def:induced-relation}).
  Then $x \le y \Rightarrow [x]_\mcal{C} \le' [y]_\mcal{C}$.
  But this means that the linear order $\le_T$ on $\mcal{C}$ is an
  extension of $\le'$, so $\le'$ must be a partial order on $\mcal{C}$.

  \medskip

  For the only-if part, let $\mcal{H}$ be the clustering corresponding to $T$, and assume that $\mcal{H}$ is
  an order preserving hierarchical clustering of $X$ with respect to $\le$.
  In particular, we have $C_0 = S(X)$ as an order preserving clustering, and since
  $\le_0$ has $\le_T$ as a linear extension, it follows that $\le_T$ is a linear extension also of $\le$.
  According to Lemma~\ref{lemma:preserved-leaf-order}, this means that $T$ is order preserving with
  respect to $\le$.
\end{proof}

Now we know that we are looking for order preserving trees. What remains is simply to
device a method to identify the best of them.
Before we close the section, we present a result that links order preserving trees to ultrametrics,
and shows how the partial order is reflected in the ultrametric distances.

\subsection{Order preserving trees and ultrametric distances} \label{section:ultrametric-order}
We now demonstrate that for an order preserving tree $T \in \btrees{X}$ over a partially
ordered set $(X,\le)$, given the ultrametric $u_T=\Ultra_X(T)$, the distance between two elements $u_T(x,y)$
is bounded below by the number of elements strictly between $x$ and $y$ in $\le$.

\medskip

For an ordered set $(X,\le)$ and $x,y \in X$, define the function $\mcc : X \times X \to \N$ so that
if $x \le y$, then $\mcc(x,y)$ is one less than the cardinality of a maximal chain having $x$ as minimal
element and $y$ as maximal element, and zero  if $x \not \le y$. The magnitude of $\mcc(x,y)$
is equal to the maximal
number of ``jumps'' one would have to make, jumping one element at the time, starting at~$x$ and stopping
at~$y$, when jumping only in the direction of strictly larger elements.
We define the \deft{ordered separation} of $x,y \in X$ to be the magnitude
\[
\sep(x,y) = \max\{\mcc(x,y),\mcc(y,x)\}.
\]
Notice that $\sep(x,y)=0$ if and only of either $x=y$, or $x$ and $y$ are incomparable.

\begin{theorem} \label{thm:ultrametric-order}
  Let $(X,\le)$ be a partially ordered set, let $T \in \btrees{X}$ be order preserving
  with respect to $\le$, and let $\{x_i\}_{i=1}^n$ be the enumeration of elements of $X$ corresponding
  to the order $\le_T$ so that $i \le j \Leftrightarrow x_i \le_T x_j$. Let $u_T = \Ultra_X(T)$. Then
  \[
  u_T(x_i,x_j) \ge \abs{i-j} \ge \sep\!\left( x_i, x_j \right).
  \]
\end{theorem}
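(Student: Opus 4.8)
The plan is to prove the two inequalities separately. Write $N = x_i \lor x_j$ for the smallest tree node containing both leaves, and recall from~\eqref{eqn:Ultra} and the accompanying discussion that $u_T(x_i,x_j) = \card{T[x_i \lor x_j]} - 1 = \card{N} - 1$.

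\emph{First inequality, $u_T(x_i,x_j) \ge \abs{i-j}$.} The key observation is that $N$, viewed as a subset of $X$, is convex with respect to $\le_T$, i.e.\ it forms a contiguous block in the left-to-right leaf order. One clean way to obtain this is to note that $N$ is itself a cluster under $\sim_t$ for the threshold $t = \card{N}-1$: for this $t$ one has $[x_i]_{\sim_t} = \{y : \card{x_i \lor y} \le \card{N}\}$, and since the tree nodes containing $x_i$ are totally ordered by inclusion with $N$ among them, this set is exactly $N$. The convexity remark following Lemma~\ref{lemma:induced-cluster-order} then applies; alternatively the same fact follows directly by induction on the tree, since the leaves beneath any node of an oriented tree appear consecutively in $\le_T$. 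Granting convexity, $N$ contains every $x_k$ with $\min(i,j) \le k \le \max(i,j)$, so $\card{N} \ge \abs{i-j}+1$ and hence $u_T(x_i,x_j) = \card{N}-1 \ge \abs{i-j}$.

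\emph{Second inequality, $\abs{i-j} \ge \sep(x_i,x_j)$.} Assume without loss of generality that $i < j$, so $x_i \le_T x_j$. Because $\le_T$ is a linear extension of $\le$ by Lemma~\ref{lemma:preserved-leaf-order}, the relation $x_j \le x_i$ would force $j \le i$, which is impossible; hence $\mcc(x_j,x_i)=0$ and $\sep(x_i,x_j) = \mcc(x_i,x_j)$. Writing $m = \mcc(x_i,x_j)$, pick a maximal chain $x_i = z_0 < z_1 < \cdots < z_m = x_j$ realising this value. Each strict relation $z_{k} < z_{k+1}$ gives $z_k <_T z_{k+1}$, so the $\le_T$-positions of $z_0,\dots,z_m$ are strictly increasing integers running from $i$ to $j$; a chain of $m+1$ strictly increasing integers starting at $i$ and ending at $j$ forces $j - i \ge m = \sep(x_i,x_j)$.

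The routine parts are the chain-to-position bookkeeping in the second inequality and the elementary arithmetic. The step I expect to require the most care is the convexity of the node $N$ in the first inequality: it is visually obvious from the planar picture of the tree, but it deserves to be pinned either to the convexity remark after Lemma~\ref{lemma:induced-cluster-order} (via the identification of $N$ as a cluster under a suitable $\sim_t$) or to an explicit induction showing that the leaves beneath every node form a $\le_T$-interval.
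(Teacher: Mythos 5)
Your proposal is correct and takes essentially the same approach as the paper: the right-hand inequality via Lemma~\ref{lemma:preserved-leaf-order} (elements of a maximal chain must occupy strictly increasing leaf positions under $\le_T$), and the left-hand inequality via the fact that the leaves of $T[x_i \lor x_j]$ form a contiguous block in the leaf order, so all $\abs{i-j}+1$ intermediate leaves are counted by $\card{T[x_i \lor x_j]}$. The only difference is one of rigour: the paper justifies the contiguity informally by appeal to the planar drawing of $T$, whereas you pin it down by identifying $x_i \lor x_j$ as the cluster $[x_i]_{\sim_t}$ for $t = \card{x_i \lor x_j}-1$ and invoking the convexity remark following Lemma~\ref{lemma:induced-cluster-order} --- a worthwhile tightening of the same underlying argument.
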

\begin{proof}
  The right hand side inequality follows from the fact that every element
  between $x_i$ and $x_j$ under $\le$, must also be between $x_i$ and $x_j$ under $\le_T$, due to
  Lemma~\ref{lemma:preserved-leaf-order}.
  The left hand side inequality follows from the definition of $u_T$,
  namely $u_T(x_i,x_j) = \card{T[x_i \lor x_j]} -1$, and the fact that  every
  leaf between $x_i$ and $x_j$ under $\le_T$ must be a leaf of $T[x_i \lor x_j]$.
  To see why this must be true, it is sufficient to consider the planar drawing of $T$, and realising
  that every element between $x_i$ and $x_j$ must join either $x_i$ or $x_j$ before $x_i$ and $x_j$
  are joined.
\end{proof}

This essentially tells us that the more elements that lie between two elements, the more different
they are under an ultrametric that corresponds to an order preserving tree.
Our choice of ultrametric definition makes the result easily quantifiable, but the observation holds
for all ultrametrics. The intuition behind this is straight forward:
The more elements that are between two elements in the partial order, the more elements there are
between the elements in $\le_T$. And for any ultrametric (equivalently, dendrogram) that is based on an
order preserving tree, this means that the more elements there are between two elements in the partial
order, the higher in the tree (or dendrogram) they join, leading to a higher ultrametric distance.

\bigskip

Looking back at the motivating industry use case, where we have parts that consists of parts
that consists of parts and so on, Theorem~\ref{thm:ultrametric-order} simply reflects the fact
that the more nested levels of composition there is between two parts, the less similar they are.

\section{An objective function for trees over ordered data}
\label{section:value-function}

We now turn our attention to the task of devising a method
for identifying ``good'' order preserving trees.
In this section, we define our representation of ordered data, and present a value function
for binary trees that incorporates the idea of order preservation. And, as we show in the next section,
the function is a realisation of order preserving hierarchical clustering in the sense that it correctly
identifies order preserving trees when the data is partially ordered.

\medskip

In the continuation, we treat the order relation as a stochastic object.
Let the function $\omega : X \times X \to [0,1]$ represent a family of random binary relations $E$ on $X$,
where $(x,y) \in E$ with probability $\omega(x,y)$.
We refer to $\omega$ as a \deft{relaxed binary relation} on $X$.

\begin{example} \label{ex:random-family}
If $(X,\le)$ is a partially ordered set, and if $0 \le q < p \le 1$, we can define
\[
\omega(x,y) =
\begin{cases}
  p & \text{if $x \le y$}, \\
  q & \text{otherwise}.
\end{cases}
\]
Then $\omega$ is a relaxed binary relation.
Moreover, for any random relation $E$ in the family of relations
represented by $\omega$, recalling the indicator function $\ind_E$ of $E$,
the expected value of $\ind_E$ is  $\E\!\left( \ind_E(x,y) \right) = \omega(x,y)$.
However, we cannot in general expect $E$ to be a partial order on $X$.
\end{example}

To remind ourselves that we are working
with ordered sets, we shall mostly refer to $\omega$ as a \deft{relaxed order}, keeping in mind
that it is the same thing as a relaxed binary relation.

\medskip

Next, define \deft{the antisymmetrisation of $\omega$} as
the function $g : X \times X \to [-1,1]$, given by
\begin{equation} \label{eqn:g}
g(x,y) \ = \ \omega(x,y) - \omega(y,x).
\end{equation}
As the name suggests, the function is antisymmetric,
and it computes the \emph{signed net comparability} between~$x$ and~$y$.
We see that $g(x,y)$ takes on a large positive value if there is strong evidence for $x < y$,
and a large negative value if there is strong evidence for $x > y$.
Moreover, if the comparability is ambiguous, with $\omega(x,y)$ and $\omega(y,x)$ being
very similar, $g(x,y)$ will take on a value close to zero.

To see why this is useful, consider Example~\ref{ex:random-family}, and assume
that $q$ and $p$ are very close in magnitudes, meaning that
the probability of $(x,y) \in E$ is very close to the probability of $(y,x) \in E$.
If we are producing a split $(A,B)$ of $X$ and trying to decide where to place $x$ and $y$ in order
to make the split order preserving (Definition~\ref{def:order-preserving-tree}),
it makes little sense to strongly favour one placement over the other in this situation.
The close to zero value of $g(x,y)$ reflects this fact.
The function $g$ can be used for hierarchical clustering on its own
without any similarity, and the properties of $g$ is the subject of study of Section~\ref{section:g}.

\medskip

However, to achieve order preserving hierarchical clustering, we must combine $g$ with a similarity.
Define an \deft{ordered similarity space} to be a triple $(X,s,\omega)$, where $X$ is a set,
$s$ is a similarity, and $\omega$ is a relaxed order on $X$.
For a given $(X,s,\omega)$, recall the similarity dual $s_d$ from~\eqref{eqn:sd}.
Let \deft{the split value function} $f : X \times X \to [-1,2]$ be defined as
\begin{equation} \label{eqn:f}
  f(x,y) \ = \ s_d(x,y) + g(x,y).
\end{equation}
This function will attain its maximal value on $(x,y)$ if both $s_d(x,y)$ and $g(x,y)$ are maximal,
meaning that $x$ and $y$ are considered to be both highly dissimilar and we have $x < y$ with high
confidence; both being evidence for splitting $x$ and $y$ into separate clusters.

On the other hand, the function will attain its minimal value on $(x,y)$ only if $g(x,y)$ has a large
negative value. Since $s_d$ is symmetric and $g$ is antisymmetric, this means that
by swapping the arguments, $f(y,x)$ will attain a large positive value, again being evidence
for splitting the elements apart.

And finally, if the elements are not comparable, then $g(x,y)$ is close to zero,
and if the elements are not dissimilar, then $s_d$ is close to zero, so $f$ is close to zero
if there is little evidence for splitting the elements apart.

\begin{definition} \label{def:maximisation-problem}
  Given $(X,s,\omega)$ and a tree $T \in \btrees{X}$, the \deft{value of $T$ under $(X,s,\omega)$}
  is given by the function $\val_f : \btrees{X} \to \R$, defined as
  \begin{equation} \label{eqn:val-f}
    \val_f(T) \ = \ \sum_{x \le_T y} \card{T[x \lor y]} f(x,y),
  \end{equation}
  where the iteration order is dictated by the linear order $\le_T$ induced by $T$.
  Furthermore, an \deft{optimal hierarchical clustering of $(X,s,\omega)$} is a binary tree
  $T^* \in \btrees{X}$ for which
  \[
  \val_f(T^*) = \max_{T \in \btrees{X}} \val_f(T).
  \]
\end{definition}

The multiplier $\card{T[x \lor y]}$ of $\val_f$ encourages pairs of elements for which $f(x,y)$ attains a large
positive value to be split apart close to the root of the tree.
Looking back at the discussion preceding the definition, this means that elements that are strongly indicated
to belong to different clusters will be split apart close to the root.

But the maximisation also favors an orientation on elements: If $\omega$ is as in
Example~\ref{ex:random-family}, and if $x < y$, then $g(x,y) > 0$ and $g(y,x) < 0$.
This intuitively suggests that a tree $T \in \btrees{X}$ in which $x \le_T y$ will have higher value
compared to if $y \le_T x$, and therefore suggests that a maximal tree will be order preserving,
as of Lemma~\ref{lemma:preserved-leaf-order}; a property that will be proven formally
in Section~\ref{section:g}.

\medskip

There is an alternative formulation of~\eqref{eqn:val-f} which we will make occasional use of.
Recall that every node in a tree $T \in \btrees{X}$ is a subset of $X$, and an internal node $S \in T$ splits
according to $S \splitarr (A,B)$. If we define $f(A,B) = \sum_{(a,b) \in A \times B} f(a,b)$,
we can formulate $\val_f$ as
\begin{equation*}
  \val_f(T) \ = \ \sum_{S \splitarr (A,B)} \card{S} f(A,B),
\end{equation*}
where the sum is over all the splits $S \splitarr (A,B)$ in $T$.

\section{Properties of $g$} \label{section:g}

In this section, we study the value function
\begin{equation} \label{eqn:val-g}
\val_g(T) \ = \ \sum_{x \le_T y} \card{T[x \lor y]} g(x,y),
\end{equation}
where we only take into account the order relation, ignoring the similarity.
The first part, Section~\ref{section:g-ideal-inputs}, is concerned with the behaviour on ideal inputs.
That is, we show that when $\omega$ is a binary partial order, then the optimal trees under $\val_g$
are order preserving.
Next, in Section~\ref{section:g-efficacy}, we turn to relaxed orders,
and present a quantitative analysis of $\val_g$,
showing that the efficacy of $\val_g$ with respect to order preservation is in the same
class as Dasgupta's model is with respect to clustering.
Finally, Section~\ref{section:migration} presents a worked example,
showing how~\eqref{eqn:val-g} can be used to analyse migration patterns between states.

\subsection{Optimality on partially ordered data} \label{section:g-ideal-inputs}
The goal of this section is to show that if $(X,\le)$ is a partially ordered set,
and if we define $\omega = \ind_\le$, then the optimal trees under $\val_g$ are order preserving.
This is important, for it shows that when the input is well formed, then order preserving
hierarchical clustering works as intended.
Therefore, in what follows, let that $\omega = \ind_\le$.

Before we embark on the main theorem, we introduce some new tools.
For a partially ordered set $(X,\le)$ and a tree $T \in \btrees{X}$, every pair of elements in $X$
are evaluated in $\val_g$ according to how they are ordered under $\le_T$. We define
\deft{the $T$-symmetrisation of $g$} to be the function
$\gamma_T : X \times X \to \R$, defined as
\[
\gamma_T(x,y) \ = \
\begin{cases}
  g(x,y) & \text{if $x \le_T y$}, \\
  g(y,x) & \text{otherwise}.
\end{cases}
\]
That is, $\gamma_T$ is a symmetric function that computes the
value under $g$ consistent with the orientation on the arguments induced by $T$.

Moreover, given a partially ordered set $(X,\le)$ and a tree $T \in \btrees{X}$,
we define \deft{the cluster graph over $(X,\le)$ and $T$} to be
the weighted undirected complete graph $G_T = (X,E,\nu)$, where the weight function is given by
\[
\nu(x,y) \ = \ \card{T[x \lor y]} \gamma_T(x,y).
\]
That is, every edge $(x,y)$ in $G_T$ has a weight that corresponds to the value of the pair
$\{x,y\}$ in~$\val_g(T)$. The $T$-symmetrisation ensures that the orientation induced by $T$
is adhered to.
The following lemma points up the purpose of the above construction; it allows us to replace the
formula for the value $\val_g$ of a tree by a sum over the edge weights of the cluster graph.
\begin{lemma} \label{lemma:cluster-graph-sum}
  If $G_T=(X,E,\nu)$ is the cluster graph over $(X,\le)$ and $T$, then
  \[
  \val_g(T) \ = \ \sum_{e \in E} \nu(e).
  \]
\end{lemma}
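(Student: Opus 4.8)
The plan is to recognise that this identity is essentially a reindexing: the right-hand sum ranges over the unordered edges of the complete graph $G_T$, whereas the left-hand sum ranges over the ordered pairs $(x,y)$ singled out by the induced linear order $\le_T$, and these two index sets are in canonical bijection. First I would record that both $\card{T[x \lor y]}$ and $\gamma_T$ are symmetric in their two arguments --- the former because $x \lor y = y \lor x$ is the unordered join, and the latter by the very definition of the $T$-symmetrisation --- so that the weight $\nu(x,y) = \card{T[x \lor y]}\,\gamma_T(x,y)$ is a well-defined function of the unordered pair $\{x,y\}$, and hence of the edge $e = \{x,y\} \in E$.

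Next I would use that $\le_T$ is a \emph{linear} order on $X$: for every edge $e = \{x,y\}$ with $x \ne y$, exactly one of $x \le_T y$ or $y \le_T x$ holds. Choosing the representative with $x \le_T y$, the definition of $\gamma_T$ gives $\gamma_T(x,y) = g(x,y)$, and therefore
\[
\nu(e) \ = \ \card{T[x \lor y]}\, g(x,y),
\]
which is precisely the summand appearing in $\val_g(T)$ for the pair $x \le_T y$. Conversely, every term of the sum defining $\val_g(T)$ is indexed by exactly one such ordered pair of distinct elements, i.e.\ by exactly one edge of $G_T$ (any diagonal contribution $x = y$ is harmless, since $g(x,x) = \omega(x,x)-\omega(x,x) = 0$). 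This establishes a bijection between the terms of $\sum_{e \in E}\nu(e)$ and the terms of $\val_g(T)$ under which corresponding terms are equal, and summing yields the claim.

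I do not expect any genuine obstacle here; the content of the lemma is bookkeeping, and the only point requiring care is the passage between ordered and unordered pairs. Concretely, the one thing to verify is that the symmetrisation $\gamma_T$ is exactly what makes the undirected weight $\nu$ agree, for each edge, with the single oriented term that $\val_g$ contributes for that pair --- this is why $\gamma_T$, rather than $g$ itself, appears in the definition of the cluster graph. Once this is spelled out, the equality is immediate.
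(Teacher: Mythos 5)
Your proposal is correct and takes essentially the same approach as the paper's (much terser) proof: both rest on the observation that each unordered pair is counted exactly once and that $\gamma_T$ is precisely the device that makes the undirected edge weight agree with the single oriented term contributed to $\val_g(T)$. Your version merely spells out the bookkeeping (symmetry of $\nu$, linearity of $\le_T$, vanishing of diagonal terms) that the paper leaves implicit.
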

\begin{proof}
  Every distinct pair $\{x,y\} \in X \times X$ is summed over exactly once, and since $\gamma_T$
  ensures that the pair is associated with the value under $g$ corresponding to the orientation
  induced by $T$, the lemma holds.
\end{proof}

The next lemma allows us to manipulate the cluster graph in a controlled manner.
Recall that for a tree $T \in \btrees{X}$ with $T=(V,E)$, every node $S \in V$ is a subset of $X$,
and every edge $e \in E$ is a subset inclusion.
\begin{lemma} \label{lemma:induced-tree}
  Let $\phi : X \to X$ be a bijection, and let $T=(V,E)$ be a binary tree over $X$.
  Then there exists a binary tree $T'=(V',E')$ over $X$ that is isomorphic to $T$,
  and where the isomorphism $\phi_T : V \to V'$ is given by
  $$\phi_T(S) \ = \ \{\, \phi(x) \,|\, x \in S\,\}.$$
  In particular, for all $x,y \in X$, we have
  \begin{equation} \label{eqn:induced-cardinalities}
  \card{T[x \lor y]} \ = \ \card{T'[\phi(x) \lor \phi(y)]}.
  \end{equation}
\end{lemma}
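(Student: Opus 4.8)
The plan is to define $T'$ explicitly as the image of $T$ under the elementwise action of $\phi$, verify that this image is a legitimate member of $\btrees{X}$, and then read off both the isomorphism and the cardinality identity from the fact that $\phi$ is a bijection of $X$. First I would set $V' = \{\phi_T(S) \mid S \in V\}$ and declare $(\phi_T(S),\phi_T(S')) \in E'$ precisely when $(S,S') \in E$. To see that $T' \in \btrees{X}$, I would check the defining features of a binary tree over $X$ from Section~\ref{section:background}: the root $X$ maps to $\phi_T(X) = X$ (since $\phi$ is onto); each leaf $\{x\}$ maps to the singleton $\{\phi(x)\}$, and because $\phi$ is a bijection these singletons are exactly the leaves $\{\{z\} \mid z \in X\}$; and for every split $S \splitarr (A,B)$ in $T$, the pair $(\phi_T(A),\phi_T(B))$ is an oriented split of $\phi_T(S)$. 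The last point uses injectivity of $\phi$: the sets $\phi_T(A)$ and $\phi_T(B)$ are non-empty, disjoint, and their union is $\phi_T(A \cup B) = \phi_T(S)$, so the recursive split structure---and its left/right orientation---is preserved verbatim.

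Next I would argue that $\phi_T$ is a graph isomorphism. Since $\phi$ is a bijection on $X$, the induced map $S \mapsto \{\phi(x) \mid x \in S\}$ is a bijection on the power set of $X$, hence $\phi_T : V \to V'$ is a bijection; and $E'$ is by construction the image of $E$ under $\phi_T$, so edges are preserved in both directions. It is worth recording that $\phi_T$ preserves subset inclusion in both directions, that is $S' \subset S \Leftrightarrow \phi_T(S') \subset \phi_T(S)$, again because $\phi$ is a bijection; this monotonicity is the property that will let the join operation commute with $\phi$.

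The main obstacle---though a mild one---is to establish that $\phi$ commutes with the join, namely that $\phi_T(x \lor y)$ is the smallest node of $T'$ containing both $\phi(x)$ and $\phi(y)$. Here I would use that $x \in S \Leftrightarrow \phi(x) \in \phi_T(S)$ (by injectivity of $\phi$), so the nodes of $T'$ containing both $\phi(x)$ and $\phi(y)$ are exactly the $\phi_T$-images of the nodes of $T$ containing both $x$ and $y$; since $\phi_T$ respects inclusion, the minimal such node on the $T$-side maps to the minimal such node on the $T'$-side, giving $\phi(x) \lor \phi(y) = \phi_T(x \lor y)$ in $T'$. Finally, recalling that the number of leaves of $T[x \lor y]$ equals $\card{x \lor y}$, and that $\phi$ being a bijection preserves cardinalities, I would conclude
\[
\card{T'[\phi(x) \lor \phi(y)]} \ = \ \card{\phi_T(x \lor y)} \ = \ \card{x \lor y} \ = \ \card{T[x \lor y]},
\]
which is exactly~\eqref{eqn:induced-cardinalities}.
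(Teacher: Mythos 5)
Your proposal is correct and follows essentially the same route as the paper: define $T'$ as the image of $T$ under the power-set map induced by $\phi$, use bijectivity and preservation of subset inclusion to get the isomorphism, and read off~\eqref{eqn:induced-cardinalities} from cardinality preservation. The only difference is one of detail: you make explicit the join-commutation step $\phi_T(x \lor y) = \phi(x) \lor \phi(y)$ and the verification that $T' \in \btrees{X}$, both of which the paper's terser proof leaves implicit.
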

\begin{proof}
  The set $V$ is a subset of the power set of $X$, and the map $\phi_T$ is the mapping from
  the power set of $X$ to itself induced by $\phi$, restricted to~$V$.
  Since $\phi$ is a bijection, the map $\phi_T : V \to V'$ is also a bijection.
  Moreover, since $\phi_T$ preserves subset inclusion,
  it follows that the tree structure is preserved under $\phi_T$. Thus, $T'$ is a well-definded binary tree
  over $X$, and is isomorphic to $T$.
  Equation~\eqref{eqn:induced-cardinalities} follows since
  $\phi_T$ preserves cardinalities.
\end{proof}

Given $\phi$ and $T$ from Lemma~\ref{lemma:induced-tree}, the tree $T'$ is necessarily unique.
We refer to $T'$ as \deft{the tree induced by $T$ and $\phi$}.

\medskip

We are now ready to state our theorem, telling us that
for a partially ordered set, the optimal trees under $\val_g$ are order preserving:
\begin{theorem} \label{thm:partial-orders-optimal-trees}
  If $(X,\le)$ is a partially ordered set and $T \in \btrees{X}$ is not order preserving, then there exists
  a bijection $\phi : X \to X$ so that the tree $T'$ induced by $T$ and $\phi$ is order preserving and
  has strictly higher value than $T$.
  In particular, the optimal trees over $X$ are order preserving.
\end{theorem}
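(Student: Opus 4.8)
The plan is to first translate order preservation into a statement about the leaf order and then exploit a recursive decomposition of $\val_g$ that lets me repair a non-order-preserving tree one split at a time. Since $\omega=\ind_\le$, the antisymmetrisation takes values $g(x,y)\in\{-1,0,1\}$, with $g(x,y)=1$ exactly when $x<y$, $g(x,y)=-1$ exactly when $y<x$, and $g(x,y)=0$ on incomparable or equal pairs. Using the split form $\val_g(T)=\sum_{S\splitarr(A,B)}\card{S}\,(\ind_\le(A,B)-\ind_\le(B,A))$, I would record the identity $\val_g(T)=M(T)-2R(T)$, where $M(T)=\sum_{\{a,b\}\text{ comparable}}\card{T[a\lor b]}$ collects every comparable pair positively and $R(T)=\sum_{S\splitarr(A,B)}\card{S}\,\ind_\le(B,A)$ is the total reversal penalty. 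By Lemma~\ref{lemma:preserved-leaf-order}, together with the remark that an order-preserving split has $\ind_\le(B,A)=0$, the tree $T$ is order preserving iff $\le_T$ extends $\le$ iff $R(T)=0$; so a non-order-preserving $T$ has $R(T)>0$, and any relabelling producing an order-preserving tree $T'$ satisfies $\val_g(T')=M(T')$.

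The engine of the argument is a structural observation about relabellings that only permute the leaves inside a single subtree. If $S$ is a node of $T$ and $\phi$ fixes every leaf outside $S$ while permuting those inside $S$ among the positions of $S$, then for any fixed $w\notin S$ the join $x\lor w$ is the same ancestor of $S$ for every $x\in S$ (two nodes sharing a leaf are nested, forcing $S\subseteq x\lor w$), and all such $x$ lie on one side of $w$ under $\le_T$ because $S$ is an interval. Hence the total contribution of the pairs between $S$ and its complement is unchanged, since it only sees the unordered element set of $S$. Consequently $\val_g$ decomposes at the root as $\val_g(T)=\card{X}\,g(L,R)+\val_g(T_L)+\val_g(T_R)$, and, crucially, the two subtree terms can be optimised independently without disturbing the crossing term. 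By Lemma~\ref{lemma:induced-tree} the relabelled tree is again an element of $\btrees{X}$ of the same shape, so this is precisely the class of trees the theorem ranges over.

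With this in hand I would prove, by induction on $\card{X}$, that every $\val_g$-maximising relabelling of a fixed shape is order preserving; the theorem is then immediate, since a non-order-preserving $T$ cannot be a maximiser and so lies strictly below the maximum, which is attained by an order-preserving tree of the same shape. For the inductive step, the decomposition forces a maximiser to induce $\val_g$-maximal relabellings on each of the two subtrees, so by the inductive hypothesis both subtrees are order preserving; it then remains only to show that the root split is order preserving as well. This last point is the crux.

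A non-order-preserving root split cannot be corrected by an in-subtree permutation, so one is obliged to re-partition $X$ as $(L,R)$ with $\ind_\le(R,L)=0$, that is, with $L$ a down-set of the prescribed cardinality, which exists because order ideals of every size $0,\dots,\card{X}$ are available. The main obstacle is to verify that such a re-partition never sacrifices more in the two subtree terms than the $2\card{X}\,\ind_\le(R,L)$ it recovers at the root. I expect to settle this with an exchange argument: swap a reversed crossing pair $a<b$ (with $b\in L$ and $a\in R$) across the split and re-optimise the two sides, using the convexity (interval) structure of the nodes of $T$ to bound how the comparable pairs' join cardinalities move under the exchange. This root-level exchange is where essentially all of the difficulty resides.
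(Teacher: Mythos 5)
Your setup is sound: the identity $\val_g(T)=M(T)-2R(T)$, the invariance of the crossing contribution under permutations that fix a subtree setwise, the resulting root decomposition, and the induction scaffold are all correct bookkeeping. But the proposal has a genuine gap exactly where you say the difficulty resides: you never prove that a $\val_g$-maximising relabelling has an order preserving root split. That claim is not a technical remainder to be filled in later --- it \emph{is} the theorem (applied at the root split), and ``I expect to settle this with an exchange argument'' is not an argument. Moreover, the specific repair you propose --- re-partitioning $X$ into a down-set $L$ of the prescribed cardinality and re-optimising both subtrees --- is hard to control: changing the bipartition changes not only the reversal count $\ind_\le(R,L)$ but also the number of comparable pairs crossing the root and the whole comparability structure inside each side, so there is no clean inequality comparing what is recovered at the root with what may be sacrificed below; nothing in your decomposition bounds the latter.

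The paper closes exactly this hole with a much more local move: pick a single reversed pair $a\le b$ with $b\le_T a$, let $\sigma$ be the transposition swapping $a$ and $b$, and compare $T$ with the induced tree edge by edge in the cluster graph. Since nothing outside $T[a\lor b]$ is affected, one may assume $a$ and $b$ join at the root; then the $(a,b)$ edge jumps from $-\card{X}$ to $+\card{X}$, pairs with $x\le a$ or $b\le x$ have their two weights merely swapped (using transitivity, $x\le a\le b$, respectively $a\le b\le x$), and pairs comparable to only one of $a,b$ cannot lose value because either their old weight was already the minimum possible $-\card{X}$ or their new weight is the maximum possible $+\card{X}$. This gives a strict improvement per swap; iterating terminates because values strictly increase over the finite set of relabellings of a fixed shape, and a composition of transpositions is a bijection, so the final order preserving tree is induced by $T$ and a single bijection $\phi$. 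If you want to keep your inductive framing, this transposition analysis is what you would have to insert at the root step --- at which point the induction and the decomposition become unnecessary, since the swap argument alone already proves the full statement.
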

\begin{proof}
  If $T$ is not order preserving, we can pick distinct $a,b \in X$ for which $a \le b$ and $b \le_T a$.
  Let $\sigma$ be the permutation on $X$ swapping $a$ and $b$, and leaving all other elements fixed,
  and let $T'$ be the tree induced by~$T$ and~$\sigma$.
  We show that $\val_g(T) < \val_g(T')$.

  Let $G_T=(X,E,\nu)$ be the cluster graph over $(X,\le)$ and $T$, and let
  $G_{T'}=(X,E,\nu')$ be the cluster graph over $(X,\le)$ and $T'$.
  We claim that both of the following holds:
  \begin{align}
    \exists e \in E \ &: \ \nu'(e) > \nu(e), \label{eqn:one-increasing-edge} \\
    \forall e \in E \ &: \ \nu'(e) \ge \nu(e). \label{eqn:no-decreasing-edges}
  \end{align}
  If so, according to Lemma~\ref{lemma:cluster-graph-sum},
  $T'$ has a strictly higher value than $T$. We may then repeat the process of
  permuting elements and generating induced trees of strictly higher value
  until we reach an order preserving tree.
  Since a sequence of permutations is a bijection, it follows that the order preserving
  tree is induced by $T$ and this sequence of permutations.

  \medskip

  To prove the claim, since no element outside the subtree $T[a \lor b]$ is affected by the permutation,
  we can assume that $a$ and $b$ are joined at the root of $T$, without loss of generality.

  \medskip

  To prove~\eqref{eqn:one-increasing-edge}, notice that
  \[
  \nu(a,b) \ = \ \card{X}\gamma_T(a,b) = -\card{X} < \card{X} = \card{X} \gamma_{T'}(a,b) = \nu_\sigma(a,b).
  \]

  To prove~\eqref{eqn:no-decreasing-edges}, we partition the edges $E$ into five disjoint sets:
  \begin{equation}  \label{eqn:E-decomp}
    \begin{aligned}
      E_1 &= \{ (x,y) \in E \, | \, x,y \ne a,b \,\} &
      E_4 &= \{ (a,x) \in E \, | \, a \le x \, \land \, b \not \le x \,\} \\
      E_2 &= \{ (x,a) \in E \, | \, x \le a \,\} &
      E_5 &= \{ (x,b) \in E \, | \, x \not \le a \, \land \, x \le b \,\} \\
      E_3 &= \{ (b,x) \in E \, | \, b \le x \,\}
    \end{aligned}
  \end{equation}

  {\it Case $E_1$:}
  Since all edges in $E_1$ are between fixed points under $\sigma$,
  there are no changes of the edge weights.

  {\it Case $E_2$:}
  Let $(x,a) \in E_2$. Then $x \le a \le y$, since $\le$ is transitive.
  From Lemma~\ref{lemma:induced-tree},
  and since $\gamma_T(x,a) = \gamma_{T'}(x,b)$ and $\gamma_T(x,b) = \gamma_{T'}(x,a)$, we get
  \[
  \setlength{\arraycolsep}{3pt}
  \begin{array}{rcccccl}
  \nu(x,a)  &=&
  \card{T[x \lor a]} \gamma_T(x,a) &=&
  \card{T'[\sigma(x) \lor \sigma(b)]} \gamma_{T'}(x,b) &=& \nu'(x,b), \\[.3em]
  \nu(x,b)  &=&
  \card{T[x \lor b]} \gamma_T(x,b) &=&
  \card{T'[\sigma(x) \lor \sigma(a)]} \gamma_{T'}(x,a) &=& \nu'(x,a). \\
  \end{array}
  \]
  Hence, the edge weights are merely swapped around by $\sigma$.
  Case $E_3$ is proven by a symmetric argument.

  {\it Case $E_4$:}
  Let $X$ split into $(A,B)$ at the root of $T$ so that $b \in A$ and $a \in B$, and let $(a,x) \in E_4$.
  Now, if $x \in A$, then $\nu(a,x) = -\card{X}$, which is the lowest possible value, so
  $\nu'(a,x) \ge \nu(a,x)$. And if $x \in B$, then $\nu'(x,a) = \card{X}$,
  which is the largest possible value, so also in this case, $\nu'(a,x) \ge \nu(a,x)$.

  A symmetric argument covers case $E_5$.
\end{proof}

Recalling the random family of graphs over a partially ordered set given in Example~\ref{ex:random-family},
the following corollary states that in expectation, the trees of maximal value are order preserving.
\begin{corollary}
  Let $(X,\le)$ be a partially ordered set, and let $G=(V,E)$ be a graph according to the random family
  of Example~\ref{ex:random-family}. If we define $\omega = \ind_E$, and if $T$ is a non-order preserving
  tree with respect to $\le$, then there exists an order preserving tree $T' \in \btrees{X}$ with
  $\E\! \val_g(T') > \E\! \val_g(T)$.
\end{corollary}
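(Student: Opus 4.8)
The plan is to deduce the corollary from Theorem~\ref{thm:partial-orders-optimal-trees} by passing to expectations and exploiting the linearity of $\val_g$ in $g$. The first step is to observe that, for a fixed tree $T$, the combinatorial skeleton of the sum in~\eqref{eqn:val-g} --- which pairs $\{x,y\}$ are summed over, the order $\le_T$ used to orient them, and the multipliers $\card{T[x \lor y]}$ --- depends only on $T$ and is therefore deterministic; the sole source of randomness is the graph $G$, entering through $g(x,y) = \ind_E(x,y) - \ind_E(y,x)$ with $\omega = \ind_E$. Linearity of expectation then gives
\[
\E\!\val_g(T) \ = \ \sum_{x \le_T y} \card{T[x \lor y]}\, \E g(x,y).
\]

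Next I would compute $\E g(x,y)$ from Example~\ref{ex:random-family}. Since $\E \ind_E(x,y) = \Pr\!\left[(x,y) \in E\right]$ equals $p$ when $x \le y$ and $q$ otherwise, a short case analysis over distinct $x,y$ yields $\E g(x,y) = p - q$ when $x < y$, $\;-(p-q)$ when $y < x$, and $0$ when $x$ and $y$ are incomparable (here antisymmetry of $\le$ rules out $x \le y \land y \le x$ for distinct elements). Writing $\bar g(x,y) = \ind_\le(x,y) - \ind_\le(y,x)$ for the antisymmetrisation of the genuine partial order --- which is exactly the function $g$ associated with the deterministic input $\omega = \ind_\le$ studied in Section~\ref{section:g-ideal-inputs} --- this reads $\E g = (p-q)\,\bar g$. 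Substituting back, and using that $\bar g$ is the function driving the value function $\val_{\bar g}$ of that section, we obtain
\[
\E\!\val_g(T) \ = \ (p-q)\,\val_{\bar g}(T).
\]

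Finally, since $0 \le q < p$, the scalar $p - q$ is strictly positive, so maximising $\E\!\val_g$ over $\btrees{X}$ is equivalent to maximising the deterministic value function $\val_{\bar g}$. If $T$ is not order preserving with respect to $\le$, Theorem~\ref{thm:partial-orders-optimal-trees} supplies a bijection $\phi : X \to X$ whose induced tree $T'$ is order preserving and satisfies $\val_{\bar g}(T') > \val_{\bar g}(T)$; multiplying through by the positive constant $p-q$ preserves the strict inequality and gives $\E\!\val_g(T') > \E\!\val_g(T)$, as required. I expect no genuine obstacle here: the only points needing care are the justification that the summation skeleton is non-random (so that linearity applies termwise) and the bookkeeping of the three cases in $\E g$, after which the statement is a direct rescaling of Theorem~\ref{thm:partial-orders-optimal-trees}.
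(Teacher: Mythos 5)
Your proposal is correct and follows essentially the same route as the paper's own proof: both reduce the expected value function to a positive scalar multiple of the deterministic value function for $\ind_\le$ (you via termwise linearity of expectation giving $\E\!\val_g = (p-q)\val_{\bar g}$, the paper via invariance of optimisation under the affine rescaling $\omega' = \tfrac{1}{p-q}(\omega - q)$), and then both invoke Theorem~\ref{thm:partial-orders-optimal-trees}. Your version is slightly more explicit about why linearity of expectation applies (the summation skeleton determined by $\le_T$ and the multipliers $\card{T[x \lor y]}$ is non-random), a step the paper leaves implicit.
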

\begin{proof}
  First, for $\alpha > 0$, because $\val_{\alpha g} = \alpha \val_g$, the optimal trees under
  $\val_{\alpha g}$ coincide with the optimal trees under $\val_g$.
  Hence, optimisation under $\val_g$ is invariant to positive scaling of $g$.

  Second, given $(X,\omega)$, the optimal trees under $\val_g$ are invariant with respect
  to a large class of affine transformations of $\omega$. This is because
  if $\alpha,\beta \in \R$ with $\alpha > 0$,
  and if we define $\omega ' = \alpha \omega + \beta$, we get
  \[
  g'(x,y) \, = \, \omega'(x,y) - \omega'(y,x) \, = \, \alpha \omega(x,y) + \beta - \alpha \omega(y,x) - \beta
  \, = \, \alpha g(x,y).
  \]

  Now, given $\omega$ from Example~\ref{ex:random-family}, and defining
  $\omega' = \tfrac{1}{p-q}\left(\omega - q \right)$, we get $\omega' = \ind_\le$.
  That is, the optimal trees under $\val_g$ for $\omega=\ind_\le$ coincide with the optimal
  trees under $\val_g$ for $\omega$ as given in Example~\ref{ex:random-family}.
  Since the latter is the expected value of $\omega$ for a graph from the random family,
  the corollary follows.
\end{proof}

\subsection{Efficacy on planted partial orders} \label{section:g-efficacy}
In this section, we provide quantitative results on the difference in order preservation between
optimal trees and suboptimal trees for a simple class of partial orders.
We show that our model partitions partially ordered input with power comparable to how Dasgupta's model
partitions cliques.

\medskip

For the duration of this section, we fix $\card{X}=n$ even.




\subsubsection{Bounding the number of reversed pairs for planted bipartite partial orders}
\label{section:g-quantitative}
We start by defining our notion of a planted partial order.
\begin{definition} \label{def:bpp}
  Let $X$ be a set, $(A^*,B^*)$ a split of $X$ into equally large blocks, and $p,q$ real numbers
  for which $0 \le q < p \le 1$. Let $\le^*$ denote the smallest partial order on $X$
  satisfying $(x,y) \in A^* \times B^* \Rightarrow x \le^* y$, and let $\Gamma$ denote the
  stochastic family of directed graphs $G=(X,E)$ for which
  \[
  \Pr\!\big((x,y) \in E\big) =
  \begin{cases}
    p & \text{if $x \le^* y$}, \\
    q & \text{otherwise}.
  \end{cases}
  \]
  We refer to the partially ordered set $(X,\le^*)$ as
  \deft{the planted bipartite partial order defined by $X$, $A^*$, $B^*$, $p$ and $q$}.
\end{definition}

Notice that if $G=(X,E)$ is drawn from $\Gamma$, then $G$ gives rise to a relaxed order on $X$ defined
by $\omega_G = \ind_\le$.
In particular, we have $\E[\omega_G(x,y)] = \Pr\!\big((x,y) \in E\big)$,
so that if we have $g(x,y) = \omega_G(x,y)-\omega_G(y,x)$, we get
\begin{equation} \label{eqn:Eg}
\E\! g(x,y) =
\begin{cases}
  p - q & \text{if $x <^* y$}, \\
  q - p & \text{if $y <^* x$}, \\
  0 & \text{otherwise}.
\end{cases}
\end{equation}
As a consequence, any tree that splits into $(A^*,B^*)$ at the root is of maximal expected value.


\bigskip

The next definition serves to define the quality of a binary tree in terms of
order preservation. For a partially ordered set $(X,\le$), we write
$\pairs{<}$ to denote the number of comparable pairs of distinct elements under $\le$; that is
$\pairs{<} = \card{\{(x,y) \in X \times X \,|\, x < y\}}$.
\begin{definition} \label{def:delta-good}
  Let $(X,\le)$ be a partial order, and let $\delta \in \left[0,\half\right]$.
  A tree $T \in \btrees{X}$ is \deft{$\delta$-good (with respect to $\le$)} if,
  for every split $(A,B)$ in $T$, we have
  \begin{equation*}
    \frac{\ind_\le(B,A)}{\pairs{<}} \le \delta.
  \end{equation*}
  That is, the fraction of reversed pairs over the split against the total
  number of comparable pairs in $(X,\le)$ is no larger than $\delta$.
\end{definition}
Clearly, a tree is order preserving if and only if it is $0$-good. Also, for the bipartite planted
partial order, any tree that splits into $(A^*,B^*)$ at the root is $0$-good with respect to $\le^*$.

\bigskip

The first result of this section shows that if we define our value function based on a random graph
from $\Gamma$, then there is a significant difference in
expected value between an optimal tree and one that is not $\delta$-good.
\begin{lemma} \label{lemma:not-delta-good}
  Let $G=(X,E)$ be drawn from $\Gamma$, and define $g(x,y)=\ind_E(x,y)-\ind_E(y,x)$.
  If $T^*$ is a tree that splits $X$ into $(A^*,B^*)$ at the root,
  and if $T$ is a tree that is not $\delta$-good with respect to $\le^*$, then
 \[
  \E\! \val_g(T^*) > \E\! \val_g(T) + (n+2) (p-q) \delta \frac{n^2}{4}.
  \]
\end{lemma}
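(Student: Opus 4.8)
The plan is to pass to expectations via linearity and reduce everything to a comparison of the (expected) cluster graphs of $T^*$ and $T$, using Lemma~\ref{lemma:cluster-graph-sum}. By linearity of expectation, $\E\!\val_g(T) = \sum_{e} \E\nu(e)$, where for a comparable pair $\{x,y\}$ the expected edge weight $\E\nu(x,y) = \card{T[x\lor y]}\,\E\gamma_T(x,y)$ equals $+\card{T[x\lor y]}(p-q)$ if $T$ orients the pair in agreement with $\le^*$ and $-\card{T[x\lor y]}(p-q)$ if the pair is reversed, while for incomparable (same-block) pairs it is $0$ by~\eqref{eqn:Eg}. First I would record that the only comparable pairs are the $\pairs{<^*}=n^2/4$ pairs with one endpoint in each of $A^*,B^*$, and that each such pair contributes at most $n(p-q)$ to any tree, since $\card{T[x\lor y]}\le n$ and the orientation factor is at most $p-q$.

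Next I would compute $\E\!\val_g(T^*)$ exactly. Because $T^*$ splits into $(A^*,B^*)$ at the root, every comparable pair meets at the root, so $\card{T^*[x\lor y]}=n$, and every such pair is correctly oriented ($A^*$ before $B^*$ under $\le_{T^*}$, matching $\le^*$). Hence each comparable pair attains the maximal weight $n(p-q)$, giving $\E\!\val_g(T^*) = n(p-q)\cdot n^2/4$, which is simultaneously the largest value attainable by any tree.

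The core of the argument is then to bound the deficit $\E\!\val_g(T^*)-\E\!\val_g(T)$ from below. Writing this difference as a sum over comparable pairs of the terms $\bigl(n(p-q)-\E\nu(x,y)\bigr)$, every term is non-negative by the maximality observation above, so it suffices to lower-bound the contribution of a well-chosen set of pairs. Since $T$ is not $\delta$-good, Definition~\ref{def:delta-good} yields a split $S \splitarr (A,B)$ with $R := \ind_{\le^*}(B,A) > \delta\,\pairs{<^*} = \delta\,n^2/4$. Each of these $R$ reversed pairs $\{a,b\}$ (with $b\le^* a$, $a\in A$, $b\in B$) separates exactly at $S$, so $\card{T[a\lor b]}=\card{S}$ and the pair is reversed, contributing $\E\nu(a,b) = -\card{S}(p-q)$; in $T^*$ the same pair contributes $+n(p-q)$. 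Thus each such pair adds $(n+\card{S})(p-q)$ to the deficit.

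Finally, since $A$ and $B$ are non-empty we have $\card{S}\ge 2$, so
\[
\E\!\val_g(T^*)-\E\!\val_g(T) \;\ge\; R\,(n+\card{S})(p-q) \;\ge\; R\,(n+2)(p-q) \;>\; \delta\,\tfrac{n^2}{4}\,(n+2)(p-q),
\]
which is the claimed bound, the last inequality being strict because $R > \delta n^2/4$ and $(n+2)(p-q)>0$. I expect the only delicate point to be the bookkeeping that each comparable pair is charged to exactly one split, namely the split at its meet $x\lor y$, so that the $R$ reversed pairs counted by $\ind_{\le^*}(B,A)$ are genuinely distinct and all have subtree size exactly $\card{S}$; everything else is linearity of expectation combined with the maximality of $T^*$.
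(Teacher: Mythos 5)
Your proof is correct and takes essentially the same approach as the paper's: locate a split $(A,B)$ witnessing the failure of $\delta$-goodness, note that each of the more than $\delta n^2/4$ reversed pairs contributes $-\card{S}(p-q)$ in $T$ whereas it would contribute $+n(p-q)$ at the root of $T^*$, and use $\card{S}\ge 2$ to obtain the factor $(n+2)(p-q)$. Your explicit per-pair deficit decomposition, together with the observation that every other pair's deficit is non-negative because $T^*$ attains the per-pair maximum, merely spells out what the paper compresses into ``adding the magnitudes yields the statement.''
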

\begin{proof}
  Since $T$ is not $\delta$-good, there is a split $(A,B)$ in $T$ where
  $\ind_{\le^*}(B,A)/\pairs{<^*} > \delta$.
  By construction, we have $\pairs{<^*} = \card{A^*}\card{B^*} = \frac{n^2}{4}$, and since the split
  $(A,B)$ involves reversed pairs, we must have $\card{A \cup B} \ge 2$.
  Hence, the reversed pairs contribute negatively in the split with magnitude
  \[
  (p-q) \card{A \cup B} \ind_{\le^*}(B,A) >
  \card{A \cup B} (p-q) \delta \pairs{<^*} \ge 2 (p-q) \delta \pairs{<^*} = 2 (p-q) \delta \frac{n^2}{4}.
  \]

  Moreover, the pairs contributing to $\ind_{\le^*}(B,A)$ fail to contribute positively to the value
  of the root split by a magnitude
  \[
  \card{X} (p-q) \ind_{\le^*}(B,A) \ge n (p-q) \delta \pairs{<^*} = n (p-q) \delta \frac{n^2}{4}.
  \]
  Adding the magnitudes yields the statement of the lemma.
\end{proof}

The above result is on the expected value $\E\! \val_g(T)$. The next result bounds the difference
between $\E\! \val_g(T)$ and $\val_g(T)$.

\begin{lemma} \label{lemma:McDiarmid}
  Let $G=(X,E)$ be a random graph from $\Gamma$, let
  $g(x,y) = \ind_E(x,y) - \ind_E(y,x)$, and pick $\vareps \in (0,1)$.
  Then we have, for any $T \in \btrees{X}$, with probability $1-\vareps$,
  \[
  \abs{\val_g(T) - \E\! \val_g(T)}
  <
  n^2 \sqrt{2n \ln 2n + \ln \frac{2}{\vareps}}.
  \]
\end{lemma}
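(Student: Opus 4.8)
The plan is to recognise this as a bounded-differences concentration statement and apply McDiarmid's inequality to the fixed tree $T$, treating $\val_g(T)$ as a function of the independent directed-edge indicators of $G$. Since $G$ is drawn from $\Gamma$, the variables $\{\ind_E(x,y) : x \ne y\}$ are mutually independent Bernoulli random variables, and by the pairwise form~\eqref{eqn:val-g} the quantity $\val_g(T)$ is a deterministic function of them: each unordered pair $\{x,y\}$ contributes the single summand $\card{T[x \lor y]}\, g(x,y)$ with $g$ oriented according to $\le_T$, where $g(x,y) = \ind_E(x,y) - \ind_E(y,x)$.

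First I would establish the bounded differences. Fix a directed pair $(x,y)$. The indicator $\ind_E(x,y)$ occurs in $\val_g(T)$ only through the summand associated with the unordered pair $\{x,y\}$, where it carries coefficient $\pm \card{T[x \lor y]}$; flipping its value while holding all other edges fixed therefore changes $\val_g(T)$ by exactly $\card{T[x \lor y]} \le n$, since the root has $n$ leaves. Hence the per-coordinate difference bound is $c_{(x,y)} = \card{T[x \lor y]} \le n$, and summing over the $n(n-1)$ directed pairs gives $\sum_{(x,y)} c_{(x,y)}^2 \le n(n-1)\, n^2 < n^4$.

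McDiarmid's inequality then yields, for every $t > 0$,
\[
\Pr\!\big( \abs{\val_g(T) - \E\, \val_g(T)} \ge t \big) \ \le \ 2 \exp\!\left( \frac{-2 t^2}{\sum_{(x,y)} c_{(x,y)}^2} \right) \ \le \ 2 \exp\!\left( \frac{-2 t^2}{n^4} \right).
\]
Substituting $t = n^2 \sqrt{2n \ln 2n + \ln \tfrac{2}{\vareps}}$ makes the exponent equal to $-(4n \ln 2n + 2\ln\tfrac{2}{\vareps})$, so the right-hand side is at most $2(2n)^{-4n}(\vareps/2)^2 \le \vareps$ for $\vareps \in (0,1)$; taking complements gives the stated strict inequality with probability $1 - \vareps$.

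I expect the only real content to be the bounded-difference bookkeeping --- pinning down that each edge enters exactly one summand with weight $\card{T[x \lor y]}$, so that $c_{(x,y)} \le n$ --- together with the coarse estimate $\sum c_{(x,y)}^2 < n^4$; the remaining calculation is routine. The large slack built into the bound (the exponent beats $\vareps$ by a factor of order $(2n)^{-4n}$) is not accidental: it is exactly what is needed to absorb a union bound over all of $\btrees{X}$, since a tree is determined by its induced leaf order and its plane shape, whence $\card{\btrees{X}} \le n!\, C_{n-1} \le (4n)^n$ and $\ln \card{\btrees{X}} \le 2n \ln 2n$. This is where the $2n \ln 2n$ term originates, and it is what upgrades the per-tree estimate into a bound holding simultaneously for every $T \in \btrees{X}$.
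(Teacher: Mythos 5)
Your proof is correct and follows essentially the same route as the paper's, which likewise reduces the statement to McDiarmid's inequality (the paper simply defers to Dasgupta's Lemma~8 with a modified difference bound): the only cosmetic difference is that you take the $n(n-1)$ independent directed-edge indicators as coordinates (difference bound $\card{T[x \lor y]} \le n$), whereas the paper takes the $\binom{n}{2}$ pairwise values $g(x,y) \in [-1,1]$ (difference bound $2n$), and both choices yield the required exponent. Your closing remark also correctly identifies the $2n \ln 2n$ term as union-bound slack over $\btrees{X}$; in the paper this is inherited implicitly from the uniform-over-trees form of Dasgupta's lemma, and it is precisely what licenses applying the bound to the data-dependent optimal tree in Theorem~\ref{thm:splitting-power}.
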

\begin{proof}
  The proof is identical to that of \citep[Lemma 8]{Dasgupta2016} with one modification:
  If, for every pair $x <_T y$ of elements in $X$, we consider the functions $g(x,y)$
  to be separate independent random variables,
  then the magnitude of change of $\val_g(T)$ is bounded by
  $2n$ whenever only one of these random variables are allowed to change. This because $g \in [-1,1]$.
  The result now follows from an application of McDiarmid's inequality and Dasgupta's proof.
\end{proof}

We now combine the two above lemmas, showing that for a random graph drawn from the family
corresponding to the bipartite planted partial order, that a tree that is optimal with respect
to that graph has an upper bound
on the fraction of reversed pairs given by $O\big(\sqrt{(\log n)/n}\big)$.

\begin{theorem} \label{thm:splitting-power}
  Let $G=(X,E)$ be a random graph from $\Gamma$, let $g(x,y) = \ind_E(x,y) - \ind_E(y,x)$, and
  pick $\vareps \in (0,1)$.
  If $T$ is an optimal tree for $\val_g$, then $T$ is $\delta$-good with respect to $\le^*$ with
  probability $1-\vareps$ for
  \[
  \delta = \frac{8}{p-q} \sqrt{
    \frac{2 \ln 2n}{n} + \frac{1}{n^2} \ln \frac{2}{\vareps}
  }.
  \]
\end{theorem}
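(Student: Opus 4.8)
The plan is to argue by contradiction against optimality, using the two preceding lemmas as a large-deviation sandwich. I fix a reference tree $T^*$ that splits $X$ into $(A^*,B^*)$ at the root; such a tree is $0$-good and of maximal expected value (as noted after~\eqref{eqn:Eg}). I then suppose that the optimal tree $T$ fails to be $\delta$-good for the stated $\delta$, and show that on a high-probability event this forces $\val_g(T^*) > \val_g(T)$, contradicting optimality of $T$.

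First I would invoke Lemma~\ref{lemma:not-delta-good}: since $T$ is not $\delta$-good, the expected values are separated by
\[
\E\!\val_g(T^*) - \E\!\val_g(T) > (n+2)(p-q)\,\delta\,\tfrac{n^2}{4}.
\]
Next I would pass from expectations to realised values using Lemma~\ref{lemma:McDiarmid}. The point requiring care is that $T$ is data-dependent (it is the optimiser for the random graph $G$), so a single-tree concentration bound cannot be applied to it directly. Here the generous $2n\ln 2n$ term in Lemma~\ref{lemma:McDiarmid} is precisely a union-bound budget over all of $\btrees{X}$, so that with probability $1-\vareps$ the deviation bound $\abs{\val_g(S)-\E\!\val_g(S)} < n^2\sqrt{2n\ln 2n + \ln(2/\vareps)}$ holds simultaneously for every $S \in \btrees{X}$, and in particular for both $T$ and $T^*$ at once.

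On this event I would combine the two estimates,
\[
\val_g(T^*) - \val_g(T) \ \ge\ \E\!\val_g(T^*) - \E\!\val_g(T) - 2n^2\sqrt{2n\ln 2n + \ln\tfrac{2}{\vareps}}.
\]
Substituting the chosen $\delta = \tfrac{8}{p-q}\sqrt{\tfrac{2\ln 2n}{n}+\tfrac{1}{n^2}\ln\tfrac{2}{\vareps}} = \tfrac{8}{(p-q)n}\sqrt{2n\ln 2n + \ln\tfrac{2}{\vareps}}$, the first gap becomes $2n(n+2)\sqrt{2n\ln2n+\ln(2/\vareps)}$, which strictly exceeds the deviation term $2n^2\sqrt{\cdots}$ since $n+2>n$. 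Hence $\val_g(T^*) - \val_g(T) > 0$, the desired contradiction; so on the probability-$(1-\vareps)$ event the optimal tree must be $\delta$-good, which is the claim.

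The main obstacle, and the step deserving the most care, is the passage from a per-tree concentration inequality to a statement about the data-dependent optimal tree. I would make explicit that Lemma~\ref{lemma:McDiarmid} is used as a uniform-over-trees bound, its $2n\ln 2n$ term absorbing the logarithm of the number of oriented binary trees, so that the single failure probability $\vareps$ covers both $T$ and $T^*$ and no extra union-bound factor leaks into the final $\delta$. The remaining work — substituting $\delta$ and verifying the arithmetic inequality $2n(n+2) > 2n^2$ — is routine.
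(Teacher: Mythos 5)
Your proposal is correct and takes essentially the same route as the paper: the same sandwich of Lemma~\ref{lemma:not-delta-good} (lower bound on the expected gap) against Lemma~\ref{lemma:McDiarmid} (concentration), with identical arithmetic via $\delta = \tfrac{8}{(p-q)n}\sqrt{2n\ln 2n + \ln(2/\vareps)}$, merely organised as an explicit contradiction of optimality ($\val_g(T^*) > \val_g(T)$) rather than the paper's rearrangement yielding $\delta < \tfrac{8}{p-q}\sqrt{\tfrac{2\ln 2n}{n}+\tfrac{1}{n^2}\ln\tfrac{2}{\vareps}}$. Your explicit observation that the $2n\ln 2n$ term is a union-bound budget over $\btrees{X}$, so that the concentration holds uniformly and may legitimately be applied to the data-dependent optimiser $T$, is precisely the reading the paper's proof uses implicitly when it charges a single $1-\vareps$ for both $T$ and $T^*$ --- a point you handle more carefully than the paper does.
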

\begin{proof}
  Let $T^*$ be a tree that splits into $(A^*,B^*)$ at the root.
  Since $\val_g(T) \ge \val_g(T^*)$, we can deduce that
  \begin{align*}
    \E\! \val_g(T^*) - \E\! \val_g(T)
    &=   \E\! \val_g(T^*) - \val_g(T^*) + \val_g(T^*) - \E\! \val_g(T) \\
    &\le \E\! \val_g(T^*) - \val_g(T^*) + \val_g(T) - \E\! \val_g(T) \\
    &\le \abs{\E\! \val_g(T^*) - \val_g(T^*)} + \abs{\val_g(T) - \E\! \val_g(T)} \\
    &\le 2 n^2 \sqrt{2n \ln 2n + \tfrac{2}{\vareps}}
    \intertext{with probability $1-\vareps$. Hence, we have}
    \E\! \val_g(T^*) &\le \E\! \val_g(T) + 2 n^2 \sqrt{2n \ln 2n + \tfrac{2}{\vareps}}
  \end{align*}
  with probability $1-\vareps$ too.

  \medskip

  Now, if $T$ is not $\delta$-good, Lemma~\ref{lemma:not-delta-good} yields
  \[
  \E\! \val_g(T) + (n+2) (p-q) \delta \pairs{<^*}
  \le
   \E\! \val_g(T) + 2 n^2 \sqrt{2n \ln 2n + \tfrac{2}{\vareps}},
  \]
  giving us
  \[
  \delta
  \le
  \frac{
    2 n^2 \sqrt{2n \ln 2n + \tfrac{2}{\vareps}}
  }{
    (n+2) (p-q) \, \pairs{<^*}
  }
  <
  \frac{
    2 n^2 \sqrt{2n \ln 2n + \tfrac{2}{\vareps}}
  }{
    (p-q) \tfrac{n^3}{4}
  }
  =
  \frac{8}{p-q} \sqrt{
    \frac{2 \ln 2n}{n} + \frac{1}{n^2} \ln \frac{2}{\vareps}
  }.
  \]
\end{proof}

\paragraph*{On the number of comparable pairs in the bipartite planted partial order.}
By construction, the number of comparable pairs between $A^*$ and $B^*$ in the planted partial order
is $\frac{n^2}{4}$, which is as high as it can be. An interesting question is to how many pairs we need
in order to be able to separate the two blocks.
Re-tracing the steps in the above calculations while letting $\pairs{<^*}$
denote the number of comparable pairs between the blocks, gives us a fraction of reversed
pairs of $O\big((n^2/\pairs{<^*})\sqrt{\log(n)/n}\big)$.
Indeed, if we choose to have as few comparable pairs as possible while still separating the two blocks,
we can manage with $\pairs{<^*}=\frac{n}{2}$.
This yields an asymptotic fraction of $O\big(\sqrt{n \log n}\big)$ reversed pairs.

For the fraction of reversed pairs to diminish as $n \to \infty$,
we need $\pairs{<^*}= \Omega(n^2)$\footnote{Where $\Omega$ is according to \citet{KnuthBigOmicron}}.
This can be achieved, for example, by having all points in a fixed fraction $\alpha$ of the
elements of $A^*$ being related to all points of an equally large fraction of $B^*$,
while the remaining pairs of points are related to exactly one point in the other set.
This yields $$\pairs{<^*}=\alpha^2 n^2/4 + (1-\alpha)n/2 = \Omega(n^2)$$ for any choice
of $\alpha > 0$.

\subsection{A worked example -- migration between states} \label{section:migration}
This section presents a worked example, analysing migration flow between states in the USA.
The purpose of the example is to illustrate how order preserving hierarchical
clustering with $\val_g$ behaves on data that is not partially ordered, but has a directed nature.
As the example shows, the produced hierarchical clustering is similar to what can be obtained by
applying other methods already in use for hierarchical clustering, such as \MaxDiCut{} \citep{FeigeGoemans1995}
and \DirSparsestCut{} \citep{ChuzhoyKhanna2006}.
However, for these methods, it is not obvious how to combine them with a similarity.

The states being subject to this analysis, and the migration data, is presented in
Table~\ref{table:migration-data}. The values are obtained from \url{http://www.census.gov}, and represent
migration data between US states in the year $2011$.

\begin{table}[htpb]
  \begin{center}
    \begingroup
    \small
    \begin{tabular}{r@{$:\ $}l@{$\quad$}l|ccccccc}
      \multicolumn{3}{c|}{} & 1 & 2 & 3 & 4 & 5 & 6 & 7 \\
      \cline{3-10}
      1 & Arizona & $\phantom{,}$1 & $-$ & $0.064$ & $0.006$ & $0.018$ & $0.014$ & $0.012$ & $0.022$ \\
      2 & California & $\phantom{,}$2 & $0.089$ & $-$ & $0.016$ & $0.072$ & $0.061$ & $0.033$ & $0.069$ \\
      3 & Idaho & $\phantom{,}$3 & $0.004$ & $0.009$ & $-$ & $0.007$ & $0.011$ & $0.014$ & $0.020$ \\
      4 & Nevada & $\phantom{,}$4 & $0.016$ & $0.065$ & $0.006$ & $-$ & $0.013$ & $0.008$ & $0.009$ \\
      5 & Oregon & $\phantom{,}$5 & $0.008$ & $0.033$ & $0.013$ & $0.003$ & $-$ & $0.004$ & $0.052$ \\
      6 & Utah & $\phantom{,}$6 & $0.019$ & $0.016$ & $0.011$ & $0.006$ & $0.006$ & $-$ & $0.009$ \\
      7 & Washington & $\phantom{,}$7 & $0.025$ & $0.065$ & $0.016$ & $0.008$ & $0.039$ & $0.009$ & $-$ \\
    \end{tabular}
    \endgroup
    \caption{Migration data normalised so that the total migration sums to one. The table displays the
    migration magnitudes in ``from row to column''-fashion.}
    \label{table:migration-data}
  \end{center}
\end{table}

\begin{figure}[htpb]
  \begin{center}
    \begin{tabular}{cc}
      \scalebox{0.7}{
        \newcommand{\brk}[1]{
          \begingroup
          \setlength{\arraycolsep}{1pt}
          \boldmath{$\left\{ \begin{array}{c} #1 \end{array} \right\}$}
          \endgroup
        }
        \begin{tikzpicture}[yscale=1.8,xscale=1.3]
          \node (n0) at (2.5,6) {\brk{\mathrm{Az,Ca,Id,Nv} \\ \mathrm{Or,Ut,Wa}}};
          \node (n11) at (1,5) {\brk{\mathrm{Ca}}};
          \node (n12) at (4,5) {\brk{\mathrm{Az,Id,Nv} \\ \mathrm{Or,Ut,Wa}}};
          \node (n21) at (2.5,4) {\brk{\mathrm{Az,Id,Nv} \\ \mathrm{Or,Ut}}};
          \node (n22) at (5.5,4) {\brk{\mathrm{Wa}}};
          \node (n31) at (1,3) {\brk{\mathrm{Az,Nv,Ut}}};
          \node (n32) at (4,3) {\brk{\mathrm{Id,Or}}};
          \node (n41) at (0.25,2) {\brk{\mathrm{Ut}}};
          \node (n42) at (1.75,2) {\brk{\mathrm{Az,Nv}}};
          \node (n43) at (3.25,2) {\brk{\mathrm{Or}}};
          \node (n44) at (4.75,2) {\brk{\mathrm{Id}}};
          \node (n51) at (1,1) {\brk{\mathrm{Az}}};
          \node (n52) at (2.5,1) {\brk{\mathrm{Nv}}};

          \draw [line width=1.5pt] (n0)  -- (n11);
          \draw [line width=1.5pt] (n0)  -- (n12) -- (n22);
          \draw [line width=1.5pt] (n12) -- (n21) -- (n31) -- (n41);
          \draw [line width=1.5pt] (n21) -- (n32) -- (n43);
          \draw [line width=1.5pt] (n32) -- (n44);
          \draw [line width=1.5pt] (n31) -- (n42) -- (n51);
          \draw [line width=1.5pt] (n42) -- (n52);
        \end{tikzpicture}
      }
      &
      \scalebox{0.5}{
        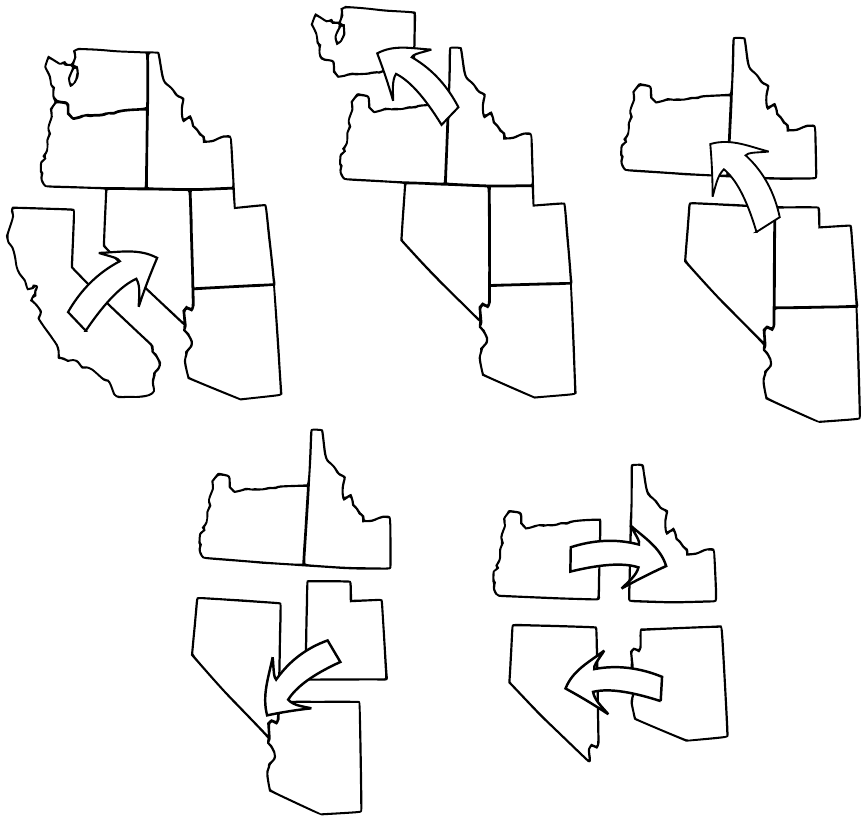
      }
    \end{tabular}
    \caption{Figure showing the result of the clustering of the migration data. The binary tree is
      displayed to the left, and the sequence of splits of the states are shown to the right. The arrows
      on the splits indicate the direction of net migration.}
    \label{fig:migration}
  \end{center}
\end{figure}

We let the function $\omega$ represent the flow of people moving from one state to another in the course of one
year. The hierarchical clustering proceeds by splitting the set of states in two, providing the two blocks
of states having maximal net flow of migrants from one block to the other. Then these blocks will be split
accordingly, and so on, until only single states remain.

The result of the hierarchical clustering of this data is presented in~Figure~\ref{fig:migration}.
We see that the first state to be split off is California, indicating that
the largest flow of migration is \emph{from} California \emph{to} all the other states.
Second, Washington is split
off from the remaining states, and the orientation (arrow) tells us that the net flow of migration is from
Arizona, Idaho, Oregon, Nevada and Utah to Washington. Third, Oregon and Idaho is split off from
Nevada, Utah and Arizona, with net migration from the latter group to the former, and so on.
The linear order on the states induced by the binary tree is
\[
\mathrm{Ca} \ \le_T \ \mathrm{Ut} \ \le_T \ \mathrm{Az} \ \le_T \ \mathrm{Nv} \ \le_T \ \mathrm{Or} \ \le_T \
\mathrm{Id} \ \le_T \ \mathrm{Wa},
\]
indicating the general direction of net migration flow.

\section{Properties of $f=s_d+g$} \label{section:f}

We now turn to study the full objective function $f$ as defined in~\eqref{eqn:f}; that is, the
sum of the dissimilarity $s_d$ and the antisymmetrisation $g$ of the relaxed order
relation.

We start by looking at an example, showing how we can use order preserving hierarchical
clustering to recover the ancestral tree of former U.S.\ president John F.\ Kennedy.
The example illustrates how the dissimilarity and the order relation combine to
successfully recover the ancestral tree in a situation where neither objective
could have managed alone.

The example also shows us that the two objectives must be balanced in order to
achieve both good clustering and good order preservation. This is the topic of the second
part of the section, where we consider the problem of finding the Pareto optimal trees in the context
of bi-objective optimisation.
We show that due to the linearity of the objectives, we can recover the entire Pareto
front by studying the convex combinations of $s_d$ and $g$.

\subsection{Analysing the JFK ancestral tree}
\label{section:JFK}
We will now illustrate how the dissimilarity and the order relation combine to successfully
recover the  ancestral tree of John F. Kennedy in a situation where neither of the objectives can
do this alone.
The idea is to cluster the individuals in the tree so that closely related individuals are grouped
together, while at the same time keeping the generations apart. The dissimilarity is derived from
name differences, and the order relation is based on the ancestor-descendant relations among
the individuals.
The ancestral tree is depicted in Figure~\ref{fig:kennedys}, and the dissimilarities are
listed in Table~\ref{tab:Kennedy-dissimilarities}.

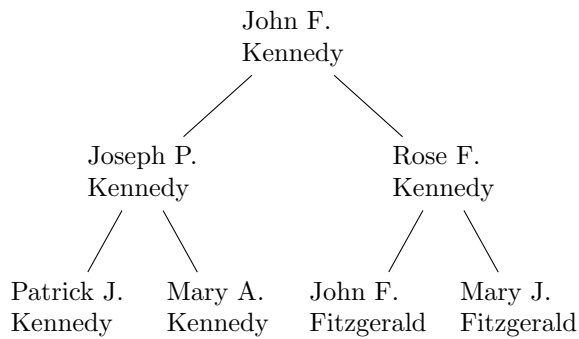
\begin{figure}[tphp]
  \begin{center}
    \begin{tikzpicture}[yscale=1.8]
      \node (n00) at (5,5) {\pbox[c]{2cm}{John F.\\ Kennedy}};
      \node (n10) at (3,4) {\pbox{4cm}{Joseph P.\\ Kennedy}};
      \node (n11) at (7,4) {\pbox{4cm}{Rose F.\\ Kennedy}};
      \node (n20) at (2,3) {\pbox{4cm}{Patrick J.\\ Kennedy}};
      \node (n21) at (4,3) {\pbox{4cm}{Mary A.\\ Kennedy}};
      \node (n22) at (6,3) {\pbox{4cm}{John F.\\ Fitzgerald}};
      \node (n23) at (8,3) {\pbox{4cm}{Mary J.\\ Fitzgerald}};
      \draw (n00) -- (n10) -- (n20);
      \draw (n10) -- (n21);
      \draw (n00) -- (n11) -- (n22);
      \draw (n11) -- (n23);
    \end{tikzpicture}
  \end{center}
  \caption{John F. Kennedy's ancestral tree, two generations back.}
  \label{fig:kennedys}
\end{figure}

\begin{table}[htpb]
  \begin{center}
    \begin{tabular}{c@{\hskip 1cm}c}
      \begingroup
      \setlength{\tabcolsep}{1pt}
      \begin{tabular}{r@{$:\ $}lll}
        1 & John & F. & Kennedy \\
        2 & Joseph &P.& Kennedy \\
        3 & Rose &F.& Kennedy \\
        4 & Patrick &J.& Kennedy \\
        5 & Mary &A.& Kennedy \\
        6 & John &F.& Fitzgerald \\
        7 & Mary &J.& Fitzgerald
      \end{tabular}
      \endgroup
      &
      $%
      \begin{array}{c|cccccc}
        & 2 & 3 & 4 & 5 & 6 & 7 \\
        \hline
        1 & 0.29 & 0.31 & 0.53 & 0.53 & 0.50 & 0.63  \\
        2 && 0.40 & 0.50 & 0.59 & 0.62 & 0.73  \\
        3 &&& 0.61 & 0.53 & 0.65 & 0.70  \\
        4 &&&& 0.44 & 0.50 & 0.47  \\
        5 &&&&& 0.65 & 0.56  \\
        6 &&&&&& 0.28  \\
      \end{array}
      $%
    \end{tabular}
    \caption{Dissimilarities of names in the ancestral tree.
      On the left are mappings from individuals to indices, and on the right
      is the table of name dissimilarities, rounded to two decimal places. The dissimilarities
      correspond to the Jaccard distances between the names.}
    \label{tab:Kennedy-dissimilarities}
  \end{center}
\end{table}

Notice that, in the ancestral tree, the Kennedy name is present throughout, and in particular across
the generations, yielding high similarities between descendant- and ancestor names.
Due to this, attempting to cluster using only the dissimilarity will cause descendants and
ancestors to be placed in the same cluster.

On the other hand, defining $\omega$ so that $\omega(x,y)=1$ if and only if $x$ is a descendant of
$y$ (and zero otherwise), this will keep generations apart, but $w$ holds no information about
how to identify groups of individuals within the same generation.

If we combine the two objectives as in $f = s_d + g$,
the corresponding optimal hierarchical clustering is presented in Figure~\ref{fig:Kennedy-results}.
We see that the generations are nicely split apart, and the grandparents are nicely grouped together,
as we wished for.
Hence, by combining the dissimilarity and the order relation, the result turns out correctly.

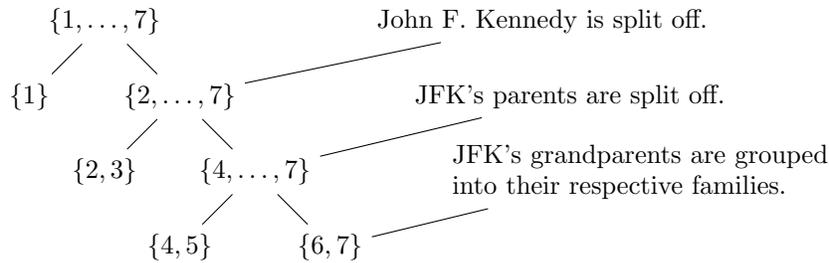
\begin{figure}[htpb]
  \begin{center}
    \begin{tikzpicture}
      \node (n00) at (2,4) {$\{1,\ldots,7\}$};
      \node (n10) at (1,3) {$\{1\}$};
      \node (n11) at (3,3) {$\{2,\ldots,7\}$};
      \node (n20) at (2,2) {$\{2,3\}$};
      \node (n21) at (4,2) {$\{4,\ldots,7\}$};
      \node (n30) at (3,1) {$\{4,5\}$};
      \node (n31) at (5,1) {$\{6,7\}$};
      \draw (n00) -- (n11) -- (n21) -- (n31);
      \draw (n00) -- (n10);
      \draw (n11) -- (n20);
      \draw (n21) -- (n30);
      \node[anchor=west] (s1) at (5.5,4) {\pbox{7cm}{John F. Kennedy is split off.}};
      \draw (s1) -- (n11);
      \node[anchor=west] (s2) at (6.0,3) {\pbox{7cm}{JFK's parents are split off.}};
      \draw (s2) -- (n21);
      \node[anchor=west] (s3) at (6.5,2) {\pbox{5cm}{JFK's grandparents are grouped into
          their respective families.}};
      \draw (s3) -- (n31);
    \end{tikzpicture}
    \caption{The optimal hierarchical clustering of the Kennedy family tree. We have left
      out the final splits into leaf nodes.}
    \label{fig:Kennedy-results}
  \end{center}
\end{figure}

Looking back at our motivating use case (Section~\ref{section:motivation}), the database of machinery
exhibits the exact same tendency: The high similarity between parts and sub-parts makes clustering difficult,
but we can mitigate this problem by taking the part-of relation into account.

\subsection{Balancing clustering against order preservation}
In the above example, the dissimilarity $s_d$ and the order relation $\omega$
are, in a sense, competing: the dissimilarity wants to place similar elements together, in spite
of them being in different generations, and the order relation tries to keep the generations apart.
In the field of operations research, this is termed a \emph{bi-objective optimisation problem},
having two distinct objectives where ideally both shall be optimised for.
Multi-objective optimisation is a thoroughly
studied area of research, and several methods exists to approach this class of
problems. See, for example, the survey by \citet{MarlerArora2004} for an overview.

The material presented in this section are known results within multiobjective optimisation.
We still choose to write it out, for the sake of completeness and also because it reveals
an efficient approach to explore different optimal solutions.

We have chosen to focus on \emph{Pareto optimality} \citep{Zadeh1963} in the context
of bi-objective maximisation. In order to do that, for an ordered similarity space $(X,s,\omega)$,
we decompose $\val_f$ into its sum components $\val_{s_d}$ and $\val_g$:
\begin{definition} \label{def:pareto}
  Let $(X,s,\omega)$ be an ordered similarity space. For trees $T,T' \in \btrees{X}$ we say that
  $T$ is \deft{Pareto dominated} by $T'$ if
  \begin{description}
    \item[$1.$] $\val_\theta(T') \ge \val_\theta(T)$ for $\theta \in \{s_d,g\}$, and
    \item[$2.$] for at least one objective $\val_\theta$ we have $\val_\theta(T') > \val_\theta(T)$.
  \end{description}
  A solution is \deft{Pareto optimal} if there are no solutions dominating it, and the family of all
  Pareto optimal solutions is referred to as the \deft{Pareto front}.
\end{definition}
Thus, a Pareto optimal solution has the property that for any other candidate solution,
at least one objective will deteriorate.
An illustration of the Pareto front is given in Figure~\ref{fig:pareto-front}.
\begin{figure}[htpb]
  \begin{center}
    \newcommand{\paretopt}[2][nx]{\node (#1) at #2 [square,draw] {};}
    \definecolor{shadingcol}{rgb}{0.5, 0.5, 0.5}
    \begin{tikzpicture}[scale=0.6,
        square/.style={regular polygon,
          regular polygon sides=4,minimum size=4pt,inner sep=0pt},]
      \draw[<->] (0.3,7.5) node (yaxis) [above] {$\val_g$} |- (10,0.5) node (xaxis) [right] {$\val_{s_d}$};
      \paretopt{(5,5)} \paretopt{(2,5.5)} \paretopt{(3,4)}   \paretopt{(2,1)} \paretopt{(5,1.7)}
      \paretopt{(1,3)} \paretopt{(4.8,4.6)}   \paretopt{(7,2.7)} \paretopt{(4,3.5)} \paretopt{(5.1,2)}
      \paretopt[p1]{(1,7)} \paretopt[p2]{(3.1,6.2)} \paretopt[p3]{(6,6)}
      \paretopt[p4]{(6.5,4) [fill=black]}
      \paretopt[p5]{(8,2)} \paretopt[p6]{(9,1)};
      \draw (p1) -- (p2) -- (p3) -- (p4) -- (p5) -- (p6);
      \draw[dashed] (p1) -- (p3) -- (p6);
      \path[
        shade,
        left color=black,
        right color=white,
        opacity=.5,
        shading angle=135,
        middle color=white,
      ]
      (p4) -- ++(3.5,0) -- ++(-3.5,3.5) -- cycle;
    \end{tikzpicture}
    \caption{The Pareto front of a bi-objective optimisation problem with objectives $(\val_{s_d},\val_g)$.
      The squares are the different candidate solutions, and the solid line connects the candidates at the
      Pareto front. The shaded wedge illustrates why the black square belongs on the Pareto front: there are no
      solutions above or to the right of this point. The dashed line indicates the convex hull of the
      Pareto front.}
    \label{fig:pareto-front}
  \end{center}
\end{figure}

To identify the Pareto front for a bi-objective optimisation problem,
the naive approach is to identify the optimal solutions
for all linear combinations $\gamma \val_{s_d} + \delta \val_g$ for
$\gamma,\delta > 0$.\footnote{More efficient methods exist, such as \citep{KimWeck2005}.}
However, as the following definition and lemma shows, in our case, we can limit the study to
convex combinations of the objectives. We start by introducing a new value
function $\val_\alpha$, which is a convex combination of the two objectives.

\begin{definition} \label{def:val-alpha}
  Given $(X,s,\omega)$ and $\alpha \in [0,1]$, define $\val_\alpha : \btrees{X} \to \R$ as
\begin{equation} \label{eqn:val-alpha}
\val_\alpha(T) \ = \ \sum_{x \le_T y} \card{T[x \lor y]} \big[ \alpha s_d(x,y) + (1-\alpha) g(x,y) \big].
\end{equation}
\end{definition}
Notice that due to the linearity of $\val_{s_d}$ and $\val_g$, we have
\[
\val_\alpha(T) \ = \ \alpha \!\, \val_{s_d}(T) + (1-\alpha) \val_g(T).
\]

Optimising a linear combination $\gamma \val_{s_d} + \delta \val_g$ can be replaced by
optimising $\val_\alpha$ for a suitable value of $\alpha$:
\begin{lemma} \label{lemma:convex-combos}
  For $\gamma,\delta > 0$, let $\alpha = \frac{\gamma}{\gamma + \delta}$.
  Then $T \in \btrees{X}$ maximises $\val_\alpha$ if and only if $T$ also maximises
  $\gamma \val_{s_d} + \delta \val_g$.
\end{lemma}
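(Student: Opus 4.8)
The plan is to reduce the statement to the elementary fact that positive rescaling of an objective function does not change its set of maximisers. Everything hinges on the linear decomposition noted immediately before the lemma, namely $\val_\alpha(T) = \alpha\,\val_{s_d}(T) + (1-\alpha)\val_g(T)$, which itself follows from the linearity of the two component value functions in their respective weights.

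First I would substitute $\alpha = \frac{\gamma}{\gamma+\delta}$, so that $1-\alpha = \frac{\delta}{\gamma+\delta}$, into the decomposition to obtain
\[
\val_\alpha(T) \ = \ \frac{\gamma}{\gamma+\delta}\val_{s_d}(T) + \frac{\delta}{\gamma+\delta}\val_g(T) \ = \ \frac{1}{\gamma+\delta}\big(\gamma\,\val_{s_d}(T) + \delta\,\val_g(T)\big).
\]
Since $\gamma,\delta > 0$, the factor $\frac{1}{\gamma+\delta}$ is a fixed positive constant independent of $T$. Thus $\val_\alpha$ and $\gamma\,\val_{s_d} + \delta\,\val_g$ differ only by multiplication by this positive scalar.

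The final step is to invoke that, for any real-valued function on $\btrees{X}$, multiplying by a positive constant preserves the location of maxima: a tree $T$ maximises $\val_\alpha$ over $\btrees{X}$ if and only if it maximises $(\gamma+\delta)\val_\alpha = \gamma\,\val_{s_d} + \delta\,\val_g$. Since $X$ is finite, $\btrees{X}$ is finite and both maxima are attained, so this equivalence is literal rather than merely a statement about suprema. I do not anticipate any genuine obstacle here; the only point requiring a moment's care is recording that $\alpha \in [0,1]$ so that $\val_\alpha$ is indeed the convex combination introduced in Definition~\ref{def:val-alpha}, which holds automatically because $\gamma,\delta > 0$ forces $0 < \alpha < 1$.
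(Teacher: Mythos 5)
Your proposal is correct and follows essentially the same route as the paper's proof: both rewrite $\gamma\,\val_{s_d} + \delta\,\val_g$ as $(\gamma+\delta)\bigl(\alpha\,\val_{s_d} + (1-\alpha)\val_g\bigr) = (\gamma+\delta)\val_\alpha$ via the linear decomposition of $\val_\alpha$, and then invoke invariance of maximisers under positive scaling. Your added remarks that $0 < \alpha < 1$ and that maxima are attained on the finite set $\btrees{X}$ are harmless refinements of the same argument.
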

\begin{proof}
  For a fixed $\beta > 0$, a tree that maximises $\val_{s_d} + \val_g$ also maximises
  $\beta (\val_{s_d} + \val_g)$.
  And since
  \begin{gather*}
    \frac{1}{\gamma + \delta}\big( \gamma \val_{s_d} + \delta \val_g \big)
    \ = \
    \frac{\gamma}{\gamma + \delta} \val_{s_d} + \frac{\delta}{\gamma + \delta} \val_g
    \ = \
    \alpha \val_{s_d} + (1-\alpha) \val_g,
  \end{gather*}
  and since $0 \le \frac{\gamma}{\gamma + \delta}, \frac{\delta}{\gamma + \delta} \le 1$
  and $\frac{\gamma}{\gamma + \delta} + \frac{\delta}{\gamma + \delta} = 1$, the lemma holds.
\end{proof}

According to the above Lemma, the Pareto front is convex in the sense that all Pareto optimal
solutions are located on the convex hull indicated by the dashed line in Figure~\ref{fig:pareto-front}.
Also, this means that if $0 \le \beta \le \beta' \le 1$ and $\val_\beta$ and $\val_{\beta'}$ have
the same optimal trees, then optimisation of $\val_\alpha$ is constant for $\alpha \in [\beta,\beta']$.
This is useful, since it gives a well defined basis on which to apply, for example, binary search
to explore the Pareto front.

\bigskip

The objective $\val_\alpha$ has the property that $\val_{\alpha=0} = \val_g$, optimising only with
respect to the order relation, and $\val_{\alpha=1} = \val_{s_d}$, optimising only with respect to
the similarity. Part of the optimisation problem is thus to find an $\alpha \in [0,1]$ providing
the best balance between the objectives given the current context.
For multi-objective optimisation in general, there is no canonical best solution on the Pareto front,
and therefore no canonically best $\alpha$ for $\val_\alpha$.
A significant amount of research has focused on the topic of aiding the domain expert in identifying the
best solution; see Miettinen's book \citep[pp.~131--213]{Miettinen1998} for an overview.

\bigskip

While we believe it is possible to produce a theorem along the lines of Theorem~\ref{thm:splitting-power}
also in the presence of both a similarity and an order relation,
we consider this to be a major task, and postpone this for future research.

\subsection{A note on idempotency}
A function $h : X \to X$ is \deft{idempotent} if $h \circ h = h$.
\citet{JardineSibson1971} define a clustering method as \emph{appropriate}
if it is idempotent. This is considered to be a key feature of any clustering methodology.
\citet{CohenAddadEtAl2019} show that Dasgupta's model is appropriate, and the below theorem
shows that this still holds for order preserving hierarchical clustering. An important difference is,
of course, that we must pass along the induced order relation for the second invocation.
Notice that for a normalised ultrametric $u_T$ on $X$, the function $1-u_T$ is a similarity on $X$.

\begin{theorem}
  Assume that $(X,s,\omega)$ has $T$ as an optimal tree under $\val_f$, and let $u_T$ be the normalised
  ultrametric corresponding to $T$. Then $(X,1-u_T,\le_T)$ has $T$ as an optimal tree under $\val_f$.
  That is; $\val_f$ is appropriate.
\end{theorem}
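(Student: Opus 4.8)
The plan is to exploit the additivity of the objective. For the second invocation the similarity is $s' = 1 - u_T$ and the relaxed order is $\le_T$, which we read as its indicator $\omega' = \ind_{\le_T}$; the new split value function is then $f' = s'_d + g'$ with $s'_d = u_T$ and $g'(x,y) = \ind_{\le_T}(x,y) - \ind_{\le_T}(y,x)$. By linearity of $\val$ in the split value function, $\val_{f'} = \val_{s'_d} + \val_{g'}$. Since this is a sum of two value functions, it suffices to show that $T$ maximises each of them separately, for then $T$ maximises the sum. I would therefore treat an order part and a similarity part in turn.

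For the order part I would first record a shape-independence identity: for every $T' \in \btrees{X}$, the quantity $\sum_{\{x,y\}} \card{T'[x \lor y]}$ does not depend on the shape of $T'$. Grouping pairs by their lowest common ancestor rewrites this sum as $\sum \card{S}\card{A}\card{B}$ over the splits $S \splitarr (A,B)$ of $T'$, and a one-line induction on the root split $\card{A}+\card{B}=n$ shows it equals the constant $\tfrac{n(n^2-1)}{3}$. Because $\le_T$ is linear, in the sum defining $\val_{g'}(T')$ --- oriented by $\le_{T'}$ --- each pair $\{x,y\}$ contributes $+\card{T'[x\lor y]}$ when $T'$ and $T$ agree on its order and $-\card{T'[x\lor y]}$ when they disagree; hence $\val_{g'}(T') \le \sum_{\{x,y\}} \card{T'[x\lor y]} = \tfrac{n(n^2-1)}{3}$ for every $T'$, with equality exactly when $T'$ carries the same leaf order as $T$. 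Since $T$ agrees with itself on every pair, $\val_{g'}(T) = \tfrac{n(n^2-1)}{3}$, so $T$ attains the maximum. As a sanity check, Lemma~\ref{lemma:preserved-leaf-order} confirms that $T$ is order preserving with respect to $\le_T$.

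For the similarity part, note that $\val_{s'_d}(T') = \sum_{\{x,y\}} \card{T'[x\lor y]}\, u_T(x,y)$ is exactly Dasgupta's value function applied to the similarity dual $u_T$ obtained from $T$'s own ultrametric. This is maximised by $T$ by the appropriateness (idempotency) of Dasgupta's model established by \citet{CohenAddadEtAl2019}: since $\Ultra_X$ is injective \citep{RoyPokutta2017}, the unique tree whose ultrametric is $u_T$ is $T$, so feeding $u_T$ back to the model returns $T$. I would invoke this directly, noting only that the positive normalising factor $1/(n-1)$ hidden in $u_T$ cannot change which single tree maximises $\val_{s'_d}$.

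Combining the two parts, for every $T' \in \btrees{X}$ we have $\val_{f'}(T') = \val_{s'_d}(T') + \val_{g'}(T') \le \val_{s'_d}(T) + \val_{g'}(T) = \val_{f'}(T)$, so $T$ is optimal under $\val_{f'}$ and $\val_f$ is appropriate. I expect the order part to be the main obstacle: the crucial --- and slightly surprising --- point is that the combined objective splits into two pieces each \emph{separately} optimised by $T$, and that for the induced \emph{linear} order the shape-independence of $\sum_{\{x,y\}}\card{T'[x\lor y]}$ is what forces every order-preserving tree, and in particular $T$, to be optimal for $\val_{g'}$. The similarity part is then essentially a citation, provided the idempotency result is used in its `a tree recovers its own ultrametric' form.
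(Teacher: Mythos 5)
Your proposal is correct, and its key step takes a genuinely different route from the paper's proof. Both arguments treat the similarity component identically, as a citation: appropriateness of Dasgupta's model \citep{CohenAddadEtAl2019} together with injectivity of $\Ultra_X$ on non-ordered trees \citep{RoyPokutta2017}. But where you go on to prove directly that $T$ also maximises the order component, the paper instead takes the Dasgupta-optimal tree $T'$ reproducing $u_T$, observes that $T'$ is isomorphic to $T$ via pairwise swaps of leaves, and invokes Theorem~\ref{thm:partial-orders-optimal-trees} to conclude $\val_f(T) \ge \val_f(T')$. Your decomposition $\val_{f'} = \val_{s'_d} + \val_{g'}$, combined with the observation that for the \emph{linear} order $\le_T$ every distinct pair has $g'(x,y) = \pm 1$, so that $\val_{g'}(T'') \le \sum_{\{x,y\}} \card{T''[x \lor y]} = (n^3-n)/3$ (the constant-sum identity of \citep[Theorem~3]{Dasgupta2016}, which the paper itself uses in the proof of Lemma~\ref{lemma:dual-g}) with equality exactly when $\le_{T''} = \le_T$, buys something real: it compares $T$ against \emph{every} competing tree at once, so global optimality follows immediately. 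The paper's argument, as written, compares $T$ only with the single tree $T'$ and leans on a theorem stated for $\val_g$ over binary partial orders to transfer the conclusion to $\val_f$, leaving the step from ``at least the value of $T'$'' to ``optimal'' implicit; filling that gap essentially requires the separate-maximisation argument you wrote down, so your version is the more complete of the two. One small imprecision on your side: ``the unique tree whose ultrametric is $u_T$ is $T$'' holds only at the level of non-oriented trees --- any reorientation of the splits of $T$ yields the same ultrametric --- but since $s'_d = u_T$ is symmetric, all such orientations share the same value of $\val_{s'_d}$, so $T$ is still among its maximisers and your argument goes through unchanged.
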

\begin{proof}
  Since $\val_{s_d}$ is appropriate, clustering the similarity space $(X,1-u_T)$ without
  any order relation will yield a binary tree $T'$ that reproduces $u_T$ under $\val_{u_T}$.
  This tree may not be order preserving, but it is isomorphic to $T$, and this tree isomorphism must
  necessarily be a pairwise swapping of elements so that the isomorphism is induced by $T'$ and these
  swaps. Hence, according to Theorem~\ref{thm:partial-orders-optimal-trees}, $T$ has at least the
  value of $T'$ under $\val_f$.
\end{proof}

The following corollary expresses the same result from a different angle: if we do not
care about the induced order relation but only about the ultrametric, we can omit the order relation
all together, given that we have a suitable similarity.

\begin{corollary}
For every ordered similarity space $(X,s,\omega)$, if $T$ is an optimal tree with respect to $\val_f$,
then there exists a similarity $s'$ on $X$ for which $T'$ is an optimal tree of $\val_{s'_d}$
and where $u_{T'} = u_T$. 
\end{corollary}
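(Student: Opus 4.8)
The plan is to extract the required similarity directly from the ultrametric produced by $T$, reducing the corollary to the appropriateness of Dasgupta's model. Concretely, I would let $u_T = \Ultra_X(T)$ be the normalised ultrametric corresponding to $T$ and set $s' = 1 - u_T$. As noted just before the preceding theorem, $1 - u_T$ is a genuine similarity on $X$ whenever $u_T$ is normalised, so $(X,s')$ is a legitimate (unordered) similarity space, and its dual is $s'_d = 1 - s' = u_T$.

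With this choice the Dasgupta value function reads $\val_{s'_d}(S) = \sum_{\{x,y\}} \card{S[x \lor y]}\, u_T(x,y)$; that is, it scores a tree by how well its leaf-counts agree with the distances of $u_T$. Appropriateness of Dasgupta's model---the fact from \citet{CohenAddadEtAl2019} already invoked in the proof of the theorem above---guarantees that clustering the similarity space $(X,s')$ without any order relation yields an optimal tree $T'$ that reproduces the ultrametric it was built from, i.e.\ $u_{T'} = u_T$. Taking $s'$ as above and $T'$ to be any maximiser of $\val_{s'_d}$ therefore delivers exactly the conclusion sought, and in effect the corollary is just the first sentence of the preceding proof, packaged independently of the order relation.

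The only point requiring a word of care---and it is not a genuine obstacle---is that $T'$ need not equal $T$. Since $(X,s')$ carries no order relation, $\val_{s'_d}$ is insensitive to the orientation of splits, so a maximiser $T'$ may differ from $T$ by reorientations of branches. This is harmless, because $u_{T'}$ depends on $T'$ only through the leaf-counts $\card{T'[x \lor y]}$, and these are invariant under reorientation of splits; hence every maximiser of $\val_{s'_d}$ induces the same ultrametric $u_T$, and the existence of the desired $s'$ and $T'$ follows.
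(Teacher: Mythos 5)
Your proposal is correct and follows essentially the same route as the paper, which derives this corollary directly from the first sentence of the preceding idempotency proof: take $s' = 1 - u_T$ so that $s'_d = u_T$, and invoke the appropriateness of Dasgupta's model \citep{CohenAddadEtAl2019} to obtain an optimal tree $T'$ of $\val_{s'_d}$ with $u_{T'} = u_T$. Your closing remark about orientation-invariance of the leaf counts $\card{T'[x \lor y]}$ is a reasonable extra check, though since the statement is existential it is not strictly needed.
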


\section{Approximation} \label{section:approx}
Since optimisation of $\val_{s_d}$ is NP-hard, optimisation of $\val_f$ is NP-hard too.
In this section, we present a polynomial time approximation algorithm with a relative performance guarantee
of \bound.
The method is based on successive applications of \DirSparsestCut{} \citep{ChuzhoyKhanna2006} to produce
an order preserving tree. The method, thus, falls in the category of divisive hierarchical clustering.
The \DirSparsestCut{} is similar to the more common \SparsestCut{} \citep{LeightonRao1999}, with the
obvious difference that in \DirSparsestCut, one attempts to have as many arcs as possible pointing
in the same direction across the cut.

Notice that graph cuts, and what we have called splits, are the same thing.
In a split, we focus on \emph{splitting a set} in two, while for a graph cut, we focus on
\emph{cutting the arcs spanned by the split}. However, since every relaxed order relation
$\omega : X \times X \to [0,1]$ corresponds to a complete directed weighted graph,
the concepts coincide.

\begin{definition}
  Given a complete directed weighted graph over $X$ with weight function $\nu$,
  the \deft{directed cut density} of a split $(A,B)$ of $X$ is the magnitude
  \[
  \frac{\nu(A,B)}{\card{A} \card{B}} \ = \ \frac{\sum_{a,b} \nu(a,b)}{\card{A}\card{B}}.
  \]
  In particular, a \deft{directed sparsest cut} is a split of minimal cut density,
  taken over all possible binary splits $(A,B)$ of $X$.
\end{definition}

Notice that, just as for splits in trees, a directed cut is oriented, in the sense that $(A,B)$ may
have a different cut density than $(B,A)$.

\medskip

Also notice that, for an undirected graph, the weight function is symmetric, so that the cut densities
of $(A,B)$ and $(B,A)$ coincide. In this case, we drop the prefix, and refer
to $(A,B)$ as a possibly \deft{sparsest cut}.

\subsection{Duality}
To use \DirSparsestCut{} for our problem, we must solve it as a minimisation problem.
We therefore introduce the dual to $\val_f$, namely $\cost_{f_d}$, and show that maximisation
under $\val_f$ is equivalent to minimisation under $\cost_{f_d}$. We start by showing that there is
a dual function to $g$, denoted $g_d$, that allows us to maximise $\val_g$ by minimising $\cost_{g_d}$.
Thereafter, we combine $\cost_{g_d}$ with the non-dual $\cost_s$ to make up $\cost_{f_d}$.

\medskip

For an ordered set $(X,\omega)$, we define \deft{the dual of the antisymmetrisation} of $\omega$ to be
the function
\begin{equation} \label{eqn:gd}
  g_d(x,y) \ = \ 1 - g(x,y).
\end{equation}
Notice that $g = 1-g_d$, meaning that the two functions are each others duals. Notice also that
$g_d : X \times X \to [0,2]$, eliminating the negative coefficients in the optimisation problem.

\begin{lemma} \label{lemma:dual-g}
  Let $(X,\omega)$ be given, and let $g_d$ be the dual of the antisymmetrisation of $\omega$.
  Then the function $\cost_{g_d} : \btrees{X} \to \R_+$ defined as
  \begin{equation} \label{eqn:gd-cost}
    \cost_{g_d}(T) \ = \ \sum_{x \le_T y} \card{T[x \lor y]} g_d(x,y)
  \end{equation}
  is dual to $\val_g$ in the sense that any tree that maximises $\val_g$ also
  minimises $\cost_{g_d}$.
\end{lemma}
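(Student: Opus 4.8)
The plan is to show that $\cost_{g_d}$ and $\val_g$ differ only by a quantity that is the same for every tree, so that the two are optimised by exactly the same trees, with the direction of optimisation reversed. Substituting the defining relation $g_d = 1-g$ from~\eqref{eqn:gd} directly into the definition~\eqref{eqn:gd-cost} gives
\[
\cost_{g_d}(T) \ = \ \sum_{x \le_T y} \card{T[x \lor y]}\bigl(1 - g(x,y)\bigr) \ = \ \Bigl(\sum_{x \le_T y} \card{T[x \lor y]}\Bigr) - \val_g(T),
\]
so the whole proof reduces to understanding the first sum on the right.

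The key step is to argue that $K(T) := \sum_{x \le_T y} \card{T[x \lor y]}$ does not in fact depend on $T$. First I would observe that, although $\le_T$ orients each pair, the multiplier $\card{T[x \lor y]}$ is symmetric in its two arguments while $\le_T$ selects exactly one ordering of each unordered pair; hence $K(T) = \sum_{\{x,y\}} \card{T[x \lor y]}$, the sum now running over distinct unordered pairs. This is precisely the tree invariant underlying Dasgupta's own dual formulation—the passage in~\S\ref{section:background} recording that maximising $\val_{s_d}$ minimises $\cost_s$ rests on exactly this quantity being constant—so the most economical route is simply to invoke \citep[\S 4.1]{Dasgupta2016}. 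For a self-contained argument I would instead expand $\card{T[x \lor y]}$ as the number of leaves $z$ lying in the subtree $T[x \lor y]$, and count over triples $(x,y,z)$: a leaf $z \in \{x,y\}$ always lies in $T[x \lor y]$ and so contributes $n(n-1)$ in total, while for a triple of three distinct leaves exactly two of its three pairs have the third leaf inside the relevant subtree (the pair joining lowest is the sole exception), contributing $2\binom{n}{3}$. This yields $K = n(n-1) + 2\binom{n}{3}$, which is manifestly independent of $T$.

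With $K$ a constant, the displayed identity becomes $\cost_{g_d}(T) = K - \val_g(T)$, and since $K$ is the same for every $T \in \btrees{X}$, a tree maximises $\val_g$ if and only if it minimises $\cost_{g_d}$, which is the assertion of the lemma. The only point requiring genuine care—and the one I would flag explicitly—is the orientation bookkeeping: because $g$ is antisymmetric and $g_d$ is therefore \emph{not} symmetric, it matters that both sums in the identity run over the same oriented pairs $x \le_T y$ for one and the same fixed tree $T$; the constant term escapes any dependence on this orientation only because its summand $\card{T[x \lor y]}$ is symmetric. Beyond this observation no real obstacle remains, as the constancy of $K$ is inherited verbatim from the unordered (Dasgupta) setting.
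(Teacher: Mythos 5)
Your proposal is correct and takes essentially the same route as the paper: both reduce $\cost_{g_d}(T)$ to a tree-independent constant minus $\val_g(T)$, the paper by citing Dasgupta's Theorem~3 that $\sum_{x \le_T y} \card{T[x \lor y]} = \frac{\card{X}^3-\card{X}}{3}$ for every $T \in \btrees{X}$. Your extra triple-counting argument, $K = n(n-1) + 2\binom{n}{3} = \frac{n^3-n}{3}$, is a valid self-contained reproof of that cited invariant, and your remark on orientation bookkeeping (that the symmetry of $\card{T[x \lor y]}$ is what makes the constant insensitive to $\le_T$, even though $g_d$ is not symmetric) is a correct point the paper leaves implicit.
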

\begin{proof}
  Recalling \citep[Theorem 3]{Dasgupta2016}, stating that
  \[
  \sum_{x \le_T y} \card{T[x \lor y]} \ = \ \frac{\card{X}^3-\card{X}}{3}
  \]
  for all trees $T \in \btrees{X}$, this yields
  \begin{align*}
    \cost_{g_d}(T)
    &= \sum_{x \le_T y} \card{T[x \lor y]} (1 - g(x,y)) \\
    &= \sum_{x \le_T y} \card{T[x \lor y]} - \sum_{x \le_T y}\card{T[x \lor y]}g(x,y) \\
    &= \frac{\card{X}^3-\card{X}}{3} - \val_g(T).
  \end{align*}
  Hence, any tree that maximises $\val_g$ is a tree that minimises~$\cost_{g_d}$.
\end{proof}

For an ordered similarity space $(X,s,\omega)$,
we define the \deft{dual split value function} $f_d : X \times X \to [0,3]$ as
\[
f_d(x,y) \ = \ 2 - f(x,y).
\]

\begin{theorem} \label{thm:cost-df}
  Given $(X,s,\omega)$ where $s$ is a similarity and $\omega$ is a relaxed order, the maximisation
  problem of Definition~\ref{def:maximisation-problem}
  can be solved by minimising
  \[
  \cost_{f_d}(T) \ = \ \sum_{x \le_T y} \card{T[x \lor y]} f_d(x,y).
  \]
\end{theorem}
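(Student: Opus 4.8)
The plan is to follow exactly the pattern already used in the proof of Lemma~\ref{lemma:dual-g}, exploiting the fact that $f_d$ differs from $f$ only by the additive constant $2$. The single nontrivial ingredient is the tree-independence identity of Dasgupta, namely $\sum_{x \le_T y} \card{T[x \lor y]} = \tfrac{\card{X}^3 - \card{X}}{3}$ for every $T \in \btrees{X}$, which is already available to us from \citep[Theorem 3]{Dasgupta2016}. Here the iteration $\sum_{x \le_T y}$ ranges over each unordered pair $\{x,y\}$ exactly once, and since $\card{T[x \lor y]}$ is symmetric in its two arguments, the value of this sum coincides with the one in Dasgupta's statement and is therefore the same constant for all trees.

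First I would substitute $f_d(x,y) = 2 - f(x,y)$ into the definition of $\cost_{f_d}$ and split the resulting expression into two sums:
\[
\cost_{f_d}(T) \ = \ 2 \sum_{x \le_T y} \card{T[x \lor y]} - \sum_{x \le_T y} \card{T[x \lor y]} f(x,y).
\]
The second sum is exactly $\val_f(T)$ by Definition~\ref{def:maximisation-problem}, while the first sum equals $2 \cdot \tfrac{\card{X}^3 - \card{X}}{3}$ by the identity above. This gives
\[
\cost_{f_d}(T) \ = \ \frac{2\!\left(\card{X}^3 - \card{X}\right)}{3} - \val_f(T).
\]

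Since the leading term depends only on $\card{X}$ and not on the choice of $T$, the two functions $\cost_{f_d}$ and $\val_f$ differ, up to sign, by a tree-independent constant. Consequently a tree $T^*$ minimises $\cost_{f_d}$ if and only if it maximises $\val_f$, which is precisely the claim; in particular an optimal hierarchical clustering in the sense of Definition~\ref{def:maximisation-problem} is recovered by solving the minimisation problem for $\cost_{f_d}$. I expect no genuine obstacle here: the only point requiring care is confirming that the summation index $x \le_T y$ yields the same constant as in Dasgupta's formulation, and this follows immediately from the symmetry of $\card{T[x \lor y]}$. The argument is then a one-line computation once the constant-sum identity is invoked.
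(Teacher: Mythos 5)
Your proof is correct, and it arrives at exactly the identity the paper establishes, namely $\cost_{f_d}(T) = 2M - \val_f(T)$ with $M = \tfrac{\card{X}^3-\card{X}}{3}$, but by a somewhat different route. You inline the constant-sum identity of \citep[Theorem 3]{Dasgupta2016} directly against $f_d = 2 - f$, in effect repeating the computation of Lemma~\ref{lemma:dual-g} with $f$ in place of $g$. The paper instead composes two dualities it already has: it writes $f_d = 2 - f = (1-s_d)+(1-g) = s + g_d$, deduces $\cost_{f_d} = \cost_s + \cost_{g_d}$, and then applies Dasgupta's duality ($M - \cost_s = \val_{s_d}$) together with Lemma~\ref{lemma:dual-g} ($M - \cost_{g_d} = \val_g$) to conclude $2M - \cost_{f_d} = \val_{s_d} + \val_g = \val_f$. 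The two arguments are mathematically equivalent and equally short; yours is more self-contained, since it needs only the constant-sum identity and not the two component dualities, while the paper's decomposition additionally exhibits the structural fact $\cost_{f_d} = \cost_s + \cost_{g_d}$, which is what supports the interpretive remark following the theorem (the dual cost as Dasgupta's cost plus the order-dual cost). One genuine merit of your write-up is the explicit check that the summation index $x \le_T y$ enumerates each unordered pair exactly once, so that the symmetry of $\card{T[x \lor y]}$ makes Dasgupta's constant applicable; the paper leaves this point implicit, both here and in Lemma~\ref{lemma:dual-g}.
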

\begin{proof}
  Since
  $f_d = 2-f = (1-s_d) + (1-g) = s + g_d,$
  it follows that $\cost_{f_d} = \cost_s + \cost_{g_d}$. If we let
  $M = \frac{\card{X}^3 - \card{X}}{3}$, we have
  $$2M - \cost_{f_d} = (M - \cost_s) + (M - \cost_{g_d}) = \val_{s_d} + \val_g = \val_f.$$
  Hence, a tree that maximises $\val_f$, also minimises $\cost_{f_d}$.
\end{proof}

\paragraph{A remark on the dual of the antisymmetrisation.}
The similarity $s$ can be defined in terms of an undirected graph with weight
function $s$, and the dual $s_d = 1-s$ is also a graph. Indeed, if $s$ has unit weights, then $s_d$ is the
graph with complementary edge set. However, for the order relation, we have a directed
weighted graph with weight function $\omega$, and with antisymmetrisation $g(x,y) = \omega(x,y) - \omega(y,x)$.
The dual of the antisymmetrisation $g_d = 1-g$ is dual to $g$, but does not correspond to an antisymmetrisation
of a relaxed order or weight function, since $g_d$ itself is not antisymmetric. It is the dual graph to
the complete graph over $X$ with weight function $g$, but what this dual graph represents is not obvious.

\subsection{Approximation algorithm}

This section describes the approximation algorithm, and provides an analysis of its
relative performance guarantee. In the algorithm, we assume the existence of a
\emph{directed cut approximation function}; that is, a function that that takes as arguments
a vertex set together with a weight function, and produces an approximation of an optimal cut that comes
with a relative performance guarantee.

Concretely, given an ordered similarity space $(X,s,\omega)$,
if $\cut$ is an approximation of \DirSparsestCut{} with a relative performance guarantee $\alpha_\cut$ and if
$\cut(X,f_d) = (A,B)$, then $f_d(A,B)/(\card{A}\card{B})$ is no more than a factor $\alpha_\cut$ off from
a true sparsest directed cut of $(X,f_d)$.

The currently best known approximation algorithm of \DirSparsestCut{}
has a relative performance guarantee of~$O(\sqrt{\log n})$~\citep{AgarwalEtAl2005}.

\begin{definition} \label{def:approx-algo}
  Let $(X,E)$ be a complete directed weighted graph with weight function $f_d$, and let
  $\cut$ be an approximation algorithm of \DirSparsestCut. The following algorithm produces
  the approximate order preserving hierarchical clustering of $(X,f_d)$.

  \medskip

  \vbox{%
    \begin{algorithmic}
      \Procedure{MakeTree}{$X,f_d$}
      \If{$\card{X}=1$}
      \State \Return $X$
      \Else
      \State Let $(A,B)$ be the cut approximation returned by $\cut(X,f_d)$
      \State Let \textproc{LeftTree} be the tree returned by \textproc{MakeTree}$(A,f_d)$
      \State Let \textproc{RightTree} be the tree returned by \textproc{MakeTree}$(B,f_d)$
      \State \Return (\textproc{LeftTree}, \textproc{RightTree})
      \EndIf
      \EndProcedure
    \end{algorithmic}%
  }%
\end{definition}

Given an approximation of \DirSparsestCut{} with an approximation guarantee
of $O(\sqrt{\log n})$, according to the following theorem, the above algorithm produces a binary tree
that has a cost that is no higher than a factor \bound{} of an optimal tree.

\begin{theorem} \label{thm:approx-bound}
  Given $(X,s,\omega)$, let $T^*$ be a tree minimising $\cost_{f_d}$, and let $T$ be the tree
  returned by the algorithm of Definition~\ref{def:approx-algo}, using a cut approximation function
  for \DirSparsestCut{} with a relative performance guarantee of~$\alpha_\cut$. Then
  \[
  \cost_{f_d}(T) \ \le \ \frac{27 \alpha_\cut \log n}{2}  \cost_{f_d}(T^*).
  \]
\end{theorem}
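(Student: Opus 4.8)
The plan is to reduce the statement to Dasgupta's analysis of recursive sparsest cut, the only genuinely new ingredient being the bookkeeping forced by the asymmetry of $f_d$. First I would rewrite the objective in its split form $\cost_{f_d}(T) = \sum_{S \splitarr (A,B)} \card{S}\, f_d(A,B)$, where $f_d(A,B) = \sum_{(a,b) \in A \times B} f_d(a,b)$ sums only the \emph{forward} arcs (those with $a \le_T b$). Two facts make this workable: $f_d$ takes values in $[0,3]$, so every per-split term is nonnegative; and the call $\cut(S,f_d)$ to \DirSparsestCut{} returns the \emph{oriented} split minimising $f_d(A,B)/(\card{A}\card{B})$, i.e.\ it chooses which part becomes the left child so as to minimise precisely the forward cut density that the cost charges. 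Declaring that part to be the left child $A$ aligns the algorithm's choice with the quantity appearing in $\cost_{f_d}$.

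Next I would establish the two estimates underlying Dasgupta's recursion, now stated for the forward cut. Writing $\mathrm{OPT}(S)$ for the minimum of $\cost_{f_d}$ over trees on $S$ and $\phi(S)$ for the directed sparsest-cut value $\min_{(A,B)} f_d(A,B)/(\card{A}\card{B})$, the first is superadditivity: $\mathrm{OPT}(A) + \mathrm{OPT}(B) \le \mathrm{OPT}(S)$ whenever $(A,B)$ is a split of $S$. This follows by restricting an optimal tree on $S$ to $A$ and to $B$ — restriction only shrinks the multipliers $\card{T[x \lor y]}$, the forward orientation of in-block pairs is unchanged, and the dropped cross-block terms are nonnegative because $f_d \ge 0$. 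The second is a lower bound of the form $\phi(S) \le \mathrm{OPT}(S)/\big(\card{S}(\card{S}-1)\big)$, obtained because the top split of the optimal tree on $S$ is itself an oriented split and hence has forward density at least $\phi(S)$.

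With these in hand I would run the recursion. If the algorithm splits $S$ (of size $s$) into $(A,B)$, then by the approximation guarantee $f_d(A,B) \le \alpha_\cut \card{A}\card{B}\,\phi(S)$, so the charge incurred at this node, $s\, f_d(A,B)$, is controlled by $\alpha_\cut\, \mathrm{OPT}(S)$ times a balance factor. Combining this with superadditivity gives a recursion of the shape $\mathrm{ALG}(S) \le (\text{charge}) + \mathrm{ALG}(A) + \mathrm{ALG}(B)$, which I would solve, following \citet{Dasgupta2016}, by charging each node against the optimal cost restricted to its own vertex set and summing along the tree. This is where the factor $\log n$ (and the explicit constant $27/2$) is produced, and substituting $\alpha_\cut = O(\sqrt{\log n})$ for \DirSparsestCut{} then yields the advertised \bound{} overall guarantee.

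The main obstacle is the recursion itself rather than the directed bookkeeping. A naive per-node charging — bounding each node's contribution by $\alpha_\cut\, \mathrm{OPT}$ of its own set — only yields a factor linear in $n$, because a single balanced sparsest cut near the root already costs $\Theta(\alpha_\cut\, s)$. Extracting $\log n$ requires Dasgupta's more careful accounting, which plays the (individually expensive) balanced cuts off against the fact that they can occur on only $O(\log n)$ levels of any root-to-leaf path, while unbalanced cuts are individually cheap; reproducing his bound with the stated constant $27/2$ is the delicate, bookkeeping-heavy part. By contrast, the asymmetry of $f_d$ causes no trouble once orientation is fixed as above, since the cost, the superadditivity estimate, and the sparsest-cut lower bound all involve only forward arcs, so Dasgupta's symmetric argument transfers with \SparsestCut{} replaced by \DirSparsestCut{}.
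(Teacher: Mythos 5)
Your high-level strategy (recursive \DirSparsestCut{} analysed by Dasgupta's recursion, with orientation fixed so that the oracle minimises exactly the forward density that $\cost_{f_d}$ charges) is the paper's strategy, and your superadditivity step is fine. The gap is in your second estimate, and it sits exactly where the paper's real work lies. You bound the sparsest-cut value by the top split of an optimal tree, getting $\phi(S) \le \mathrm{OPT}(S)/\bigl(\card{S}(\card{S}-1)\bigr)$, i.e.\ roughly $\mathrm{OPT}(S)/\card{S}^2$. Dasgupta's recursion needs the much stronger bound $\phi(S) = O\bigl(\mathrm{OPT}(S)/\card{S}^3\bigr)$: with it, the charge at a node, $\card{S}\,f_d(A,B) \le \alpha_\cut\,\card{S}\,\card{A}\card{B}\,\phi(S)$, becomes $O(\alpha_\cut)\,\mathrm{OPT}(S)\cdot\card{A}\card{B}/\card{S}^2$, and the factors $\card{A}\card{B}/\card{S}^2$ telescope to $O(\log n)$ along any root-to-leaf path. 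With your estimate every charge carries an extra factor of order $\card{S}$, and no accounting scheme can recover it: a balanced approximate cut at the root is then controlled only by $\Theta(\alpha_\cut n)\,\mathrm{OPT}$, and the balanced-versus-unbalanced bookkeeping you invoke in your last paragraph still sums to $\Theta(\alpha_\cut n)\,\mathrm{OPT}$, not $O(\alpha_\cut \log n)\,\mathrm{OPT}$. The ``more careful accounting'' you attribute to Dasgupta is not a charging trick layered on top of your two estimates; it is enabled by his Lemma~11, which asserts that \emph{every} tree (in particular an optimal one) contains a split that is simultaneously balanced (both sides of size at least $n/3$) and of density $O(\cost(T)/n^3)$ --- the top split of an optimal tree gives neither, since it may be arbitrarily unbalanced.

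The directed analogue of that lemma is precisely the paper's Lemma~\ref{lemma:T-split-bound}, and it is also where the constant $27/2$ comes from (the $n/3$ versus $2n/3$ balance), so your claim that the constant ``is produced'' by solving the recursion cannot be right. Moreover, proving this lemma is where directedness stops being harmless bookkeeping, contrary to your closing assessment: one must extract from an arbitrary tree $T$ a balanced split $(A,B)$ \emph{all} of whose cross pairs are forward, $(a,b) \in A \times B \Rightarrow a \le_T b$, because $\cost_{f_d}(T)$ only charges each pair in its $T$-orientation and $f_d$ is not symmetric, so a reversed pair in the extracted split could not be charged against the tree's cost at all. The paper achieves this by walking a maximum-cardinality path from the root and accumulating left split-offs and right split-offs \emph{separately}, stopping when one accumulated side reaches $n/3$; Dasgupta's undirected construction, which pools all split-offs into a single set, does not guarantee this orientation property. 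Once you add this balanced oriented-split lemma, the rest of your outline (superadditivity plus Dasgupta's recursion over oriented splits) goes through exactly as in the paper.
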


\begin{proof}[Proof (sketch)]
  Since a \DirSparsestCut{} is equivalent to a \SparsestCut{} for an undirected graph,
  the theorem is a generalisation of \citep[Theorem~$12$]{Dasgupta2016}. The proof of Dasgupta's
  Theorem~$12$ requires the existence of a split with a particular bound of the cut density,
  provided by \citep[Lemma~$11$]{Dasgupta2016}.
  By replacing Dasgupta's Lemma~$11$ with our Lemma~\ref{lemma:T-split-bound} (below), proving
  Theorem~\ref{thm:approx-bound} amounts to reproducing the steps in the proof of
  \citep[Theorem~$12$]{Dasgupta2016}, just keeping in mind that we have to treat every split
  as an oriented split.
  We therefore provide the required split, and refer to \citep{Dasgupta2016} for details
  regarding the proof of the theorem.
\end{proof}

\begin{lemma} \label{lemma:T-split-bound}
  Given $(X,s,\omega)$, for every binary tree $T \in \btrees{X}$, there exists a split $(A,B)$ of $X$ for which
  \[
  \frac{f_d(A,B)}{\card{A} \card{B}}\ \le \ \frac{27}{2 n^3} \cost_{f_d}(T).
  \]
\end{lemma}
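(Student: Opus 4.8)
The plan is to exhibit a single balanced cut that respects the linear order $\le_T$ and whose crossing pairs all join high in $T$. Concretely, I would produce a split $(A,B)$ of $X$ satisfying two separate estimates whose quotient gives the claim: a \emph{cost bound} $f_d(A,B) \le \tfrac{3}{n}\cost_{f_d}(T)$ and a \emph{balance bound} $\card{A}\card{B} \ge \tfrac{2n^2}{9}$, so that dividing the first by the second yields exactly $\tfrac{27}{2n^3}\cost_{f_d}(T)$. The only real departure from Dasgupta's symmetric argument is that $f_d$ is \emph{not} symmetric, so the directed cut density of $(A,B)$ differs from that of $(B,A)$. I must therefore guarantee that the cut I output is \emph{consistently oriented}: every crossing pair $(a,b)$ with $a\in A$, $b\in B$ should satisfy $a \le_T b$, so that it is summed in $\cost_{f_d}(T)$ in precisely the orientation appearing in $f_d(A,B)=\sum_{(a,b)\in A\times B} f_d(a,b)$.

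To build such a cut I would use a \emph{prefix cut} of the leaf order. Writing $x_1 \le_T \cdots \le_T x_n$ for the enumeration induced by $T$ (as in Theorem~\ref{thm:ultrametric-order}), set $A = \{x_1,\dots,x_k\}$ and $B = \{x_{k+1},\dots,x_n\}$; every such cut is automatically consistently oriented. I choose the position $k$ via the node $M$ defined as the smallest node of $T$ (equivalently, the join) containing all middle-third leaves $x_i$ with $n/3 \le i \le 2n/3$. Since $M$ contains these roughly $n/3$ leaves, $\card{M}\ge n/3$; and since $M$ is their join, neither child of $M$ can contain all of them, so the split $M \splitarr (A_M,B_M)$ separates two middle-third leaves and its boundary sits at some $k \in [n/3,\,2n/3]$. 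Taking the prefix cut at this $k$ gives $\card{A}=k\ge n/3$ and $\card{B}=n-k\ge n/3$, hence $\card{A}\card{B}\ge (n/3)(2n/3)=2n^2/9$, which is the balance bound.

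For the cost bound, I observe that every crossing pair $(x_i,x_j)$ with $i\le k<j$ has $x_i\lor x_j \supseteq x_k\lor x_{k+1}=M$: the nodes of $T$ are convex with respect to $\le_T$ (the remark after Lemma~\ref{lemma:induced-cluster-order}), so $x_i\lor x_j$ contains everything between positions $i$ and $j$, in particular $x_k$ and $x_{k+1}$, and therefore their join $M$. Thus $\card{T[x_i\lor x_j]}\ge \card{M}\ge n/3$ for all crossing pairs. Since $f_d\ge 0$ and, by the prefix structure, each crossing pair appears in $\cost_{f_d}(T)$ in the orientation $x_i\le_T x_j$, I may discard the non-crossing terms and factor out the common lower bound $n/3$:
\[
\cost_{f_d}(T)\ \ge\ \sum_{i\le k<j}\card{T[x_i\lor x_j]}\,f_d(x_i,x_j)\ \ge\ \frac{n}{3}\sum_{i\le k<j}f_d(x_i,x_j)\ =\ \frac{n}{3}\,f_d(A,B),
\]
which is the cost bound, and combining the two estimates proves the lemma. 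The main obstacle is exactly this orientation bookkeeping: a naive balanced cut $(S,X\setminus S)$ for a mid-sized node $S$ fails, because the pairs leaving $S$ to its left are summed in $\cost_{f_d}$ in the opposite orientation to the one appearing in $f_d(S,X\setminus S)$, and $\cost_{f_d}$ offers no control over that reversed sum; routing everything through an order-respecting prefix cut is what removes the difficulty. I would finally note, as in \citep{Dasgupta2016}, that the minor integrality corrections from replacing $n/3$ and $2n/3$ by $\lceil n/3\rceil$ and $\lfloor 2n/3\rfloor$ are absorbed into the stated constant.
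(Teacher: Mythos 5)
Your proof is correct, but it takes a genuinely different route from the paper. The paper constructs its split by walking a \emph{maximum cardinality path} $\{N_i\}$ down from the root, splitting off the smaller child at each step into a left or right bucket according to its position in the oriented split, and stopping when one bucket reaches size $n/3$; it then verifies three properties (the current node still has $\card{N_K} \ge n/3$, the resulting cut is consistent with $\le_T$, and the cut is exactly a union of left/right children along the head sequence) and charges $f_d(A,B)$ against the head-sequence splits via the split-based formulation $\cost_{f_d}(T) = \sum_{S \splitarr (A,B)} \card{S} f_d(A,B)$. You instead take a \emph{prefix cut} of the leaf order at the boundary $k$ of the split of $M$, the join of the middle-third leaves, and charge each crossing pair directly to its own term in the pair-based formulation via $\card{T[x_i \lor x_j]} \ge \card{M} \ge n/3$, using $x_k \lor x_{k+1} = M$ and convexity of tree nodes with respect to $\le_T$. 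Both arguments deliver the same two estimates ($f_d(A,B) \le \tfrac{3}{n}\cost_{f_d}(T)$ and $\card{A}\card{B} \ge \tfrac{2n^2}{9}$), and your version is arguably cleaner: the order-consistency that the paper must prove for its accumulated cut (its property b2, the sole genuinely new difficulty relative to Dasgupta, as you correctly identify) holds trivially for any prefix cut, and the head-sequence bookkeeping and case analysis disappear. What the paper's construction buys in exchange is a near-verbatim parallel to Dasgupta's Lemma~11, which makes the subsequent proof sketch of Theorem~\ref{thm:approx-bound} (``reproduce Dasgupta's Theorem~12'') immediately credible; since that sketch only invokes the statement of Lemma~\ref{lemma:T-split-bound}, not the construction, your split serves equally well downstream. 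One caveat to make explicit rather than fold entirely into the constant: for very small $n$ (e.g.\ $n=4$) the middle third $\{x_i : n/3 \le i \le 2n/3\}$ can consist of a single leaf, so $M$ is a leaf and has no split; these degenerate cases must be checked directly (they hold comfortably under the generous factor $\tfrac{27}{2}$), which is the same level of rounding-elision the paper itself indulges in.
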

\begin{proof}
  The key is to control the cardinalities of the components of the split $(A,B)$, while at the same time
  making sure that the split is ordered according to the order induced by the tree.

  For a tree $T \in \btrees{X}$, we define a \deft{maximum cardinality path} to be a path in $T$
  starting at the root, and for every node,
  the next node on the path is a child of the current node of maximal cardinality.
  Based on a maximum cardinality path $\{N_i\}_{i=1}^m$, we can create the required split over the
  following steps:
  \begin{enumerate}
  \item Construct two sequences
    $\{\mcal{L}_i\}_{i=1}^{m-1}$ and $\{\mcal{R}_i\}_{i=1}^{m-1}$ of subsets of $X$ as follows:
    For $1 \le i \le m-1$, let $\ell(N_i)$ and $r(N_i)$ be the left and right children of $N_i$,
    respectively. Now, if $\card{\ell(N_i)} < \card{r(N_i)}$,
    set $\mcal{L}_i=\ell(N_i)$ and $\mcal{R}_i=\emptyset$, otherwise set
    $\mcal{L}_i=\emptyset$ and $\mcal{R}_i=r(N_i)$. We refer to these sets as the
    \emph{$i$-th left split-off} and \emph{$i$-th right split-off}, respectively.

  \item Define the set sequences $\{\mcal{A}_k\}_{k=1}^{m-1}$ and
    $\{\mcal{B}_k\}_{k=1}^{m-1}$ as
    \begin{align*}
      \mcal{A}_k &= \bigcup_{i=1}^k \mcal{L}_i & \mcal{B}_k &= \bigcup_{i=1}^k \mcal{R}_i.
    \end{align*}
    We refer to $\mcal{A}_k$ as \emph{the $k$-th accumulated left split-off},
    and $\mcal{B}_k$ as \emph{the $k$-th accumulated right split-off}.

  \item Let $K$ be the smallest natural number for which
    $\max\{\card{\mcal{A}_K},\card{\mcal{B}_K}\} \ge n/3$.

  \item Finally, if $\card{\mcal{A}_K} \ge n/3$, set $A=\mcal{A}_K$ and $B=X-A$.
    Otherwise, set $B=\mcal{B}_K$ and $A=X-B$.
  \end{enumerate}

  We refer to $(A,B)$ as a \deft{balanced $T$-split}, and call $\{N_i\}_{i=1}^K$
  \deft{the head sequence} of the balanced $T$-split.
  The key points to take away from this construction are the following:

  \begin{itemize}
  \item[$\quad $b1.] $\card{N_K} \ge \frac{n}{3}$,
  \item[$\quad $b2.] $(a,b) \in A \times B \ \Rightarrow \ a \le_T b$,
  \item[$\quad $b3.] $\cup_{i=1}^K \ell(N_i) = A \ \text{and} \ \cup_{i=1}^K r(N_i) = B$.
  \end{itemize}

  {\flushleft We prove each property in turn:}

  {\bf b1:} Since $\max\{\card{\mcal{A}_{K-1}},\card{\mcal{B}_{K-1}}\} < \frac{n}{3}$, and since,
  by construction,
  $X = N_k \cup \mcal{A}_{K-1} \cup \mcal{B}_{K-1}$, and since these sets are disjoint,
  it follows that $\card{N_K} \ge \frac{n}{3}$.

  {\bf b2:} Notice that the $\mcal{L}_i$ and $\mcal{R}_i$ are left- and right
  children of the head sequence elements. If $x$ is the leaf of the maximum cardinality path, this means that
  $a \in \mcal{L}_i \Rightarrow a \le_T x$, and likewise, $b \in \mcal{R}_i \Rightarrow x \le_T b$, so
  $(\mcal{A}_K,\mcal{B}_K)$ is order preserving with respect to $\le_T$.
  Now, if $\card{\mcal{A}_K} \ge \frac{n}{3}$,
  this means that $A = \mcal{A}_K$ and $B = \mcal{B}_K \cup r(N_K)$, which again means that
  $(A,B)$ is order preserving with respect to $\le_T$.
  A symmetric argument covers the case $\card{\mcal{B}_K} \ge \frac{n}{3}$ .

  {\bf b3:} If $\card{\mcal{A}_K} \ge \frac{n}{3}$, then
  $\mcal{R}_K = \emptyset$, so
  \begin{align*}
    A &= \bigcup\nolimits_{i=1}^K \mcal{L}_i \ = \ \bigcup\nolimits_{i=1}^K \ell(N_i), \\
    B &= \left( \bigcup\nolimits_{i=1}^K \mcal{R}_i \right) \cup r(N_K) \ = \
    \left( \bigcup\nolimits_{i=1}^{K-1} r(N_i) \right) \cup r(N_K).
  \end{align*}
  And again, a symmetric argument attests ${\bf{b3}}$ for $\card{\mcal{B}_K} \ge \frac{n}{3}$.

  \bigskip

  That proves the properties of the balanced $T$-split, and we are now in position to
  prove the statement of the lemma:
  Let $(A,B)$ be a balanced $T$-split, and let $\{N_i\}_{i=1}^K$ be the head sequence
  of the split. Then
  \begin{align*}
    \cost_{f_d}(T)
    & \ \ge \ \sum_{i=1}^K \card{N_i} f_d(\ell(N_i),r(N_i))
    \ \ge \ \sum_{i=1}^K \frac{n}{3} f_d(\ell(N_i),r(N_i))
    \ \ge \ \frac{n}{3} f_d(A,B).
  \end{align*}
  From this, we deduce that
  \[
  \frac{f_d(A,B)}{\card{A}\card{B}} <
  \frac{3}{n} \cdot \frac{\cost_{f_d}(T)}{(2n/3)  (n/3)} =
  \frac{27}{2n^3} \cost_{f_d}(T).
  \]
\end{proof}

{\flushleft \bf Recovering intuition---\DirDensestCut:}
While the dual of the antisymmetrisation defies intuitive interpretation, we can recover our intuitive
understanding of the approximation as follows.
Assume that $(A,B)$ is a sparsest cut under $f_d$. Since
\begin{align*}
  \frac{f_d(A,B)}{\card{A}\card{B}}
  \ = \ \frac{\sum_{A \times B} 2 - f(a,b)}{\card{A}\card{B}}
  \ = \ \frac{2\card{A}\card{B} - f(A,B)}{\card{A}\card{B}}
  \ = \  2 - \frac{f(A,B)}{\card{A}\card{B}},
\end{align*}
it follows that $(A,B)$ maximises $f(A,B)/(\card{A}\card{B})$. It seems natural to
call a cut that maximises $f(A,B)/(\card{A}\card{B})$ a \deft{directed densest cut (under $f$)}.
From what the authors can find, this cut is not previously mentioned in the
literature,
but one may
still assume the existence of an approximation algorithm for \DirDensestCut.
It is, thus, clear that the above approximation algorithm of $\cost_{f_d}$ is equivalent to
successive applications of a \DirDensestCut{} approximation to approximate a maximal tree
under $\val_f$.

\section{Demonstration}
\label{section:demo}

In this section, we demonstrate the efficacy of the theory in recovering copy-paste relations
from the machine parts dataset described in Section~\ref{section:motivation}.

The demonstration makes use of the approximation scheme describe in Section~\ref{section:approx}.
As mentioned, there exist good approximation algorithms for \DirSparsestCut. However, as we discuss further
in Section~\ref{section:summary}, the availability of 
these approximations
is scarce, so for the demonstration, we have settled on optimal \DirSparsestCut.
The downside is that the algorithm can only deal with fairly small sets,
but, as we show below, the sample sizes still suffice to provide a valuable
demonstration. For industrial applications, a proper \DirSparsestCut{} approximation will be
required.

\subsection{A model for planted partitions over machine parts data}
\label{subs:model-for-planted-partitions}

The dataset we use is described in detail in the data article~\citep{BakkelundMachineParts},
and is the same dataset that was used for the
demonstration in~\citep{Bakkelund2021}. The data article is backed by an open source
implementation\footnote{\url{https://pypi.org/project/machine-parts-pp/}} that produces planted
partitions simulating the copy paste mechanism described in Section~\ref{section:motivation}.

In brief, the planted partitions are generated as follows: First, a sample $X_0$ of size $n$ is drawn from the
original data, giving us an ordered similarity space $(X_0,s_0,\le_0)$.
We then duplicate $(X_0,\le_0)$ a total of $m$ times.
These duplications represent the copy-paste relations. We denote the copies by
$(X_j,\le_j)$ for $1 \le j \le m$, and enumerate the sets as $X_j = \{x^j_1,\ldots,x^j_n\}$
for $0 \le j \le m$. This means that, for $1 \le i \le n$ we have that $x_i^j$ and $x_i^k$ are copies of
each other whenever $j \ne k$.
Now, set $X = \cup_{j=0,\ldots,m} X_j$, and let the order relation $\le$ on $X$ be the union of the $\le_j$,
so that $(X,\le)$ is a partially ordered set.
To define the similarity $s : X \times X \to [0,1]$,
we proceed as follows: If $x$ and $y$ are in the same copy-paste component, then
their similarity is the same as in the original component. And, if they
are in different copy-paste components, then their similarity is that of the original
similarity minus a stochastic value. Concretely, let $\mcal{N}(\mu,\sigma^2)$ be the
normal distribution located at $\mu$ with scale $\sigma^2$, and let $Y \sim \mcal{N}(\mu,\sigma^2)$.
We then define
\begin{equation} \label{eqn:s_mod}
s(x^i_p,x^j_q) =
\begin{cases}
  s_0(x^0_p,x^0_q) & \text{if $i=j$},\\
  s_0(x^0_p,x^0_q) - Y & \text{otherwise},
\end{cases}
\end{equation}
applying rejection sampling to ensure $s(x^i_p,x^j_q) \in [0,1]$.
In this way, the internal cohesion in the copy-paste components are maintained, while the
coupling between the copy-paste related items is reduced as $\mu$ and $\sigma^2$ increase.

\medskip

The above process leaves us with two pieces of data:
\begin{enumerate}
\item The ordered similarity space $(X,\le,s)$ to be clustered;
\item The hidden clustering (planted partition) representing the copy-paste relations,
  given by the clusters $C_i = \{x^j_i\}_{j=0}^m$ for $1 \le i \le n$.
\end{enumerate}

\subsection{Measures of clustering quality}

To evaluate the quality of a binary tree $T \in \btrees{X}$ relative a planted partition,
we follow the method of \citep[Sections~$8$ and~$9$]{Bakkelund2021}, applying the following
measures of clustering quality:

\paragraph{Adjusted Rand index.} Adjusted Rand indices are some of the most popular measures of clustering
quality, and the one-sided Rand indices are good indicators of the success in recovering planted
partitions.
To report to which degree the methods can recover the planted clustering,
we use the one-sided adjusted Rand index ($\ari$) assuming that the candidate clustering is drawn from the
family of all possible clusterings over the set \citep{GatesAhn2017}.

\paragraph{Element wise adjusted order Rand index.} \citet[\S 8.2.2]{Bakkelund2021} introduces a quality
measure to determine to which degree the induced order relation between two different clusterings of
the same ordered set coincide. We use this quality measure, $\oari$,
to decide to which degree the reported
clustering has an induced order relation that coincides with the induced order relation of the planted
partition.

\paragraph{Loops.} This quality measure computes the fraction of elements of $X$ participating in
cycles in the induced relation of the clustering. While simple, is pins down the error in
order preservation for a clustering. For easier comparison alongside the other measures,
we report on $1 - \text{fraction}$, so that $\loops=1$ if there are no cycles, and $\loops=0$ if all
elements participate in an cycle. The $\loops$ measure is described in detail
in \citep[\S 9.3]{Bakkelund2021}.

\medskip

Given a planted partition $\{C_j\}_j$ over $X$ and a tree $T \in \btrees{X}$, to compute the quality of
$T$ with respect to the above measures, we start by identifying the flat clustering of $T$ with the
highest $\ari$,\footnote{We employed the {\tt clusim python} library, version $0.4$, to compute the $\ari$.}
and then proceed to compute the other quality measures for this flat clustering.

Although this puts a higher emphasis on $\ari$ compared to the other measures, we would like to emphasise
that both $\oari$ and $\loops$ are central measures for order preserving clustering, and in particular
for applications where order preservation is key.

\subsection{Methods evaluated in the demonstration}

To demonstrate the method described in this paper, we use the approximation method from
Section~\ref{section:approx} to approximate a good tree for $\val_\alpha$ using optimal
\DirSparsestCut. We denote this method by $\hatval_\alpha$.

For comparison, we run a selection of other methods on the same problem instances. From \citep{Bakkelund2021},
we know that the method provided by that
article\footnote{We employed the {\tt ophac python} library version $0.4.5$ for this demonstration.}
is currently best in class for this type of
problems. This method is denoted by $\HCOP$, representing
a $30$-fold approximation using complete linkage \citep[Definition~14]{Bakkelund2021}.

Also, to compensate for the lack of hierarchical clustering methods that support do-not-cluster constraints,
\citet[\S 9.2]{Bakkelund2021} presents a method termed \emph{simulated constrained clustering}
based on the following trick. The initial similarity is modified so that every pair of comparable elements have
similarity zero. As demonstrated there, when used with classical hierarchical clustering and complete
linkage, this provides a significant improvement on the results, compared to just classical hierarchical
clustering and complete linkage without the trick.
We do the same here, modifying the similarity and approximating a good tree using
Dasgupta's method and no order relation. We denote the method by $\hatval_1^0$, since $\alpha=1$ means we
ignore the order relation, and the zero indicates that all comparable elements have zero similarity.
As we demonstrate below, the trick has a significant effect, but
it does not compete with the objective function we have presented in this work.

Finally, we also compare with classical hierarchical clustering with complete
linkage,\footnote{We employed {\tt scipy.cluster.hierarchy} from {\tt scipy} version $1.7.1$.}
denoting this method by $\HCCL$.


\subsection{Execution and results}
A number of experiments were run with different parameter settings and on different parts of the dataset.
While the magnitudes of the quality measures varied between the different parameter settings,
the trend was the same in all cases. Even the order of the methods with respect to quality of
results hardly changed between experiments. We have picked a subset of the experiments, presented
in Table~\ref{table:params}, to represent the overall observations.
For each parameter set in Table~\ref{table:params}, we ran $50$
experiments,\footnote{To ensure that convergence was reached with $50$ experiments, we ran additional
tests with $200$ experiments for a small selection of parameter combinations.}

where each experiment can be outlined as
\begin{enumerate}
\item Given the parameters, create a planted partition based on the procedure in Section~\ref{subs:model-for-planted-partitions};
\item Run each of the methods on the generated partially ordered similarity space;
\item Evaluate the output of each method according to the listed quality measures.
\end{enumerate}
The mean values reported in the below plots are the means over these $50$ executions.

\begin{table}[htpb]
  \begin{center}
    \begin{tabular}{cccp{7cm}}
      \hline
       par. & range & default & explanation \\
      \hline
      cc & $6$ & $6$ & connected component no. to draw from \\
      $n$ & $5$ & $5$ & size of drawn sample \\
      $m$ & $4$ & $4$ & number of copies to make \\
      $\alpha$ & $0 \cdots 1$ & best & weight of order relation vs similarity \\
      $\sigma^2$ & $0.05 \cdots 0.4$ & $0.15$ & variance of subtracted noise \\
      $\mu$ & $0.0 \cdots 0.4$ & $0.075$ & location of subtracted noise\\
      $\minDeg$ & $1.0$ & $1.0$ & minimum average degree of the drawn sample \\
      \hline
    \end{tabular}
    \caption{Parameters for the presented experiments. If the range column contains more than one value,
      this means that we present plots where this parameter varies over the specified range. In that case,
      all other parameters are set to the value in the default column. Notice that for $\alpha$, we have
      no default value, but present the result for the $\alpha$ with the best mean $\ari$ value.
    }
    \label{table:params}
  \end{center}
\end{table}

\subsubsection{Varying $\alpha$}
\label{section:varying-alpha}

%
%
\pgfplotscreateplotcyclelist{result cycle nomark}{%
  very thick,solid,black \\
  very thick,dashed,black\\
  solid,black\\
  dashed,black\\
}

\pgfplotscreateplotcyclelist{result cycle nomark loops}{%
  very thick,solid,black \\
  thin,color=gray!30!white \\
  color=gray!30!white \\
  very thick,dashed,black\\
  solid,black\\
  dashed,black\\
}

\pgfplotscreateplotcyclelist{result cycle mark}{%
  thick,solid,black,mark=oplus*,mark options={solid,scale=.5}\\
  thick,dashed,black,mark=oplus*,mark options={solid,scale=.5}\\
  solid,black,mark=oplus*,mark options={solid,scale=.5}\\
  dashed,black,mark=oplus*,mark options={solid,scale=.5}\\
}

\pgfplotscreateplotcyclelist{result cycle mark loops}{%
  thick,solid,black,mark=oplus*,mark options={solid,scale=.5}\\
  thin,color=gray!30!white \\
  color=gray!30!white \\
  thick,dashed,black,mark=oplus*,mark options={solid,scale=.5}\\
  solid,black,mark=oplus*,mark options={solid,scale=.5}\\
  dashed,black,mark=oplus*,mark options={solid,scale=.5}\\
}

\pgfplotsset{
  scale=0.5,
  every axis/.append style={
    x tick label style={
      /pgf/number format/.cd,
      fixed,
      fixed zerofill,
      precision=2,
      /tikz/.cd,
    },
    cycle list name=result cycle mark,
    line legend,
    legend to name=reallegend,
    legend columns=-1,
    legend entries= {
      $\hatval_\alpha\quad$,
      $\hatval_1^0\quad$,
      $\HCOP\quad$,
      $\HCCL$,
    },
    ylabel shift=-3pt,
  },
}

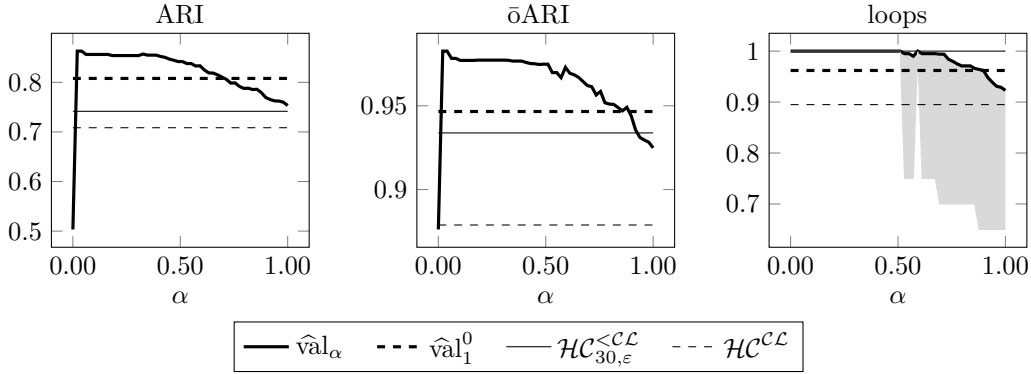
\begin{figure}[htpb]
  \begin{center}
    %
%
\begin{tabular}{ccc}
  $\phantom{AR}\ari$ & $\phantom{AR}\oari$ & $\phantom{AR}\loops$ \\
  \begin{tikzpicture}
    \begin{axis} [
        xlabel={$\alpha$},
        cycle list name=result cycle nomark,
      ]
      \addplot+ coordinates { (0.000000e+00,5.033545e-01) (2.041000e-02,8.629635e-01) (4.082000e-02,8.629635e-01) (6.122000e-02,8.562460e-01) (8.163000e-02,8.562460e-01) (1.020400e-01,8.562460e-01) (1.224500e-01,8.562460e-01) (1.428600e-01,8.562460e-01) (1.632700e-01,8.562460e-01) (1.836700e-01,8.540068e-01) (2.040800e-01,8.540068e-01) (2.244900e-01,8.540068e-01) (2.449000e-01,8.540068e-01) (2.653100e-01,8.540068e-01) (2.857100e-01,8.540068e-01) (3.061200e-01,8.540068e-01) (3.265300e-01,8.571416e-01) (3.469400e-01,8.549025e-01) (3.673500e-01,8.549025e-01) (3.877600e-01,8.549025e-01) (4.081600e-01,8.522155e-01) (4.285700e-01,8.504242e-01) (4.489800e-01,8.468415e-01) (4.693900e-01,8.446023e-01) (4.898000e-01,8.419154e-01) (5.102000e-01,8.419154e-01) (5.306100e-01,8.378849e-01) (5.510200e-01,8.378849e-01) (5.714300e-01,8.329587e-01) (5.918400e-01,8.334066e-01) (6.122400e-01,8.248977e-01) (6.326500e-01,8.190759e-01) (6.530600e-01,8.186281e-01) (6.734700e-01,8.150454e-01) (6.938800e-01,8.105671e-01) (7.142900e-01,8.060888e-01) (7.346900e-01,7.984757e-01) (7.551000e-01,7.989235e-01) (7.755100e-01,7.917582e-01) (7.959200e-01,7.877277e-01) (8.163300e-01,7.881755e-01) (8.367300e-01,7.854885e-01) (8.571400e-01,7.859364e-01) (8.775500e-01,7.792189e-01) (8.979600e-01,7.693666e-01) (9.183700e-01,7.653361e-01) (9.387800e-01,7.626491e-01) (9.591800e-01,7.622013e-01) (9.795900e-01,7.599621e-01) (1.000000e+00,7.532447e-01) };
      \addplot+ coordinates { (0.000000e+00,8.078801e-01) (1.000000e+00,8.078801e-01) };
      \addplot+ coordinates { (0.000000e+00,7.416010e-01) (1.000000e+00,7.416010e-01) };
      \addplot+ coordinates { (0.000000e+00,7.084615e-01) (1.000000e+00,7.084615e-01) };
    \end{axis}
  \end{tikzpicture}
  &
  \begin{tikzpicture}
    \begin{axis} [
        xlabel={$\alpha$},
        cycle list name=result cycle nomark,
      ]
      \addplot+ coordinates { (0.000000e+00,8.762099e-01) (2.041000e-02,9.826074e-01) (4.082000e-02,9.826074e-01) (6.122000e-02,9.782741e-01) (8.163000e-02,9.782741e-01) (1.020400e-01,9.770573e-01) (1.224500e-01,9.770573e-01) (1.428600e-01,9.770573e-01) (1.632700e-01,9.770573e-01) (1.836700e-01,9.772322e-01) (2.040800e-01,9.772322e-01) (2.244900e-01,9.772322e-01) (2.449000e-01,9.772322e-01) (2.653100e-01,9.772322e-01) (2.857100e-01,9.772322e-01) (3.061200e-01,9.772322e-01) (3.265300e-01,9.771031e-01) (3.469400e-01,9.766776e-01) (3.673500e-01,9.766776e-01) (3.877600e-01,9.766776e-01) (4.081600e-01,9.758508e-01) (4.285700e-01,9.752144e-01) (4.489800e-01,9.750408e-01) (4.693900e-01,9.746265e-01) (4.898000e-01,9.748044e-01) (5.102000e-01,9.748044e-01) (5.306100e-01,9.698044e-01) (5.510200e-01,9.698044e-01) (5.714300e-01,9.668055e-01) (5.918400e-01,9.729239e-01) (6.122400e-01,9.693980e-01) (6.326500e-01,9.684679e-01) (6.530600e-01,9.669146e-01) (6.734700e-01,9.644952e-01) (6.938800e-01,9.620260e-01) (7.142900e-01,9.613691e-01) (7.346900e-01,9.564460e-01) (7.551000e-01,9.585283e-01) (7.755100e-01,9.517947e-01) (7.959200e-01,9.510199e-01) (8.163300e-01,9.508279e-01) (8.367300e-01,9.489302e-01) (8.571400e-01,9.469738e-01) (8.775500e-01,9.490708e-01) (8.979600e-01,9.439734e-01) (9.183700e-01,9.355365e-01) (9.387800e-01,9.310626e-01) (9.591800e-01,9.295214e-01) (9.795900e-01,9.282612e-01) (1.000000e+00,9.248609e-01) };
      \addplot+ coordinates { (0.000000e+00,9.465722e-01) (1.000000e+00,9.465722e-01) };
      \addplot+ coordinates { (0.000000e+00,9.337947e-01) (1.000000e+00,9.337947e-01) };
      \addplot+ coordinates { (0.000000e+00,8.788597e-01) (1.000000e+00,8.788597e-01) };
    \end{axis}
  \end{tikzpicture}
  &
  \begin{tikzpicture}
    \begin{axis} [
        xlabel={$\alpha$},
        cycle list name=result cycle nomark loops,
        legend to name=hidelegend,
      ]
      \addplot+ [name path=A] coordinates { (0.000000e+00,1.000000e+00) (2.041000e-02,1.000000e+00) (4.082000e-02,1.000000e+00) (6.122000e-02,1.000000e+00) (8.163000e-02,1.000000e+00) (1.020400e-01,1.000000e+00) (1.224500e-01,1.000000e+00) (1.428600e-01,1.000000e+00) (1.632700e-01,1.000000e+00) (1.836700e-01,1.000000e+00) (2.040800e-01,1.000000e+00) (2.244900e-01,1.000000e+00) (2.449000e-01,1.000000e+00) (2.653100e-01,1.000000e+00) (2.857100e-01,1.000000e+00) (3.061200e-01,1.000000e+00) (3.265300e-01,1.000000e+00) (3.469400e-01,1.000000e+00) (3.673500e-01,1.000000e+00) (3.877600e-01,1.000000e+00) (4.081600e-01,1.000000e+00) (4.285700e-01,1.000000e+00) (4.489800e-01,1.000000e+00) (4.693900e-01,1.000000e+00) (4.898000e-01,1.000000e+00) (5.102000e-01,1.000000e+00) (5.306100e-01,9.950000e-01) (5.510200e-01,9.950000e-01) (5.714300e-01,9.900000e-01) (5.918400e-01,1.000000e+00) (6.122400e-01,9.950000e-01) (6.326500e-01,9.950000e-01) (6.530600e-01,9.950000e-01) (6.734700e-01,9.950000e-01) (6.938800e-01,9.940000e-01) (7.142900e-01,9.940000e-01) (7.346900e-01,9.840000e-01) (7.551000e-01,9.800000e-01) (7.755100e-01,9.750000e-01) (7.959200e-01,9.710000e-01) (8.163300e-01,9.710000e-01) (8.367300e-01,9.710000e-01) (8.571400e-01,9.660000e-01) (8.775500e-01,9.640000e-01) (8.979600e-01,9.620000e-01) (9.183700e-01,9.480000e-01) (9.387800e-01,9.390000e-01) (9.591800e-01,9.310000e-01) (9.795900e-01,9.290000e-01) (1.000000e+00,9.230000e-01) };
      \addplot+ [name path=B]coordinates { (0.000000e+00,1.000000e+00) (2.041000e-02,1.000000e+00) (4.082000e-02,1.000000e+00) (6.122000e-02,1.000000e+00) (8.163000e-02,1.000000e+00) (1.020400e-01,1.000000e+00) (1.224500e-01,1.000000e+00) (1.428600e-01,1.000000e+00) (1.632700e-01,1.000000e+00) (1.836700e-01,1.000000e+00) (2.040800e-01,1.000000e+00) (2.244900e-01,1.000000e+00) (2.449000e-01,1.000000e+00) (2.653100e-01,1.000000e+00) (2.857100e-01,1.000000e+00) (3.061200e-01,1.000000e+00) (3.265300e-01,1.000000e+00) (3.469400e-01,1.000000e+00) (3.673500e-01,1.000000e+00) (3.877600e-01,1.000000e+00) (4.081600e-01,1.000000e+00) (4.285700e-01,1.000000e+00) (4.489800e-01,1.000000e+00) (4.693900e-01,1.000000e+00) (4.898000e-01,1.000000e+00) (5.102000e-01,1.000000e+00) (5.306100e-01,7.500000e-01) (5.510200e-01,7.500000e-01) (5.714300e-01,7.500000e-01) (5.918400e-01,1.000000e+00) (6.122400e-01,7.500000e-01) (6.326500e-01,7.500000e-01) (6.530600e-01,7.500000e-01) (6.734700e-01,7.500000e-01) (6.938800e-01,7.000000e-01) (7.142900e-01,7.000000e-01) (7.346900e-01,7.000000e-01) (7.551000e-01,7.000000e-01) (7.755100e-01,7.000000e-01) (7.959200e-01,7.000000e-01) (8.163300e-01,7.000000e-01) (8.367300e-01,7.000000e-01) (8.571400e-01,7.000000e-01) (8.775500e-01,6.500000e-01) (8.979600e-01,6.500000e-01) (9.183700e-01,6.500000e-01) (9.387800e-01,6.500000e-01) (9.591800e-01,6.500000e-01) (9.795900e-01,6.500000e-01) (1.000000e+00,6.500000e-01) };
      \addplot+ fill between[of=A and B];
      \addplot+ coordinates { (0.000000e+00,9.620000e-01) (1.000000e+00,9.620000e-01) };
      \addplot+ coordinates { (0.000000e+00,1.000000e+00) (1.000000e+00,1.000000e+00) };
      \addplot+ coordinates { (0.000000e+00,8.950000e-01) (1.000000e+00,8.950000e-01) };
    \end{axis}
  \end{tikzpicture}
  \\
  \multicolumn{3}{c}{\begin{NoHyper}\ref{reallegend}\end{NoHyper}}
\end{tabular}    
    \caption{The mean performance of $\widehat{\val}_\alpha$ for different values of
      $\alpha$, plotted together with the mean values of the other methods. Notice that the other methods
      are constant with respect to~$\alpha$, explaining the horizontal plot lines.
      The bottom of the shaded region indicates the minimum observed value of the $\loops$ measure
      for $\hatval_\alpha$.}
    \label{fig:alphas}
  \end{center}
\end{figure}

Figure~\ref{fig:alphas} shows the effect of varying $\alpha$ through $[0,1]$.
Recall that for $\alpha=0$, it is only the order relation that determines the tree.
The abrupt change for arbitrary small $\alpha>0$ is due to tie resolution;
in the experiment, the order relation is binary, and all relations have the same weight.
This may in some cases lead to several equally valued, optimal binary trees. As soon as we
introduce the similarity, the ties are broken, and the number of viable solutions drops.
Additionally, since the similarity contains information identifying good trees, the average quality
increases. But, as $\alpha \to 1$, the information provided by the order relation is gradually phased out,
and the quality of the clustering decreases until it reaches the value on the right end of the plot.

Notice that for $\alpha=1$, the $\ari$, $\oari$ and $\loops$ values are those that would be obtained for
the Dasgupta objective function, since $\hatval_1 = \hatval_{s_d}$. From the plot, we see that
for $\ari$, the Dasgupta objective function outperforms $\HCOP$ with a small margin, even though
$\hatval_{s_d}$ does not take the order relation into account.

It is also interesting to notice that $\hatval_1^0$, which neither uses the order
relation, outperforms $\HCOP$ with significant margin with respect to $\ari$;
an observation that is consistent through
all the experiments we have done. We choose to conclude that these observations merely underline the
significance of the Dasgupta objective function.

\medskip

The $\loops$ measure deserves a separate comment. First, we note that $\HCOP$ has zero loops, which
is guaranteed by that method. Second, from Theorem~\ref{thm:partial-orders-optimal-trees}, we know
that $\val_{\alpha=0}$, has no loops for an optimal tree. But what we see from the plots is that even the
approximation $\hatval_\alpha$ returns nothing but loop-free trees for $\alpha$ significantly larger than
zero. This is a very promising observation, indicating that $\hatval_\alpha$ can be used for applications
were loops are unacceptable, such as classic use cases requiring acyclic partitioning of graphs. We note
that further research is required to establish for which $\alpha > 0$ and under which assumptions on the
order relation and the similarity such a guarantee can be set forth.

\medskip

It should also be mentioned that for all the experiments, the best results were obtained
for values of $\alpha$ close to zero. An open question is to whether this would also be the case for a
weighted partial order without uniform weights.

\subsubsection{Varying the variance}

\begin{figure}[htpb]
  \begin{center}
    %
%
\begin{tabular}{ccc}
  $\phantom{AR}\ari$ & $\phantom{AR}\oari$ & $\phantom{AR}\loops$ \\
  \begin{tikzpicture}
    \begin{axis} [
        xlabel={$\sigma^2$},
      ]
      \addplot+ coordinates { (5.000000e-02,9.117771e-01) (1.000000e-01,8.634113e-01) (1.500000e-01,8.629635e-01) (2.000000e-01,8.231064e-01) (3.000000e-01,7.295096e-01) (4.000000e-01,6.995048e-01) };
      \addplot+ coordinates { (5.000000e-02,8.974465e-01) (1.000000e-01,8.311674e-01) (1.500000e-01,8.078801e-01) (2.000000e-01,7.313009e-01) (3.000000e-01,5.987427e-01) (4.000000e-01,5.073850e-01) };
      \addplot+ coordinates { (5.000000e-02,8.804289e-01) (1.000000e-01,7.899669e-01) (1.500000e-01,7.416010e-01) (2.000000e-01,6.789046e-01) (3.000000e-01,5.481377e-01) (4.000000e-01,4.500625e-01) };
      \addplot+ coordinates { (5.000000e-02,8.311674e-01) (1.000000e-01,7.635448e-01) (1.500000e-01,7.084615e-01) (2.000000e-01,6.417345e-01) (3.000000e-01,5.172373e-01) (4.000000e-01,4.137881e-01) };
    \end{axis}
  \end{tikzpicture}
  &
  \begin{tikzpicture}
    \begin{axis} [
        xlabel={$\sigma^2$},
      ]
      \addplot+ coordinates { (5.000000e-02,9.917436e-01) (1.000000e-01,9.842895e-01) (1.500000e-01,9.826074e-01) (2.000000e-01,9.700360e-01) (3.000000e-01,9.614604e-01) (4.000000e-01,9.256976e-01) };
      \addplot+ coordinates { (5.000000e-02,9.739914e-01) (1.000000e-01,9.634359e-01) (1.500000e-01,9.465722e-01) (2.000000e-01,9.064737e-01) (3.000000e-01,8.455864e-01) (4.000000e-01,8.020938e-01) };
      \addplot+ coordinates { (5.000000e-02,9.758870e-01) (1.000000e-01,9.418762e-01) (1.500000e-01,9.337947e-01) (2.000000e-01,8.988539e-01) (3.000000e-01,8.225362e-01) (4.000000e-01,7.603052e-01) };
      \addplot+ coordinates { (5.000000e-02,9.223247e-01) (1.000000e-01,9.099199e-01) (1.500000e-01,8.788597e-01) (2.000000e-01,8.440934e-01) (3.000000e-01,7.904240e-01) (4.000000e-01,7.096999e-01) };
    \end{axis}
  \end{tikzpicture}
  &
  \begin{tikzpicture}
    \begin{axis} [
        xlabel={$\sigma^2$},
        cycle list name=result cycle mark loops,
        legend to name=hidelegend,
      ]
      \addplot+ [name path=A] coordinates { (5.000000e-02,1.000000e+00) (1.000000e-01,1.000000e+00) (1.500000e-01,1.000000e+00) (2.000000e-01,1.000000e+00) (3.000000e-01,1.000000e+00) (4.000000e-01,1.000000e+00) };
      \addplot+ [name path=B]coordinates { (5.000000e-02,1.000000e+00) (1.000000e-01,1.000000e+00) (1.500000e-01,1.000000e+00) (2.000000e-01,1.000000e+00) (3.000000e-01,1.000000e+00) (4.000000e-01,1.000000e+00) };
      \addplot+ fill between[of=B and B];
      \addplot+ coordinates { (5.000000e-02,9.810000e-01) (1.000000e-01,9.920000e-01) (1.500000e-01,9.620000e-01) (2.000000e-01,9.420000e-01) (3.000000e-01,9.280000e-01) (4.000000e-01,9.240000e-01) };
      \addplot+ coordinates { (5.000000e-02,1.000000e+00) (1.000000e-01,1.000000e+00) (1.500000e-01,1.000000e+00) (2.000000e-01,1.000000e+00) (3.000000e-01,1.000000e+00) (4.000000e-01,1.000000e+00) };
      \addplot+ coordinates { (5.000000e-02,9.220000e-01) (1.000000e-01,9.570000e-01) (1.500000e-01,8.950000e-01) (2.000000e-01,8.840000e-01) (3.000000e-01,9.160000e-01) (4.000000e-01,9.580000e-01) };
    \end{axis}
  \end{tikzpicture}
  \\
  \multicolumn{3}{c}{\begin{NoHyper}\ref{reallegend}\end{NoHyper}}      
\end{tabular}    
    \caption{The mean performances of the different methods for varying $\sigma^2$.
      The markers indicate for which values of $\sigma^2$ the experiments were conducted, and the other
      parameters are according to the default values of Table~\ref{table:params}.
      For $\hatval_\alpha$, each point in the plot
      corresponds to the optimal value over all $\alpha \in [0,1]$. Notice that for the $\loops$-plots,
      the plot for $\HCOP$ coincides with the plot of $\hatval_\alpha$. All trees returned by $\hatval_\alpha$
      were loop-free for the chosen $\alpha$, hence, compared to Figure~\ref{fig:alphas},
      there is no visible gray region in the plot.}
    \label{fig:scale}
  \end{center}
\end{figure}
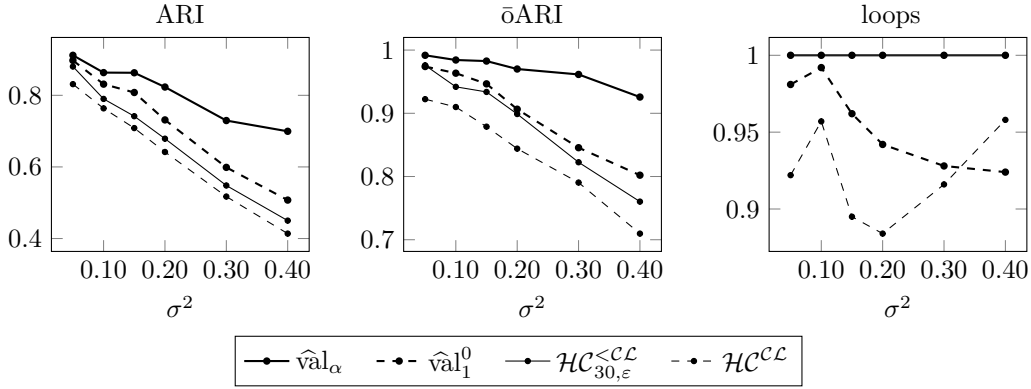

Figure~\ref{fig:scale} shows the effect of varying $\sigma^2$ through $[0.05,0.4]$; that is, the
variance of the noise added to the copy-paste similarities in~\eqref{eqn:s_mod}.
While each point in the plot of $\hatval_\alpha$
corresponds to the optimal value over all $\alpha \in [0,1]$, from what we see across the experiments,
more or less equally good results would be obtained by simply fixing $\alpha=0.1$ or some other low value (see
the comment at the end of Section~\ref{section:varying-alpha}).

As can be expected, all methods degrade in quality of the result when the variance in the similarities
increases, but $\hatval_\alpha$ is clearly less affected by this than the other methods.
Notice in particular that $\hatval_\alpha$ seems to be less affected by increase in variance
than $\hatval_1^0$, although they are based on very much the same optimisation principles.
At first, one may think that this is because the order relation is not modified by the noise, and this
information is available to $\hatval_\alpha$, but not to $\hatval_1^0$. However, the order relation is
utilised by $\HCOP$, and this method degrades just as severely as $\hatval_1^0$ when the variance
increases. One possible explanation is that it is the combined objective function, utilising both the
Dasgupta objective and the antisymmetrisation of the order relation~\eqref{eqn:g}
that causes $\hatval_\alpha$ to outperform the other methods.

Notice also that $\hatval_\alpha$ performs perfectly with respect to $\loops$,
regardless of the increasing noise variance.

\subsubsection{Varying translation magnitude}
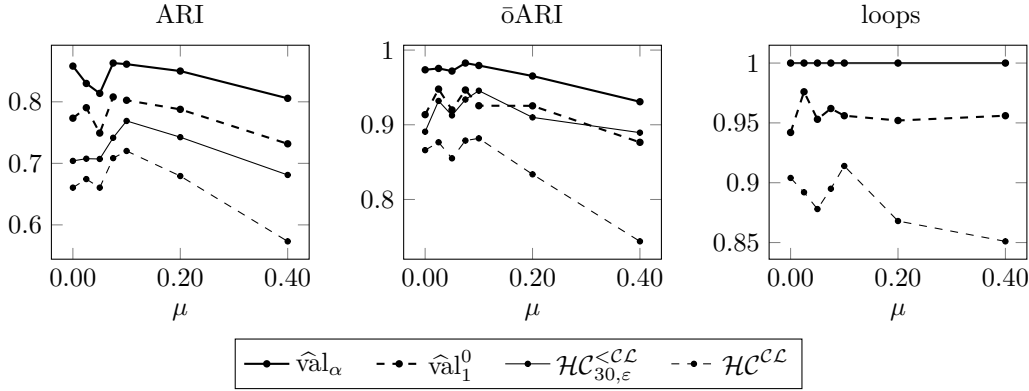
\begin{figure}[htpb]
  \begin{center}
    %
%
\begin{tabular}{ccc}
  $\phantom{AR}\ari$ & $\phantom{AR}\oari$ & $\phantom{AR}\loops$ \\
  \begin{tikzpicture}
    \begin{axis} [
        xlabel={$\mu$},
      ]
      \addplot+ coordinates { (0.000000e+00,8.580373e-01) (2.500000e-02,8.298239e-01) (5.000000e-02,8.137020e-01) (7.500000e-02,8.629635e-01) (1.000000e-01,8.611721e-01) (2.000000e-01,8.499763e-01) (4.000000e-01,8.056410e-01) };
      \addplot+ coordinates { (0.000000e+00,7.733971e-01) (2.500000e-02,7.904147e-01) (5.000000e-02,7.492142e-01) (7.500000e-02,8.078801e-01) (1.000000e-01,8.025062e-01) (2.000000e-01,7.877277e-01) (4.000000e-01,7.317487e-01) };
      \addplot+ coordinates { (0.000000e+00,7.039831e-01) (2.500000e-02,7.075658e-01) (5.000000e-02,7.071180e-01) (7.500000e-02,7.416010e-01) (1.000000e-01,7.689188e-01) (2.000000e-01,7.424967e-01) (4.000000e-01,6.811437e-01) };
      \addplot+ coordinates { (0.000000e+00,6.605435e-01) (2.500000e-02,6.744262e-01) (5.000000e-02,6.605435e-01) (7.500000e-02,7.084615e-01) (1.000000e-01,7.201051e-01) (2.000000e-01,6.793524e-01) (4.000000e-01,5.732162e-01) };
    \end{axis}
  \end{tikzpicture}
  &
  \begin{tikzpicture}
    \begin{axis} [
        xlabel={$\mu$},
      ]
      \addplot+ coordinates { (0.000000e+00,9.735965e-01) (2.500000e-02,9.754246e-01) (5.000000e-02,9.718947e-01) (7.500000e-02,9.826074e-01) (1.000000e-01,9.793329e-01) (2.000000e-01,9.652734e-01) (4.000000e-01,9.308359e-01) };
      \addplot+ coordinates { (0.000000e+00,9.135649e-01) (2.500000e-02,9.477701e-01) (5.000000e-02,9.198940e-01) (7.500000e-02,9.465722e-01) (1.000000e-01,9.255170e-01) (2.000000e-01,9.253516e-01) (4.000000e-01,8.765070e-01) };
      \addplot+ coordinates { (0.000000e+00,8.907460e-01) (2.500000e-02,9.319476e-01) (5.000000e-02,9.125255e-01) (7.500000e-02,9.337947e-01) (1.000000e-01,9.456596e-01) (2.000000e-01,9.098855e-01) (4.000000e-01,8.894620e-01) };
      \addplot+ coordinates { (0.000000e+00,8.660094e-01) (2.500000e-02,8.765598e-01) (5.000000e-02,8.550409e-01) (7.500000e-02,8.788597e-01) (1.000000e-01,8.818008e-01) (2.000000e-01,8.336466e-01) (4.000000e-01,7.439141e-01) };
    \end{axis}
  \end{tikzpicture}
  &
  \begin{tikzpicture}
    \begin{axis} [
        xlabel={$\mu$},
        cycle list name=result cycle mark loops,
        legend to name=hidelegend,
      ]
      \addplot+ [name path=A] coordinates { (0.000000e+00,1.000000e+00) (2.500000e-02,1.000000e+00) (5.000000e-02,1.000000e+00) (7.500000e-02,1.000000e+00) (1.000000e-01,1.000000e+00) (2.000000e-01,1.000000e+00) (4.000000e-01,1.000000e+00) };
      \addplot+ [name path=B]coordinates { (0.000000e+00,1.000000e+00) (2.500000e-02,1.000000e+00) (5.000000e-02,1.000000e+00) (7.500000e-02,1.000000e+00) (1.000000e-01,1.000000e+00) (2.000000e-01,1.000000e+00) (4.000000e-01,1.000000e+00) };
      \addplot+ fill between[of=B and B];
      \addplot+ coordinates { (0.000000e+00,9.420000e-01) (2.500000e-02,9.760000e-01) (5.000000e-02,9.530000e-01) (7.500000e-02,9.620000e-01) (1.000000e-01,9.560000e-01) (2.000000e-01,9.520000e-01) (4.000000e-01,9.560000e-01) };
      \addplot+ coordinates { (0.000000e+00,1.000000e+00) (2.500000e-02,1.000000e+00) (5.000000e-02,1.000000e+00) (7.500000e-02,1.000000e+00) (1.000000e-01,1.000000e+00) (2.000000e-01,1.000000e+00) (4.000000e-01,1.000000e+00) };
      \addplot+ coordinates { (0.000000e+00,9.040000e-01) (2.500000e-02,8.920000e-01) (5.000000e-02,8.780000e-01) (7.500000e-02,8.950000e-01) (1.000000e-01,9.140000e-01) (2.000000e-01,8.680000e-01) (4.000000e-01,8.510000e-01) };
    \end{axis}
  \end{tikzpicture}
  \\
  \multicolumn{3}{c}{\begin{NoHyper}\ref{reallegend}\end{NoHyper}}
\end{tabular}    
    \caption{The mean performances of the different methods for varying $\mu$.
      The markers indicate for which values of $\mu$ the experiments were conducted, and the other
      parameters are according to the default values of Table~\ref{table:params}.
      For $\hatval_\alpha$, each point in the plot
      corresponds to the optimal value over all $\alpha \in [0,1]$. Notice that for the $\loops$-plots,
      the plot for $\HCOP$ coincides with the plot of $\hatval_\alpha$. All trees returned by $\hatval_\alpha$
      were loop-free for the chosen $\alpha$, hence, compared to Figure~\ref{fig:alphas},
      there is no visible gray region in the plot.}
    \label{fig:location}
  \end{center}
\end{figure}

Figure~\ref{fig:location} shows the effect of varying $\mu$ through $[0,0.4]$; that is, the
location of the noise added to copy-paste similarities.
As before, we present the result for the optimal $\alpha$ for $\hatval_\alpha$.

We see that all methods, $\hatval_\alpha$ included, are equally affected by increasing translation of
similarities, but that $\hatval_\alpha$ performs consistently better than all the other methods throughout
all the experiments.

Also in this case, given a tree $T$ approximated by $\hatval_\alpha$, the flat clustering with the
best $\ari$ is loop-free in every experiment.

\section{Summary and future work} \label{section:summary}

We have defined the concept of order preserving hierarchical clustering and classified
the  binary trees over a set that correspond to order preserving hierarchical clusterings.
We have introduced the concept of a \emph{relaxed order relation}, where the relations are
weighted over $[0,1]$, presented an objective function for hierarchical clustering of relaxed orders,
and proven several beneficial properties of this function. In particular, whenever the relaxed order relation
is a uniformly weighted partial order relation, an optimal tree is always order preserving.
We have discussed the method in terms of bi-objective optimisation, where the order relation and
the similarity are convexly combined, and provided an efficient basis on which to explore the Pareto front.
We have provided a polynomial time approximation algorithm for the model, with a proven relative
performance guarantee of at least \bound, and, finally, we have demonstrated the method on the
machine parts dataset, showing that our method outperforms existing methods with significant margin.

\smallskip

The following future research topics have been mentioned in the paper:
\begin{itemize}
\item Theorem~\ref{thm:splitting-power} gives a probability bound for order preservation quality
  for $\val_g$, and \citep[Theorem $9$]{Dasgupta2016} gives a probability bound for clustering
  quality for $\val_{s_d}$. We suggest to establish a probability bound for
  the combined order preservation- and clustering quality for $\val_\alpha$.
\item According to Theorem~\ref{thm:partial-orders-optimal-trees}, if the order relation is a partial order,
  then the optimal trees for $\val_g$ are order preserving.
  It follows that the same result holds for $\val_\alpha$ when $\alpha=0$.
  However, from the demonstration in Section~\ref{section:varying-alpha}, we see that the
  approximation $\hatval_\alpha$ provides trees that are order preserving for $\alpha$
  significantly larger than $0$.
  For applications, a result that that can provide an order preservation guarantee for non-zero $\alpha$
  would be of great value. It would allow the optimisation to take both the order relation and the
  similarity into account, and at the same time guarantee order preservation.
\item The approximation bound \bound{} of $\hatval_\alpha$ (Theorem~\ref{thm:approx-bound}) can most likely
  be improved, in the same manner that has been achieved for the Dasgupta objective function
  approximation \citep{CharikarChatziafratis2017b,CohenAddadEtAl2019,RoyPokutta2017}.
\end{itemize}

Additionally, at the start of Section~\ref{section:demo}, we mentioned the lack of available implementations of
approximations of \DirSparsestCut. When trying to locate openly available implementations of such
algorithms, we found none. A complicating factor is that the theory behind the
\DirSparsestCut{} approximations is far from trivial, so implementing one's own is not generally
achievable unless already an expert in semidefinite programming. And finally, there are
no known non-trivial approximations of \DirSparsestCut{} that provide useful cuts.
We therefore suggest a call for action within operations research to make available realisations of the
existing approximation algorithms for \DirSparsestCut, at least for the scientific community, and for
not-unreasonably-small datasets.


\begin{acknowledgement}
  I would like to express gratitude for funding to the DataScience@UiO innovation cluster
(The Faculty of Mathematics and Natural Sciences, University of Oslo),
the Department of Informatics (The Faculty of Mathematics and Natural Sciences, University of Oslo),
and the SIRIUS Centre for Scalable Data Access (Research Council of Norway, project no.: 237898).
I would also like to thank Henrik Forssell, Department of Informatics (University of Oslo),
and Gudmund Hermansen, Department of Mathematics (University of Oslo), for invaluable comments,
questions and discussions leading up to this work.

\end{acknowledgement}
 
\bibliography{ms}

\end{document}